\theoremstyle{plain}
\newtheorem{theorem}{Theorem}%
\newtheorem{proposition}[theorem]{Proposition}
\newtheorem{lemma}[theorem]{Lemma}
\newtheorem{corollary}[theorem]{Corollary}
\theoremstyle{definition}
\newtheorem{definition}[theorem]{Definition}
\newtheorem{assumption}{Assumption}
\theoremstyle{remark}
\def\shownotes{1}  %
\newcommand{\authnote}[2]{{\scriptsize $\ll$\textsf{#1 notes: #2}$\gg$}}
\newcommand{\authnote}[2]{}
\title{Offline Learning in Markov Games \\ with General Function Approximation}
\author{Yuheng Zhang\thanks{Department of Computer Science, University of Illinois Urbana-Champaign. Email:  \texttt{yuhengz2@illinois.edu, nanjiang@illinois.edu}.} \and Yu Bai\thanks{Salesforce Research. Email: \texttt{yu.bai@salesforce.com}} \and Nan Jiang\footnotemark[1]}
\renewcommand{\cite}{\citep}
\begin{document}
\maketitle

\begin{abstract}
We study offline multi-agent reinforcement learning (RL) in Markov games, where the goal is to learn an approximate equilibrium---such as Nash equilibrium and (Coarse) Correlated Equilibrium---from an offline dataset pre-collected from the game. 
Existing works consider relatively restricted tabular or linear models and handle each equilibria separately. In this work, we provide the first framework for sample-efficient offline learning in Markov games under general function approximation, handling all 3 equilibria in a unified manner. By using Bellman-consistent pessimism, we obtain interval estimation for policies' returns, and use both the upper and the lower bounds to obtain a relaxation on the gap of a candidate policy, which becomes our optimization objective. 
Our results generalize prior works and provide several additional insights. 
Importantly, we require a data coverage condition that improves over the recently proposed ``unilateral concentrability''. Our condition allows selective coverage of deviation policies that optimally trade-off between their greediness (as approximate best responses) and coverage, and we show scenarios where this leads to significantly better guarantees. 
As a new connection, we also show how our algorithmic framework can subsume seemingly different solution concepts designed for the special case of two-player zero-sum games. %
\end{abstract}

\section{Introduction}

Offline RL aims to learn a good policy from a pre-collected historical dataset. It has emerged as an important paradigm for bringing RL to real-life scenarios due to its non-interative nature, especially in applications where deploying adaptive algorithms in the real system is financially costly and/or ethically problematic~\citep{levine2020offline}.  %
While offline RL has been extensively studied in the single-agent setting, many real-world applications involve the strategic interactions between multiple agents. This renders the necessity of bringing in game-theoretic reasoning, often modeled using \textit{Markov games} \cite{shapley1953stochastic} in the RL theory literature. Markov games can be viewed as the multi-agent extension of Markov Decision Processes (MDPs), where agents share the same state information and the dynamics is determined by the joint action of all agents. 

While online RL in Markov games has seen significant developments in recent years \cite{bai2020provable,liu2021sharp,song2021can,jin2021v}, offline learning in Markov games has only started to attract attention from the community. Earlier works \cite{cui2022provably,zhong2022pessimistic} focus on tabular cases or linear function approximation, which cannot handle complex environments that require advanced function-approximation techniques. Although there has been a rich literature on single-agent RL with general function approximation \cite{jiang2017contextual,jin2021bellman,wang2020provably,huang2021going}, whether and how they can be extended to offline Markov games remains largely unclear. In addition, the learning goal in Markov games is no longer return optimization, but instead finding an \emph{equilibrium}. However, there are multiple popular notions of equilibria, and prior results for the offline setting mainly focus on one of them (Nash) \cite{cui2022offline,cui2022provably,zhong2022pessimistic}. 
These considerations motivate us to study the following question:

\begin{center}
\emph{Can we design sample-efficient algorithms for offline Markov games with general function approximation, and handle different equilibria in a unified framework?}
\end{center}

\paragraph{Unified framework}  In this paper, we provide information-theoretic results that answer the question in the positive. %
We first express the equilibrium gap---the objective we wish to minimize---in a unified manner for 3 popular notions of equilibria: Nash Equilibrium (NE), Correlated Equilibrium (CE), and Coarse Correlated Equilibrium (CCE) (Section~\ref{sec:eq}). Then, we build on top of the Bellman-consistent pessimism framework from single-agent offline RL \citep{xie2021bellman}, which allows us to construct confidence sets for policy evaluation and obtain the confidence intervals of policies' returns. %
An important difference is that \citet{xie2021bellman} only needs \emph{pessimistic} evaluations in the single-agent case; in contrast,  we need both \emph{optimistic} and \emph{pessimistic} evaluations to further compute a surrogate upper bound on the \emph{equilibrium gap} of each candidate policy, which provably leads to strong offline learning guarantees (Section~\ref{sec:multi_player}).  %

\paragraph{New insights on data conditions} Our algorithm and analyses also shed new light on the offline learnability of Markov games. In single-agent offline RL, it is understood that a good policy can be learned as long as the data covers one, and this condition is generally known as ``single-policy concentrability/coverage'' \cite{jin2021pessimism,zhan2022offline}. In contrast, in Markov games, data covering an equilibrium is intuitively insufficient, as a  fundamental aspect of equilibrium is reasoning about what would happen if other agents were to \emph{deviate}. To address this discrepancy,  a notion of ``unilateral concentrability'' is proposed as a sufficient data condition for offline Markov games \cite{cui2022offline} (see also~\citet{zhong2022pessimistic}), which asserts that the equilibrium as well as its all unilateral deviations are covered. %
While this is sufficient and in the worst-case necessary, it remains unclear whether less stringent conditions may also suffice.  
Our work %
relaxes the assumption and provide more flexible guarantees. Instead of depending on the worst-case estimation error of all unilateral deviation policies, our error bound exhibits the trade-off between a policy coverage error term and a policy suboptimality term. It automatically adapts to the optimal trade-off, %
and we show scenarios in \cref{app:discussion} where the bound significantly improves over unilateral coverage results \cite{cui2022provably}.

\paragraph{V-type variant} Our main algorithm estimates the policies' Q-functions, which takes all agents' actions as inputs. When specialized to the tabular setting, this would incur an exponentially dependence on the number of agents. While this can be avoided by using strong function approximation to generalize over the joint action space \cite{zhong2022pessimistic}, it prevents us from reproducing and subsuming the prior works \cite{cui2022offline,cui2022provably}. To address this issue, we propose a V-type variant of our algorithm, which estimates state-value functions instead and uses importance sampling to correct for action mismatches. It naturally avoids the exponential dependence, and reproduces the rate (up to minor differences) of \cite{cui2022provably} whose analysis is specialized to tabular settings (Section~\ref{sec:v_type}). 

\paragraph{New connection for two-player zero-sum games} As an additional discovery, we show interesting connection between our work and prior algorithmic ideas \citep{jin2022power, cui2022provably} that are specifically designed for two-player zero-sum games. While they seem very different at the first glance, we show in Appendix~\ref{app:discussion} that these ideas can be subsumed by our algorithmic framework and our analyses and guarantees extend straightforwardly. %

\subsection{Related Work}
\paragraph{Offline RL} Offline RL aims to learn a good policy from a pre-collected dataset without direct interaction with the environment. There are 
many prior works studying single-agent offline RL problem in both the tabular~\citep{yin2021nearp,yin2021nearo,yin2021towards,rashidinejad2021bridging,xie2021policy,shi2022pessimistic,li2022settling} and function approximation setting~\cite{antos2008learning,precup2000eligibility,chen2019information,xie2020q,xie2021batch,xie2021bellman,jin2021pessimism,zanette2021provable,uehara2021pessimistic,yin2022near,zhan2022offline}. Notably,~\citet{xie2021bellman} introduces the notion of Bellman-consistent pessimism and our techniques are built on it.
\paragraph{Markov games} Markov games is a widely used framework for multi-agent reinforcement learning. Online learning equilibria of Markov games has been extensively studied, including two-player zero-sum Markov games \cite{wei2017online,bai2020provable,bai2020near,liu2021sharp,dou2022gap}, and multi-player general-sum Markov games \cite{liu2021sharp,song2021can,jin2021v,mao2022provably}. Three equilibria are usually considered as the learning goal---Nash Equilibrium (NE), Correlated Equilibrium (CE) and Coarse Correlated Equilibrium (CCE). Recently, a line of works consider solving Markov games with function approximation, including linear %
\cite{xie2020learning,chen2022almost} and general function approximation \cite{huang2021towards,jin2022power}. A closely related work is~\citet{jin2022power}, where a multi-agent version of the Bellman-Eluder dimension is introduced to solve zero-sum Markov games under general function approximation. However, they focus on the online setting which is different from our offline setting.
\paragraph{Offline Markov games} Since \citet{cui2022offline}'s initial work on offline tabular zero-sum Markov games, there have been several follow-up works on offline Markov games, either for tabular zero-sum / general-sum Markov games \citep{cui2022provably,yan2022model} or linear function approximation \citep{zhong2022pessimistic,xiong2022nearly}. In this work, we study general function approximation for multi-player general-sum Markov games, which is a more general framework. Technically, we differ from these prior works in how we handle uncertainty quantification in policy evaluation, an important technical aspect of offline learning: we use initial state optimism/pessimism for policy evaluation, whereas previous works rely on pre-state pessimism with bonus terms. In addition, previous works require the so-called ``unilateral concentrability'' assumption of data coverage.\footnote{\citet{zhong2022pessimistic} proposes the notion of ``relative uncertainty, which is the linear version of ``unilateral concentrability''.} Although this assumption is unavoidable for the worst-case, our approach requires a condition that is never worse (and coincides in the worst-case) and can be significantly better on certain instances.

\section{Preliminaries}

\paragraph{Notations} We use $\Delta(\cdot)$ to denote the probability simplex. We use bold letters to denote vectors such as $\bfa$ and the $j^{\mathrm{th}}$ element of $\bfa$ is denoted by $\bfa_j$. We use $-i$ to denote all the players except player $i$. For a positive integer $m$, $[m]$ denotes the set $\{1,2,\cdots,m\}$. $\|f\|_{2,d}^2$ represents $\E_d[f^2]$ and $f(s,\pi)$ stands for $\E_{a \sim \pi(\cdot|s)}[f(s,a)]$. We use $\order(\cdot)$ to hide absolute constants and use $\widetilde{\order}(\cdot)$ to hide logarithmic factors.

\subsection{Multi-player General-sum Markov Games}
We consider multi-player general-sum Markov games in the infinite-horizon discounted setting. Such a Markov game is specified by $(\Scal,\Acal=\prod_{i \in [m]}\Acal_i,P,r,\gamma,s_0)$, where $\Scal$ is the state space with $|\Scal|=S$, $\Acal_i$ is the action space for player $i$ with $|\Acal_i|=A_i$, $\bfa \in \Acal$ is the joint action taken by all $m$ players, $P:\Scal \times \Acal \rightarrow \Delta(\Scal)$ is the transition function and $P(\cdot|s,\bfa)$ describes the probability distribution over the next state when joint action $\bfa$ is taken at state $s$, $r=\{r_i\}_{i \in [m]}$ is the collection of reward function where $r_i: \Scal \times \Acal \rightarrow [0,\rmax]$ is the deterministic reward function for player $i$, $\gamma \in [0,1)$ is the discount factor, and $s_0$ is the fixed initial state which is without loss of generality. 

\paragraph{Product and correlated policies} A Markov joint policy $\pi:\Scal \rightarrow \Delta(\Acal)$ specifies the decision-making strategies of all players and induces a trajectory $s_0,\bfa_0,\bfr_0,s_1,\bfa_1,\bfr_1, \cdots, s_t,\bfa_t, \bfr_t, \cdots$, where $\bfa_t \sim \pi(\cdot|s_t)$, $\bfr_{t,i}=r_i(s_t,\bfa_t)$, and $s_{t+1} \sim P(\cdot|s_t,\bfa_t)$. 
For a joint policy  $\pi$, $\pi_i$ is the marginalized policy of player $i$ and $\pi_{-i}$ is the marginalized policy for the remaining players. A joint policy $\pi$ is a product policy if $\pi=\pi_1 \times \pi_2 \times \cdots \pi_m$ where each player $i$ takes actions independently according to $\pi_i$. If $\pi$ is not a joint policy, sometimes  we say $\pi$ is  correlated, and the players need to depend their actions on public randomness. 

\paragraph{Value function and occupancy} For player $i$ and joint policy $\pi$, we define the value function $V^\pi_i(s) \coloneqq \E_\pi[\sum_{t=0}^{\infty} \gamma^t r_i(s_t,\bfa_t)|s_0=s]$ and the Q-function $Q^\pi_i(s,\bfa) \coloneqq \E_\pi[\sum_{t=0}^{\infty} \gamma^t r_i(s_t,\bfa_t)|s_0=s,a_0=\bfa]$, they are bounded in $[0,\Vmax]$ where $\Vmax=\rmax/(1-\gamma)$. 
For each joint policy $\pi$, the policy-specific Bellman operator of the $i^{\mathrm{th}}$ player $\Tcal^\pi_i:\R^{\Scal \times \Acal} \rightarrow \R^{\Scal \times \Acal}$ is defined as
\begin{align*}
(\Tcal^\pi_i f)(s,\bfa)=r_i(s,\bfa)+\gamma \E_{s' \sim P(\cdot|s,\bfa)}[f(s',\pi)],
\end{align*}
and $Q^\pi_i$ is the unique fixed point of $\Tcal^\pi_i$. Note that once a policy is fixed, the game-theoretic considerations are no longer relevant and the value functions are defined in familiar manners similar to the single-agent setting, with the only difference that each player $i$ has its own value function due to the player-specific reward function $r_i$. 
Similar to the single-agent case, we also consider the discounted state-action occupancy $d^\pi(s,\bfa) \in \Delta(\Scal \times \Acal)$ which is defined as $d^\pi(s,\bfa)=(1-\gamma)\E_\pi[\sum_{t=0}^{\infty} \gamma^t \mathbb{I}[s_t=s,\bfa_t=\bfa]]$.

\subsection{Offline learning of Markov games}
In the offline learning setting, we assume access to a pre-collected dataset and cannot have further interactions with the environment. The offline dataset $\Dcal$ consists of $n$ independent tuples $(s,\bfa,\bfr,s')$, which are generated as $(s,\bfa) \sim d_D$, $\bfr_i \sim r_i(s,\bfa)$ and $s' \sim P(\cdot|s,\bfa)$ with some data distribution $d_D \in \Delta(\Scal\times\Acal)$.\footnote{For non-i.i.d.~adaptive data we may use martingale concentration inequalities in our analyses. Without further mixing-type assumptions, our analyses extend if we change the $d_D$ (which is a static object) in the definitions such as \cref{eq:complete} and \cref{eq:coverage} to $\widehat d_D$, which is the empirical distribution over state-action pairs. The resulting definition of \cref{eq:coverage}, for example, corresponds to quantities like $\widehat C(\pi)$ in \citet[Definition 3]{cui2022provably} defined for the tabular setting.}

\paragraph{Policy class} In practical problems with large state spaces, the space of all possible Markov joint policies is prohibitively large and intractable to work with. To address this, we assume we have a pre-specified policy class $\Pi \subset (\Scal\to\Delta(\Acal))$, from which we seek a policy that is approximately an equilibrium under a given criterion.\footnote{We only consider minimizing equilibrium gaps among a class of stationary Markov policies in this paper. See \citet{daskalakis2022complexity} and the references therein for how they suffice for standard notions of equilibria such as NE and CCE, and \citet{nowak1992existence} for the case of CE. Below we also only consider stationary Markov policies as \emph{response policies} for NE/CCE, which is also justified by the fact that once a stationary Markov $\pi_{-i}$ is fixed, optimizing player $i$'s behavior for best response becomes a single-agent MDP problem.}   Let $\Pi_i = \{\pi_i: \pi\in\Pi\}$ denote the class of induced marginalized policies for player $i$, and define $\Pi_{-i}$ similarly.

\paragraph{The extended class} As we will see in Section~\ref{sec:eq}, a fundamental aspect of  equilibria is the counterfactual reasoning of how other agents would deviate and respond to a given policy. After considering the possible deviation behaviors of player $i$ in response to each policy $\pi \in \Pi$, we arrive at an extended class $\exti \supseteq \Pi$ for player $i$. The concrete form of $\exti$ will be defined in \cref{sec:eq} and can depend on the notion of equilibrium under consideration, and for now it suffices to say that $\exti$ is a superset of $\Pi$ consisting of all policies that player $i$ needs to reason about.

\textbf{Value-function approximation} We use $\Fcal_i \subset (\Scal \times \Acal \rightarrow [0,\Vmax])$ to approximate the Q-function $Q^\pi_i$ for each player $i$. Following \cite{xie2021bellman}, we make two standard assumptions on $\Fcal_i$,
\begin{assumption}[Approximate  Realizability]\label{asm:realizability}
For any player $i \in [m]$ and any $\pi \in \exti$, we have
\begin{align*}
\inf_{f \in \Fcal_i}\sup_{\text{admissible } d} \left\|f - \Tcal_i^{\pi} f\right\|_{2,d}^2 \leq \varepsilon_\Fcal,
\end{align*}  
A data distribution $d$ is admissible if $d \in\{d^{\pi'}:\pi' \in \exti\} \cup d_D$.
\end{assumption}

For each player $i$ and joint policy $\pi$, \cref{asm:realizability} requires that there exists $f \in \Fcal_i$ such that $f$ has small Bellman error under all possible distributions induced from the extended policy class $\exti$ and the data distribution. When $Q^\pi_i \in \Fcal_i$, $\forall \pi \in \Pi, i \in [m]$, we have $\epsf=0$.
\begin{assumption}[Approximate  Completeness]\label{asm:completeness}
For any player $i \in [m]$ and any $\pi \in \exti$, we have
\begin{align} \label{eq:complete}
\sup_{f \in \Fcal_i} \inf_{f' \in \Fcal_i} \left\|f' - \Tcal_i^{\pi} f \right\|_{2,d_D}^2 \leq \varepsilon_{\Fcal,\Fcal}.
\end{align}
\end{assumption}
\cref{asm:completeness} requires that $\Fcal_i$ is approximately closed under operator $\Tcal_i^\pi$. Both assumptions are direct extensions of their counterparts that are widely used in the offline RL literature.

\paragraph{Distribution mismatch and data coverage} Similar to \citet{xie2021bellman}, we use the discrepancy of Bellman error under $\pi$ to measure the distribution mismatch between an arbitrary distribution $d$ and data distribution $d_D$:
\begin{align}\label{eq:coverage}
\Cscr(d;d_D,\Fcal_i, \pi) \coloneqq \max_{f \in \Fcal_i}\frac{\|f - \Tcal_i^\pi f\|_{2,d}^2}{\|f - \Tcal_i^\pi f\|_{2,d_D}^2}.
\end{align}
We remark that $\Cscr(d;d_D,\Fcal_i, \pi) \le \sup_{s,\bfa} \frac{d(s,\bfa)}{d_D(s,\bfa)}$, which implies that $\Cscr(d;d_D,\Fcal_i, \pi)$ is a tighter measurement than the raw density ratio.   %
\section{Equilibria}\label{sec:eq}
We consider three common equilibria in game theory: Nash Equilibrium (NE), Correlated Equilibrium (CE) and Coarse Correlated Equilibrium (CCE). We define the three equilibria in a unified fashion using the concept of \emph{response class mappings},  so that each equilibrium is defined with respect to the relative best response within each corresponding response class. 

A response class mapping $\res(\cdot)$ maps a policy $\pi$ to a policy \emph{class}, $\res(\pi)\coloneqq \bigcup_{i\in[m]} \resi(\pi)$. Roughly speaking, $\resi(\pi)$ is obtained by taking a candidate policy $\pi$,  considering various ways that player $i$ would deviate its behavior from $\pi_i$ to $\pi_i^\dagger$, and re-combining $\pi_i^\dagger$ and $\pi_{-i}$ into joint policies.\footnote{For this reason, the policy class $\resi(\pi)$ always satisfies the following:  for any $i\in[m]$ and any $\pi'\in\resi(\pi)$, $\pi'_{-i}=\pi_{-i}$.}  The space of possible $\pi_i^\dagger$ which player $i$ can choose from determines the mapping, and will take different forms under different notions of equilibria, as explained next.

\begin{enumerate}[leftmargin=*]
\item A product policy is NE if it satisfies that no player can increase her gain by deviating from her own policy. Therefore, the response class mapping for NE is defined as $\resne(\pi):=\{\resnei(\pi)\}_{i \in [m]}$, where $\resnei(\pi):=\{\pi_i^\dagger\times \pi_{-i}: \pi_i^\dagger \in \Pi_i\}$. Note that here $\resnei$ has no dependence on the input $\pi_i$, and player $i$ simply considers using some  $\pi_i^\dagger \in \Pi_i$ to replace $\pi_i$. 
\item A CE is defined by a class of \emph{strategy modifications} $\Phi=(\Phi_i)_{i\in[m]}$, where $\Phi_i\subseteq (\Scal \times \Acal_i \rightarrow \Acal_i)$ is a set of strategy modifications of the $i^{\mathrm{th}}$ player, and each $\phi_i \in \Phi_i$ is a mapping $\phi_i : \Scal \times \Acal_i \rightarrow \Acal_i$. For any joint policy $\pi$, the modified policy $\phi_i \diamond \pi$ is defined as: at state $s \in \Scal$, all players sample $\mathbf{a} \sim \pi(\cdot|s)$, the $i^{\mathrm{th}}$ player changes action $\bfa_i$ to $\phi_i(s,a_i)$ and $\bfa_{-i}$ remains the same. For CE, the response class mapping of each joint policy $\pi$ is defined as $\res(\pi) \coloneqq \{\resi(\pi)\}_{i \in [m]}$, where $\resi(\pi)=\{(\phi_i \diamond \pi_i) \odot \pi_{-i}: \phi_i \in \Phi_i\}$. 
\item CCE is defined for general (i.e., possibly correlated) joint policies and is a relaxation of NE. The only difference is that CCE does not require the candidate policy $\pi$ to be a product policy. Hence, the response class mapping of CCE is the same as that of NE. 
\end{enumerate}

With the definition of response class mapping, for $\eq \in \{\mathrm{NE}, \ce, \cce\}$, we define the gap of any joint policy $\pi$ with respect to $\reseq(\cdot)$ as
\begin{align*}
\mathrm{Gap}^{\reseq}(\pi) \coloneqq \max_{i \in [m]} \max_{\pi^\dagger\in\reseqi(\pi)} V_i^{\pi^\dagger}(s_0)-V_i^\pi(s_0).
\end{align*}
Now we are ready to present the definitions of three equilibria.
\begin{definition}[Equilibria; NE, CE, and CCE]\label{def:eq}
For $\eq \in \{\mathrm{NE}, \ce, \cce\}$, a joint policy (product for NE) is an $\eps$-EQ with respect to $\reseq(\cdot)$, if for the response class $\reseqi(\pi)$,
\begin{align*}
    \mathrm{Gap}^{\reseq}(\pi) \le \eps.
\end{align*}
\end{definition}
\cref{def:eq} is defined with respect to the policy class $\Pi$ and strategy modification class $\Phi$ (for CE). Throughout the paper, we focus on the theoretical guarantees of such ``in-class'' notion of  gaps, which is a reasonable definition if we assume that all players have limited representation power and must work with restricted policy classes. Under additional assumptions (which we call ``strategy completeness''; see Appendix~\ref{app:strategy_complete}), such ``in-class'' gaps can be related to a stronger notion of gap where unrestricted deviation policies are considered for the best response. %

With the response class mappings, we also define the extended policy class $\exti \coloneqq (\bigcup_{\pi\in\Pi} \resi(\pi)) \bigcup \Pi$, which characterizes all possible policies with deviation from the $i^{\mathrm{th}}$ player. In addition, we define $\ext \coloneqq \bigcup_{i=1}^m \exti$. %

\section{Information-Theoretic Results for Multi-player General-sum Markov Games}\label{sec:multi_player}
\subsection{Algorithm} \label{sec:BCEL}
As our learning goal is to find a policy $\pi\in\Pi$ with small equilibrium gap $\mathrm{Gap}^{\reseq}(\pi)$ (for EQ$\in\{\textrm{NE, CE, CCE}\}$), a natural idea is to simply estimate the gap and minimize it over $\pi\in\Pi$. Unfortunately, we are in the offline setting and only have access to data sampled from an arbitrary data distribution $d_D$, which may not provide enough information for evaluating the gap of certain policies. 

Since the gap is not always amendable to estimation, we instead seek a surrogate objective that will always be an \emph{upper bound} on 
the equilibrium gap of each candidate policy $\pi \in \Pi$. The upper bound should also be \emph{tight} when the policy is covered by the data and we have sufficient information to determine its gap accurately. To achieve this goal, we recall the definition of gap:
$$
\mathrm{Gap}^{\reseq}(\pi) \coloneqq \max_{i \in [m]} \max_{\pi^\dagger\in \reseqi(\pi)} V_i^{\pi^\dagger}(s_0)-V_i^\pi(s_0).
$$
The key idea in our algorithm is that 
$$
V_i^{\pi^\dagger}(s_0)-V_i^\pi(s_0) \le \ov_i^{\pi^\dagger}(s_0)-\uv_i^\pi(s_0),
$$
where 
\begin{itemize}[leftmargin=*]
\item $\ov_i^{\pi^\dagger}(s_0) \ge V_i^{\pi^\dagger}(s_0)$ is an \emph{optimistic} evaluation of $\pi^\dagger$.
\item $\uv_i^\pi(s_0) \le V_i^\pi(s_0)$ is an \emph{pessimistic} evaluation of $\pi$.
\end{itemize}
With this relaxation, the problem reduces to optimistic and pessimistic policy evaluation, for which we can borrow existing techniques from single-agent RL. 

\begin{algorithm}[t]
\caption{
Bellman-Consistent Equilibrium Learning (\BCEL) from an Offline Dataset
\label{alg:multi_player}
}
\begin{algorithmic}[1]
\STATE {\bfseries Input:} Offline dataset $\Dcal$, parameter $\epsv$, equilibrium $\mathrm{EQ} \in \{\mathrm{NE},\mathrm{CE},\mathrm{CCE}\}$
\STATE For each player $i \in [m]$ and policy $\pi \in \exti$, construct function version space %
\begin{align}\label{eq:version_space}
\Fcal_i^{\pi,\epsv}=\{f_i \in \Fcal_i:\Ecal_i(f_i,\pi;\Dcal) \le \epsv\}.
\end{align}
\STATE For each player $i \in [m]$, compute 
\begin{align}
\ov_i^{\pi^\dagger}(s_0)&=\max_{f\in \Fcal_i^{\pi^\dagger,\epsv}} f(s_0,\pi^\dagger), \quad \forall \pi^\dagger \in \exti. \label{eq:estimated_ov}\\
\uv_i^\pi(s_0)&=\min_{f\in \Fcal_i^{\pi,\epsv}} f(s_0,\pi), \quad \forall \pi \in \Pi.\label{eq:estimated_uv}
\end{align}
\STATE For each policy $\pi \in \Pi$, compute the estimated gap 
\begin{align}\label{eq:surrogate_eq}
\widehat{\mathrm{Gap}}_{\mathrm{EQ}}(\pi) \coloneqq \max_{i \in [m]}\max_{\pi^\dagger \in \reseqi(\pi)}\ov^{\pi^\dagger}_i(s_0)-\uv_i^\pi(s_0).
\end{align}
\vspace{-1.5em}
\STATE Output $\hatpi \leftarrow \min_{\pi \in \Pi} \widehat{\mathrm{Gap}}_{\mathrm{EQ}}(\pi)$.
\end{algorithmic}
\end{algorithm}

\paragraph{Bellman-consistent pessimism \& optimism} We use the Bellman-consistent pessimism framework from \citet{xie2021bellman} to construct optimistic and pessimistic policy evaluations. 
For each player $i$, we first use dataset $\Dcal$ to compute an empirical Bellman error of all function $f_i \in \Fcal_i$ under Bellman operator $\tpii$,
\begin{align*}
& \Ecal_i(f_i,\pi;\Dcal) \coloneqq \Lcal_i(f_i,f_i,\pi;\Dcal) - \min_{f'_i \in \Fcal_i}\Lcal_i(f'_i,f_i,\pi;\Dcal), \\
& \Lcal_i(f'_i,f_i, \pi;\Dcal) \coloneqq \frac{1}{n} \! \sum_{(s,\bfa,\bfr,s') \in \Dcal} \!\! \left(f'_i(s,\bfa) - \bfr_i - \gamma f_i(s',\pi) \right)^2.
\end{align*}
Similar to the single-agent setting, %
$\Ecal_i(f_i,\pi;\Dcal)$ is a good approximation of the true Bellman error of $f_i$ w.r.t.~$\pi$, i.e., $\Ecal_i(f_i,\pi;\Dcal) \approx \|f_i - \Tcal_i^\pi f_i\|_{2, d_D}^2$, so we can construct a version space $\Fcal_i^{\pi,\epsv}$ for each player $i$ and policy $\pi \in \exti$ in \cref{eq:version_space}. To ensure that the best approximation of $Q_i^\pi$ is contained in $\Fcal_i^{\pi,\epsv}$, given a failure probability $\delta>0$, we pick the threshold parameter $\epsv$ as follows,
\begin{align*}
\epsv = \frac{80\Vmax^2\log\frac{|\Fcal||\ext|}{\delta}}{n}+30\epsf,
\end{align*}
where $\Fcal = \bigcup_{i=1}^m \Fcal_i$. Then, optimistic and pessimistic evaluations can be obtained by  simply taking the highest and the lowest prediction on the initial state $s_0$ across all functions in the version space (\cref{eq:estimated_ov} and \eqref{eq:estimated_uv}).

With $\ov_i^{\pi^\dagger}(s_0)$ and $\uv_i^\pi(s_0)$ at hand, we calculate the estimated gap $\widehat{\mathrm{Gap}}_{\mathrm{EQ}}(\pi)$ for each $\pi \in \Pi$ in \cref{eq:surrogate_eq}. We select the policy $\hatpi$ with the lowest estimated gap and the algorithm is summarized in \cref{alg:multi_player}. 

\subsection{Theoretical guarantees}
Before presenting the theoretical guarantee, we introduce the interval width $\unc_i^\pi$ of $\Fcal_i^{\pi,\epsv}$, which will play a key role in our main theorem statement: 
\begin{align*}
\unc_i^\pi &\coloneqq \max_{f_i \in \Fcal_i^{\pi,\epsv}} f_i(s_0,\pi)-\min_{f_i \in \Fcal_i^{\pi,\epsv}} f_i(s_0,\pi).
\end{align*}
As we will see, $\unc_i^\pi$ is a measure of how well the data distribution $d_D$ covers $d^\pi$, the state-action occupancy of $\pi$. The better coverage, the smaller $\Delta_i^\pi$. This is formalized by the following proposition:
\begin{restatable}[Bound on interval width]{proposition}{upperdelta}
\label{prop:delta}
With probability at least $1-\delta$, for any player $i \in[m]$ and any $\pi \in \exti$, we have
\begin{align}\label{eq:Delta_i_pi}
& \unc_i^\pi \le \min_d\frac{1}{1-\gamma}\sqrt{\Cscr(d;d_D,\Fcal_i,\pi)}\eps_{\mathrm{apx}}+\frac{1}{1-\gamma} \sum_{s,\bfa}(d^{\pi}\!\setminus\! d)(s,\bfa) \left[\Delta f_i^{\pi}(s,\bfa) - \gamma(P^\pi\Delta f_i^{\pi})(s,\bfa)\right], 
\end{align}
where $\eps_{\mathrm{apx}}=\order\left(\Vmax\sqrt{\frac{\log \frac{|\Fcal||\ext|}{\delta}}{n}}+\sqrt{\epsf+\epsff}\right)$, $(d^\pi \setminus d)(s,\bfa) \coloneqq \max(d^\pi(s,\bfa)-d(s,\bfa),0)$, $\Delta f^\pi_i(s,\bfa) \coloneqq f_i^{\pi,\max}(s,\bfa)-f_i^{\pi,\min}(s,\bfa)$, and $(P^\pi f)(s,\bfa)=\E_{s' \sim P(\cdot|s,\bfa)}[f(s',\pi)]$.
\end{restatable}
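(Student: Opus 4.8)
The plan is to first control the population Bellman error of every function that survives in the version space, then convert the interval width into a weighted Bellman-residual integral via a telescoping identity, and finally split that integral against an arbitrary reference distribution $d$ to expose the coverage/suboptimality trade-off. Throughout, write $g:=f_i^{\pi,\max}$ and $h:=f_i^{\pi,\min}$ for the two endpoints of the version space at $s_0$, so that $\unc_i^\pi=g(s_0,\pi)-h(s_0,\pi)$ and $\Delta f_i^\pi=g-h$.

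\textbf{Step 1 (concentration).} I would first establish a high-probability event on which, uniformly over all $i\in[m]$, all $\pi\in\exti$, and all $f_i\in\Fcal_i$, the empirical Bellman error $\Ecal_i(f_i,\pi;\Dcal)$ tracks the population Bellman error $\|f_i-\Tcal_i^\pi f_i\|_{2,d_D}^2$. This is the Bellman-consistent pessimism concentration of \citet{xie2021bellman}: a Bernstein/variance argument together with \cref{asm:completeness} (to control the inner minimization $\min_{f_i'}\Lcal_i(f_i',f_i,\pi;\Dcal)$), plus a union bound over $\Fcal$ and $\ext$ that produces the $\log\frac{|\Fcal||\ext|}{\delta}$ factor. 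Calibrated to the threshold $\epsv$, this yields that every $f_i\in\Fcal_i^{\pi,\epsv}$ satisfies $\|f_i-\Tcal_i^\pi f_i\|_{2,d_D}=\order(\eps_{\mathrm{apx}})$; in particular both endpoints $g$ and $h$ have population Bellman error $\order(\eps_{\mathrm{apx}})$ under $d_D$.

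\textbf{Step 2 (telescoping).} For any $f$ and any $\pi$, unrolling the Bellman residual $f-\Tcal_i^\pi f$ along trajectories gives the identity
\[
f(s_0,\pi)-V_i^\pi(s_0)=\frac{1}{1-\gamma}\E_{(s,\bfa)\sim d^\pi}\!\left[(f-\Tcal_i^\pi f)(s,\bfa)\right].
\]
Applying this to $g$ and to $h$ and subtracting cancels $V_i^\pi(s_0)$, so $\unc_i^\pi$ equals $\frac{1}{1-\gamma}\E_{d^\pi}\!\left[(g-\Tcal_i^\pi g)-(h-\Tcal_i^\pi h)\right]$. Since $\Tcal_i^\pi$ is affine with linear part $\gamma P^\pi$, the integrand collapses to $\Delta f_i^\pi-\gamma P^\pi\Delta f_i^\pi$, giving
\[
\unc_i^\pi=\frac{1}{1-\gamma}\sum_{s,\bfa}d^\pi(s,\bfa)\left[\Delta f_i^\pi(s,\bfa)-\gamma(P^\pi\Delta f_i^\pi)(s,\bfa)\right].
\]

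\textbf{Step 3 (split and bound).} For any distribution $d$, I use the pointwise decomposition $d^\pi=\min(d^\pi,d)+(d^\pi\setminus d)$. The $(d^\pi\setminus d)$ piece is kept verbatim and becomes the second term of \cref{eq:Delta_i_pi}. For the $\min(d^\pi,d)$ piece I bound by absolute values, use $\min(d^\pi,d)\le d$ and the triangle inequality to reach $\frac{1}{1-\gamma}\left(\E_d[|g-\Tcal_i^\pi g|]+\E_d[|h-\Tcal_i^\pi h|]\right)$, apply Cauchy--Schwarz to pass to $\|\cdot\|_{2,d}$, invoke the definition of $\Cscr(d;d_D,\Fcal_i,\pi)$ in \cref{eq:coverage} to replace $\|\cdot\|_{2,d}$ by $\sqrt{\Cscr(d;d_D,\Fcal_i,\pi)}\,\|\cdot\|_{2,d_D}$, and finally insert the Step-1 bound $\|g-\Tcal_i^\pi g\|_{2,d_D},\|h-\Tcal_i^\pi h\|_{2,d_D}=\order(\eps_{\mathrm{apx}})$. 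This produces the first term $\frac{1}{1-\gamma}\sqrt{\Cscr(d;d_D,\Fcal_i,\pi)}\,\eps_{\mathrm{apx}}$, and since the bound holds for every $d$ I conclude by taking $\min_d$. The main obstacle is Step 1: making the empirical Bellman error faithfully and uniformly track its population counterpart over the potentially large extended class $\ext$, where \cref{asm:completeness} and the calibration of $\epsv$ are essential; Steps 2 and 3 are then clean algebra, the only genuine design choice being the $\min(d^\pi,d)/(d^\pi\setminus d)$ split that creates the adaptive trade-off between a concentrability term and an uncovered-mass term.
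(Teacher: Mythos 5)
Your proposal is correct and follows essentially the same route as the paper: Step 1 is the paper's chain of concentration lemmas culminating in the bound $\|f-\Tcal_i^\pi f\|_{2,d_D}=\order(\eps_{\mathrm{apx}})$ for every member of the version space, Step 2 is exactly the paper's evaluation-error lemma applied to both endpoints, and Step 3's split of $d^\pi$ into $\min(d^\pi,d)$ and $(d^\pi\setminus d)$ is algebraically equivalent to the paper's add-and-subtract of $\E_d$ followed by separating the positive and negative parts of $d^\pi-d$ (yours is marginally tighter by a constant absorbed in the $\order(\cdot)$). No gaps.
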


Here, a distribution $d \in \Delta(\Scal \times \Acal)$ in \cref{eq:Delta_i_pi} is introduced to handle the discrepancy between $d_D$ and $d^\pi$. The first term in \cref{eq:Delta_i_pi} captures the distribution mismatch between $d$ and $d_D$, and the second term represents the off-support Bellman error under $\pi$. When the data distribution $d_D$ has a full coverage on $d^\pi$, $d$ can be chosen as $d^\pi$ and the second term becomes zero. Therefore, for the purpose of developing intuitions, one can always choose $d = d^\pi$ and treat $\Delta_i^\pi \propto \sqrt{\Cscr(d^\pi;d_D,\Fcal_i,\pi)}$, though in general some $d\ne d^\pi$ may achieve a better trade-off and tighter bound.

With an intuitive understanding of $\Delta_i^\pi$, we are ready to show the following theorem for our proposed algorithm.
\begin{restatable}{theorem}{multi}\label{thm:multi_bound}
With probability at least $1-\delta$, for any $\pi \in \Pi$ and $\mathrm{EQ} \in \{\mathrm{NE},\mathrm{CE},\mathrm{CCE}\}$, the output policy $\hatpi$ of \cref{alg:multi_player} satisfies that
\begin{align*}
    & \mathrm{Gap}^{\reseq}(\hatpi) \le \mathrm{Gap}^{\reseq}(\pi)+\frac{4\sqrt{\epsf}}{1-\gamma} + \max_{i \in [m]}\min_{\tpi_i \in \reseqi(\pi)} \left(\unc_{i}^{\tpi_i}+\unc_i^\pi+
     \mathrm{subopt}_i^\pi(\tpi_i)
     \right),
\end{align*}
where $\mathrm{subopt}_i^\pi(\tpi_i)\coloneqq\max_{\pid \in \resi(\pi)} \ov_i^{\pid}(s_0)-\ov_i^{\tpi_i}(s_0)$. 
\end{restatable}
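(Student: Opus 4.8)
The plan is to chain three facts: the surrogate $\widehat{\mathrm{Gap}}_{\mathrm{EQ}}$ is a valid upper bound (up to an $O(\sqrt{\epsf}/(1-\gamma))$ slack) on the true gap; the output $\hatpi$ minimizes $\widehat{\mathrm{Gap}}_{\mathrm{EQ}}$ over $\Pi$; and the surrogate gap of any competitor $\pi$ exceeds its true gap by no more than the width/suboptimality terms. Concretely, I would first prove
\begin{align*}
\mathrm{Gap}^{\reseq}(\hatpi) \le \widehat{\mathrm{Gap}}_{\mathrm{EQ}}(\hatpi) + \tfrac{2\sqrt{\epsf}}{1-\gamma} \le \widehat{\mathrm{Gap}}_{\mathrm{EQ}}(\pi) + \tfrac{2\sqrt{\epsf}}{1-\gamma},
\end{align*}
using validity for the first inequality and optimality of $\hatpi$ for the second, and then bound $\widehat{\mathrm{Gap}}_{\mathrm{EQ}}(\pi)$ from above; summing the error contributions produces the factor $4$.

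The crux is a single \emph{validity lemma}: with probability at least $1-\delta$, simultaneously for every player $i$ and every $\pi \in \exti$, the version space $\Fcal_i^{\pi,\epsv}$ contains a function whose value at $s_0$ is within $\sqrt{\epsf}/(1-\gamma)$ of $V_i^\pi(s_0)$, so that
\begin{align*}
\ov_i^\pi(s_0) \ge V_i^\pi(s_0) - \tfrac{\sqrt{\epsf}}{1-\gamma}, \qquad \uv_i^\pi(s_0) \le V_i^\pi(s_0) + \tfrac{\sqrt{\epsf}}{1-\gamma}.
\end{align*}
I would establish this in two substeps. First, a concentration argument (adapting \citet{xie2021bellman}) shows that the realizability witness $f^\star$ from \cref{asm:realizability} for the pair $(i,\pi)$ satisfies $\Ecal_i(f^\star,\pi;\Dcal) \le \epsv$; this controls the empirical Bellman error by its population counterpart, using \cref{asm:completeness} to handle the inner minimization in $\Lcal_i$, and the stated $\epsv$ (with its $80\Vmax^2\log(|\Fcal||\ext|/\delta)/n$ and $30\epsf$ terms) is exactly what a Bernstein-type bound plus a union bound over $\Fcal \times \ext$ requires. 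Because the union bound ranges over all of $\ext$, the conclusion holds uniformly, which is essential since $\hatpi$ is data-dependent. Second, the telescoping identity $f^\star(s_0,\pi) - V_i^\pi(s_0) = \frac{1}{1-\gamma}\E_{(s,\bfa)\sim d^\pi}[(f^\star - \Tcal_i^\pi f^\star)(s,\bfa)]$, combined with Cauchy--Schwarz and the fact that $d^\pi$ is admissible for $\pi \in \exti$, converts the small Bellman error into the claimed $\sqrt{\epsf}/(1-\gamma)$ evaluation error.

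Given validity, the two displayed inequalities follow by applying optimism to each $\pid \in \reseqi(\hatpi)$ and pessimism to $\hatpi$, taking $\max_{i,\pid}$, and then invoking optimality of $\hatpi$. It remains to upper bound $\widehat{\mathrm{Gap}}_{\mathrm{EQ}}(\pi)$, where the suboptimality term enters through the identity $\max_{\pid \in \reseqi(\pi)} \ov_i^{\pid}(s_0) = \ov_i^{\tpi_i}(s_0) + \mathrm{subopt}_i^\pi(\tpi_i)$, valid for any $\tpi_i \in \reseqi(\pi)$. I would then rewrite $\ov_i^{\tpi_i}(s_0) = \uv_i^{\tpi_i}(s_0) + \unc_i^{\tpi_i}$ and $\uv_i^\pi(s_0) = \ov_i^\pi(s_0) - \unc_i^\pi$, apply validity to pass from $\uv_i^{\tpi_i},\ov_i^\pi$ to $V_i^{\tpi_i}, V_i^\pi$ (contributing two more $\sqrt{\epsf}/(1-\gamma)$ terms), and bound $V_i^{\tpi_i}(s_0) - V_i^\pi(s_0) \le \mathrm{Gap}^{\reseq}(\pi)$ since $\tpi_i \in \reseqi(\pi)$. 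Minimizing over the free choice of $\tpi_i$ and then maximizing over $i$ produces exactly $\max_i \min_{\tpi_i}(\unc_i^{\tpi_i} + \unc_i^\pi + \mathrm{subopt}_i^\pi(\tpi_i))$, and collecting the four $\sqrt{\epsf}/(1-\gamma)$ contributions yields the $4\sqrt{\epsf}/(1-\gamma)$ constant.

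I expect the main obstacle to be the concentration half of the validity lemma rather than the algebra: one must verify that the realizability witness survives the threshold $\epsv$ with the correct constants, carefully controlling the variance of the differences of squared losses so that a fast $1/n$ rate (not $1/\sqrt{n}$) is obtained, and paying the $\log|\ext|$ price for the uniform-over-$\ext$ union bound. The subsequent reshuffling via $\mathrm{subopt}$ is the conceptually interesting but technically routine part; the only point to watch is that every policy the algorithm evaluates---$\hatpi$, the responses in $\reseqi(\hatpi)$, and the responses $\reseqi(\pi)$ of the competitor---lies in $\ext$, so that the uniform validity lemma applies to all of them.
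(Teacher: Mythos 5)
Your proposal is correct and follows essentially the same route as the paper: the validity lemma (version space retains the realizability witness via a Bernstein-type concentration with union bound over $\Fcal\times\ext$, then the telescoping evaluation identity converts small Bellman error into an $\sqrt{\epsf}/(1-\gamma)$ evaluation error) is exactly the paper's Lemmas~\ref{lem:best_approximation_empirical_error}--\ref{lem:minmax_error}, and your chaining of validity, optimality of $\hatpi$, and the $\mathrm{subopt}$/interval-width rewriting of $\widehat{\mathrm{Gap}}_{\mathrm{EQ}}(\pi)$ reproduces Lemma~\ref{lem:eq_upper_gap} and the paper's final argument, including the accounting of the four $\sqrt{\epsf}/(1-\gamma)$ terms.
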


\begin{figure*}[t]
\centering
\includegraphics[width=.8\textwidth]{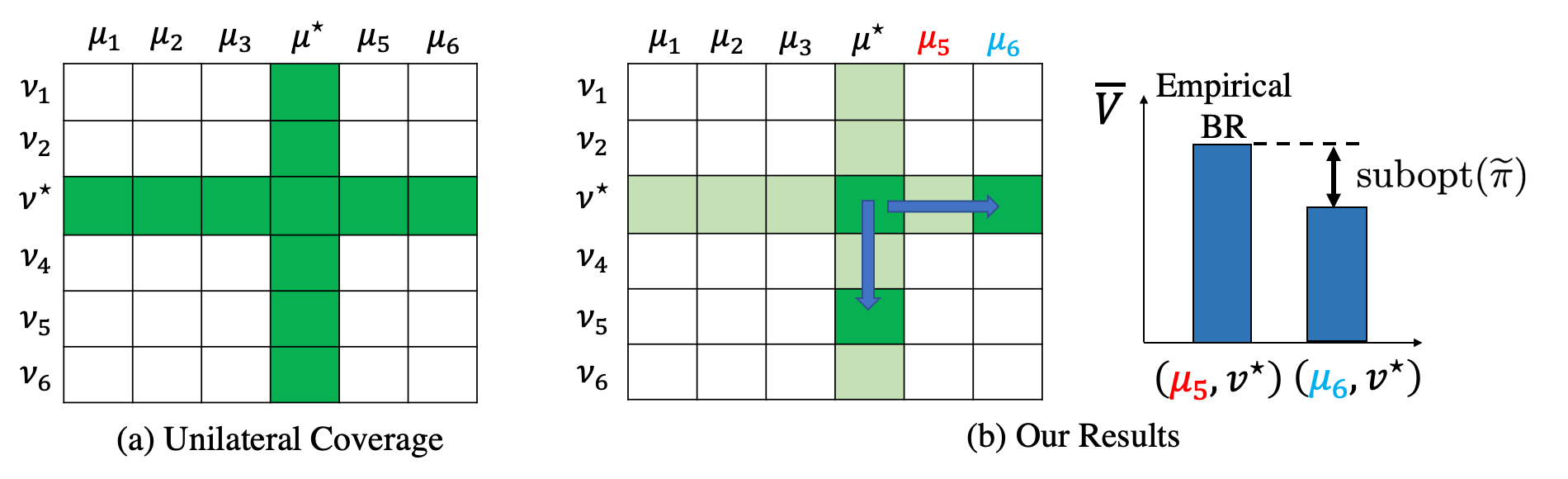}
\caption{
Illustration of unilateral coverage and our results on a zero-sum example. {\bf (a)} Unilateral coverage requires the dataset to cover all unilateral pairs $(\mu^\star, \nu')$ and $(\mu',\nu^\star)$ where $(\mu^\star,\nu^\star)$ is NE. {\bf (b)} Our approach enjoys an adaptive property and relaxes the condition. %
To begin with, we can already achieve a good sample complexity if the data were to cover the optimistic best response ($(\mu_5,\nu^\star)$ in this example) only, i.e. when $\unc^{\mu_5,\nu^\star}$ was small. Even when the dataset has a poor coverage on $(\mu_5,\nu^\star)$, there may exists some other $\mu_6$ so that $\unc^{\mu_5,\nu^\star} \gg \unc^{\mu_6,\nu^\star}$. Instead of suffering $\unc^{\mu_5,\nu^\star}$, our approach automatically adapts to the policy $\tpi=(\mu_6,\nu^\star)$ which achieves a better trade-off between the policy coverage term $\unc^{\tpi}$ and suboptimality term $\mathrm{subopt}(\tpi)$.
}
\label{fig:coverage}
\end{figure*}

\subsection{Improvement over unilateral coverage}\label{sec:improve}
To interpret Theorem~\ref{thm:multi_bound} and compare it to existing guarantees, we first introduce a direct corollary of  Theorem~\ref{thm:multi_bound} + \cref{prop:delta}, which is a relaxed form of our result that is closer to existing guarantees by \citet{cui2022offline, cui2022provably}. 

\begin{corollary}
\label{cor:unilateral}
For Nash equilibrium policy $\pis \in \Pi$, suppose there exists an unilateral coefficient $C(\pis)$ such that the following inequality holds
\begin{align}\label{eq:unilateral}
\max_{i \in [m]} \max_{\pi^\dagger \in \resnei(\pis)} \Cscr(d^{\pi^\dagger};d_D,\Fcal_i,\pis) \le C(\pis).
\end{align}
With probability at least $1-\delta$, we have
\begin{align*}
\mathrm{Gap}^{\resne}(\hatpi) \le \order\left(\frac{\Vmax\sqrt{\frac{\log \frac{|\Fcal||\ext|}{\delta}}{n}}+\sqrt{\epsf+\epsff}}{1-\gamma}\sqrt{C(\pis)}\right).
\end{align*}
\end{corollary}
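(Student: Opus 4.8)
The plan is to obtain the corollary directly from \cref{thm:multi_bound} and \cref{prop:delta} by specializing to $\mathrm{EQ}=\mathrm{NE}$ and instantiating the free policy $\pi$ in the theorem at the Nash policy $\pis$. Since $\pis$ is an exact in-class NE we have $\mathrm{Gap}^{\resne}(\pis)=0$, so \cref{thm:multi_bound} collapses to
\begin{align*}
\mathrm{Gap}^{\resne}(\hatpi) \le \frac{4\sqrt{\epsf}}{1-\gamma} + \max_{i \in [m]}\min_{\tpi_i \in \resnei(\pis)}\left(\unc_i^{\tpi_i}+\unc_i^{\pis}+\mathrm{subopt}_i^{\pis}(\tpi_i)\right).
\end{align*}
The entire task reduces to controlling the inner minimum; the two main terms of the target rate will emerge from turning the $\unc$ quantities into the concentrability coefficient $C(\pis)$.

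The key step is to exploit the freedom in choosing $\tpi_i$ inside the minimization. I would take $\tpi_i$ to be the \emph{optimistic best response}, i.e. any maximizer of $\ov_i^{\pi^\dagger}(s_0)$ over $\pi^\dagger\in\resnei(\pis)$. With this choice the suboptimality term vanishes exactly, since $\mathrm{subopt}_i^{\pis}(\tpi_i)=\max_{\pi^\dagger\in\resnei(\pis)}\ov_i^{\pi^\dagger}(s_0)-\ov_i^{\tpi_i}(s_0)=0$, so the inner minimum is bounded by $\unc_i^{\tpi_i}+\unc_i^{\pis}$. Because $\tpi_i\in\resnei(\pis)$, and also $\pis\in\resnei(\pis)$ via the trivial deviation $\pi_i^\dagger=\pi_i^\star$, both terms are dominated by $\max_{\pi^\dagger\in\resnei(\pis)}\unc_i^{\pi^\dagger}$. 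Hence the whole max--min expression is at most $2\max_{i}\max_{\pi^\dagger\in\resnei(\pis)}\unc_i^{\pi^\dagger}$. This is precisely where the adaptive guarantee of \cref{thm:multi_bound} is deliberately relaxed back to the coarser worst-case-over-deviations quantity underlying unilateral coverage.

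It then remains to convert each $\unc_i^{\pi^\dagger}$ into the assumed coefficient. For every $\pi^\dagger\in\resnei(\pis)\subseteq\exti$ I would invoke \cref{prop:delta} with the full-coverage choice $d=d^{\pi^\dagger}$, which annihilates the off-support Bellman-error term (as $d^{\pi^\dagger}\setminus d^{\pi^\dagger}=0$) and leaves $\unc_i^{\pi^\dagger}\le\frac{1}{1-\gamma}\sqrt{\Cscr(d^{\pi^\dagger};d_D,\Fcal_i,\pi^\dagger)}\,\eps_{\mathrm{apx}}$. Applying the hypothesis \cref{eq:unilateral}, substituting the explicit form of $\eps_{\mathrm{apx}}$, and noting that the standalone $\frac{4\sqrt{\epsf}}{1-\gamma}$ is itself of order $\eps_{\mathrm{apx}}/(1-\gamma)$ and thus absorbed into the $\order(\cdot)$, yields the stated bound. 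Both invoked results hold on the same $1-\delta$ event, since they share the concentration of the empirical Bellman errors at threshold $\epsv$, so no extra union bound is needed.

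The one genuine subtlety I anticipate is the bookkeeping of the Bellman operator inside $\Cscr$: \cref{prop:delta} naturally produces the coefficient with the operator $\Tcal_i^{\pi^\dagger}$ attached to the deviation policy, whereas \cref{eq:unilateral} states it with $\pis$ in the last slot. Reconciling the two is the step that requires care rather than the surrounding algebra; it is harmless because, by the remark following \cref{eq:coverage}, both versions are dominated by the same raw density ratio $\sup_{s,\bfa} d^{\pi^\dagger}(s,\bfa)/d_D(s,\bfa)$, which is independent of the operator and recovers exactly the literal unilateral-concentrability coefficient of prior work.
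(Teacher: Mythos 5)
Your derivation is the intended one: the paper gives no standalone proof of this corollary and presents it as a direct consequence of \cref{thm:multi_bound} and \cref{prop:delta}, and your chain---instantiate $\pi=\pis$ so $\mathrm{Gap}^{\resne}(\pis)=0$, pick $\tpi_i$ as the optimistic best response to make $\mathrm{subopt}_i^{\pis}(\tpi_i)=0$, bound both interval widths by the worst case over $\resnei(\pis)$ (using $\pis\in\resnei(\pis)$), and apply \cref{prop:delta} with $d=d^{\pi^\dagger}$ to annihilate the off-support term---is exactly the breakdown the paper itself sketches in Section~4.3. The claim that both results hold on a common $1-\delta$ event is also correct, since both are proved under the success event of the same concentration lemma.

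The one step that does not hold up as written is your reconciliation of the operator-policy mismatch. \cref{prop:delta} applied to $\pi^\dagger$ produces $\Cscr(d^{\pi^\dagger};d_D,\Fcal_i,\pi^\dagger)$, i.e., Bellman errors measured with respect to $\Tcal_i^{\pi^\dagger}$, whereas \cref{eq:unilateral} bounds $\Cscr(d^{\pi^\dagger};d_D,\Fcal_i,\pis)$. Your observation that both quantities are dominated by the raw density ratio $\sup_{s,\bfa} d^{\pi^\dagger}(s,\bfa)/d_D(s,\bfa)$ is true but does not close the gap: a bound of $C(\pis)$ on the $\pis$-version of $\Cscr$ does not imply the same bound on the $\pi^\dagger$-version, because the density ratio can far exceed $C(\pis)$. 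The clean fix is to read the hypothesis with $\pi^\dagger$ in the last slot of $\Cscr$ (matching what \cref{prop:delta} delivers), which is almost certainly what is intended; alternatively one must strengthen the hypothesis to the density-ratio coefficient. Either way, the remainder of your argument goes through unchanged. A second, more minor point: absorbing the standalone $4\sqrt{\epsf}/(1-\gamma)$ into the final $\order(\sqrt{C(\pis)}\,\eps_{\mathrm{apx}}/(1-\gamma))$ implicitly uses $C(\pis)\gtrsim 1$, which is standard but worth a word.
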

The gap bound in Corollary~\ref{cor:unilateral} takes a simple form: the first part of it has an $O(1/\sqrt{n})$ statistical error (scaled by the complexities of function and policy classes, as fully expected), and an approximation error term that depends on $\epsf, \epsff$, which goes to $0$ when our function classes are exactly realizable and Bellman-complete. 

The key item in the bound is the $\sqrt{C(\pi^*)}$ factor, which measures distribution mismatch and implicitly determines the data coverage condition. $C(\pi^*)$ is defined in \cref{eq:unilateral}. As we can see, having a small $C(\pi^*)$ requires that data not only covers $\pi^*$ itself\footnote{Note that $\pi^* \in \resnei(\pis)$.}, but also \emph{all policies in} $\resnei(\pis) = \{\pi_i \times \pis_{-i}:\pi_i \in \Pi_i\}$. This is the notion of \emph{unilateral coverage} proposed by \citet{cui2022offline} and \citet{zhong2022pessimistic}. Visualizing this in Figure~\ref{fig:coverage}(a) with a simplified setting of a two-player matrix game, such a condition corresponds to data covering the entire ``cross'' centered at the NE.  

Although \cite{cui2022offline} argues that unilateral coverage is ``sufficient and necessary'' in the worst case, %
their argument does not exclude an  improved version that can be substantially relaxed under certain conditions, and we show that our Theorem~\ref{thm:multi_bound} is such a version. We now provide a breakdown of the bound in Theorem~\ref{thm:multi_bound}:
\begin{enumerate}[leftmargin=*, topsep=0pt, itemsep=0pt]
\item First, the RHS of the bound depends on $\Delta_i^\pi$, where $\pi$ is the policy we compete with and correspond to $\pi^*$ in Corollary~\ref{cor:unilateral}. Recalling that $\Delta_i^\pi \propto \sqrt{\Cscr(d^\pi;d_D,\Fcal_i,\pi)}$, this term corresponds to data coverage on $\pi^*$, which is always needed if we wish to compete with $\pi^*$.
\item The RHS also depends on $\Delta_i^{\tpi_i} + \mathrm{subopt}_i^\pi(\tpi_i)$, where $\tpi_i$ is \textit{minimized} over $\resnei(\pis)$ when EQ=NE (and \cref{eq:unilateral} \textit{maximizes} over $\pi^\dagger$). In particular, we can always choose $\tpi_i$ as the policy that maximizes $\ov_i$, i.e., the \emph{optimistic best response}. This would set $\mathrm{subopt}_i^\pi(\tpi_i)=0$, showing that we only need coverage for the optimistic best response policy, instead of all policies in  $\resnei(\pis)$  as required by the unilateral assumption.
\item Finally, our bound provides a further relaxation: when the optimistic best response is poorly covered, we may choose some other well-covered $\tpi_i$ instead, and pay an extra term $\mathrm{subopt}_i^\pi(\tpi_i)$ that measures to what extent $\tpi_i$ is an approximate $\ov_i$-based best response. 
\end{enumerate}
Again, we illustrate the flexibility of our bound in Figure~\ref{fig:coverage}(b). Below we also show a concrete example, where our guarantee leads to significantly improved sample rates compared to that provided by the unilateral condition. %

\paragraph{Example} 
Consider a simple two-player zero-sum matrix game with payoff matrix:
\begin{align*}
\centering
\begin{tabular}{llll}
    & $b_1$ & $b_2$ & $b_3$ \\
$a_1$ & 0.5 & 0.75 & 0.75 \\
$a_2$ & 0.25 & 0 & 0 \\
$a_3$ & 0.25 & 0 & 0 \\
\end{tabular}
\end{align*}
where the column player aims to maximize the reward and the row player aims to minimize it. It is clear to see $(a_1,b_1)$ is NE. The offline dataset $\Dcal$ is collected from the following distribution,
\begin{align*}
\centering
\begin{tabular}{llll}
    & $b_1$ & $b_2$ & $b_3$ \\
$a_1$ & $p_1$ & $p_2$ & $p_2$ \\
$a_2$ & $p_2$ & $p_3$ & $p_3$ \\
$a_3$ & $p_2$ & $p_3$ & $p_3$ \\
\end{tabular}
\end{align*}
where $0<p_2 \ll p_1$ and $p_3=\frac{1-p_1-4p_2}{4}$. Under Corollary~\ref{cor:unilateral} (i.e., unilateral coverage \citep{cui2022offline}), the sample complexity bound is $\otilde(\frac{1}{p_2 \epsilon^2})$. However, when $n>\otilde(\frac{1}{p_2})$, we already identify $(a_1,b_2)$, $(a_1,b_3)$, $(a_2,b_1)$, and $(a_3,b_1)$ as suboptimal actions with high probability. On this event, Theorem~\ref{thm:multi_bound} shows that we only suffer the coverage coefficient on the optimistic best response (which is $(a_1,b_1)$ itself), so that the sample complexity bound becomes $\otilde(\max\{\frac{1}{p_2}, \frac{1}{p_1\epsilon^2}\})\ll \otilde(\frac{1}{p_2 \epsilon^2})$.

\section{V-type Function Approximation}\label{sec:v_type}

A potential caveat of our approach in Section~\ref{sec:multi_player} is that we model Q-functions which take joint actions as inputs. In the tabular setting, the complexity of the full Q-function class has exponential dependence on the number of agents $m$, whereas prior results specialized to tabular settings do not suffer such a dependence. 

While it is known that jointly featurizing the actions can avoid such an exponential dependence \cite{zhong2022pessimistic} (in a way similar to how linear MDP results do not incur $|\Acal|$ dependence in the single-agent setting \cite{jin2020provably}), in this section we provide an alternative approach that directly subsumes the prior tabular results and produces the same rate (up to minor differences to be discussed). 
We propose a V-type variant algorithm of \BCEL, which directly models the state-value function $V_i^\pi$ with the help of a function class $\Gcal_i \subset (\Scal \rightarrow [0,\Vmax])$ for each player $i$. %

As before, we assume that the tuples $(s,\bfa,\bfr,s') \in \Dcal$ are generated as $(s, \bfa) \sim d_D$, $\bfr_i \sim r_i(s,\bfa)$ and $s' \sim P(\cdot|s,\bfa)$. In this section, we write $d_D = d_S \times d_A$, i.e., $(s,\bfa) \sim d_D \Leftrightarrow s \sim d_S, a\sim d_A(\cdot|s)$. We additionally assume that (1) $d_A(\bfa|s)> 0, \forall  (s,\bfa) \in \Scal \times \Acal$, \footnote{This assumption is w.l.o.g.~and just for technical convenience, so that the action importance weights are always well defined. Otherwise, we can simply ignore any policy $\pi$ where $\pi(\bfa|s)/d_A(\bfa|s)$ goes unbounded and assume maximum $\Delta_i^\pi$ for such $\pi$.} and (2) the behavior policy $d_A(\cdot|s)$ is known to the learner. We use the behavior policy to perform importance weighting on the actions to correct the mismatch between $d_A$ and $\pi$, and modify the loss function $\Lcal$ as follows:
for any function $g_i \in \Gcal_i$, define
\begin{align*}
\Lcal_i(g'_i,g_i, \pi;\Dcal) \coloneqq \frac{1}{n} \sum_{(s,\bfa,\bfr,s') \in \Dcal} \frac{\pi(\bfa|s)}{d_A(\bfa|s)}\left(g'_i(s,\bfa) - \bfr_i - \gamma g_i(s',\pi) \right)^2.
\end{align*}
Similarly as before, we compute empirical Bellman error $\Ecal_i(g_i,\pi;\Dcal) \coloneqq \Lcal_i(g_i,g_i,\pi;\Dcal) - \min_{g'_i \in \Gcal_i}\Lcal_i(g'_i,g_i,\pi;\Dcal)$ and construct version space $\Gcal_i^{\pi,\eps}=\{g_i \in \Gcal_i:\Ecal_i(g_i,\pi;\Dcal) \le \wg\}$. What is slightly different is that we set parameter $\wg$ as
\begin{align*}
\wg \coloneqq \frac{80 C_A(\pi)\Vmax^2\log\frac{|\Fcal||\ext|}{\delta}}{n}+30\epsf,
\end{align*}
where $C_A(\pi) \coloneqq \max_{s,\bfa} \frac{\pi(\bfa|s)}{d_A(\bfa|s)}$. Compared to $\epsv$ in Algorithm~\ref{alg:multi_player}, the extra $C_A(\pi)$ term comes from  importance weighting. With $\Gcal_i^{\pi,\wg}$ at hand, we define
\begin{align*}
g_i^{\pi,\max} \coloneqq \argmax_{g_i \in \Gcal_i^{\pi,\wg}} g_i(s_0), \quad 
g_i^{\pi,\min} \coloneqq \argmin_{g_i \in \Gcal_i^{\pi,\wg}} g_i(s_0).
\end{align*}
We then compute $\widehat{\mathrm{Gap}}_{\mathrm{EQ}}(\pi)$ which is an upper bound on equilibrium gap for any $\pi \in \Pi$: 
\begin{align*}
\widehat{\mathrm{Gap}}_{\mathrm{EQ}}(\pi) \coloneqq \max_{i \in [m]}\max_{\pi^\dagger \in \reseqi(\pi)}g_i^{\pi^\dagger,\max}(s_0)-g_i^{\pi,\min}(s_0).
\end{align*}
We select the policy by minimizing the estimated equilibrium gap:
\begin{align}\label{eq:output_policy}
\hatpi = \argmin_{\pi \in \Pi} \widehat{\mathrm{Gap}}_{\mathrm{EQ}}(\pi),
\end{align}
whose performance guarantee is shown as follows.
\begin{restatable}[V-type guarantee]{theorem}{vmulti}
\label{thm:v_multi_bound}
With probability at least $1-\delta$, for any $\pi \in \Pi$ and $\mathrm{EQ} \in \{\mathrm{NE},\mathrm{CE},\mathrm{CCE}\}$, the output policy $\hatpi$ from \cref{eq:output_policy} satisfies that
\begin{align*}
    \mathrm{Gap}^{\reseq}(\hatpi) \le \mathrm{Gap}^{\reseq}(\pi)+\frac{4\sqrt{\epsf}}{1-\gamma} + \max_{i \in [m]}\min_{\tpi_i \in \reseqi(\pi)} \left(\unc_{i}^{\tpi_i}+\unc_i^\pi+
     \mathrm{subopt}_i^\pi(\tpi_i)
     \right),
\end{align*}
where $\unc_i^\pi=g_i^{\pi,\max}(s_0)-g_i^{\pi,\min}(s_0)$ and $\mathrm{subopt}_i^{\tpi_i}=\max_{\pid \in \reseqi(\pi)} g_i^{\pid,\max}(s_0)-g_i^{\tpi_i,\max}(s_0)$. In addition, with probability at least $1-\delta$, for any player $i \in[m]$ and any $\pi \in \exti$, we have
\begin{align*}
    \unc_i^\pi &\le \min_{d \in \Delta(S)}\frac{1}{1-\gamma}\sqrt{\Cscr(d;d_S,\Gcal_i,\pi)}\eps_{\mathrm{apx}}+\frac{1}{1-\gamma} \sum_{s \in \Scal}(d^{\pi}\setminus d)(s) \left[\Delta g_i^{\pi}(s) - \gamma(P_i^\pi\Delta g_i^{\pi})(s)\right],
\end{align*}
where $\eps_{\mathrm{apx}}=\order\left(\Vmax\sqrt{C_A(\pi) \frac{\log \frac{|\Gcal||\ext|}{\delta}}{n}}+\sqrt{\epsf+\epsff}\right)$, $(d^\pi \setminus d)(s) \coloneqq \max(d^\pi(s)-d(s),0)$, $\Delta g^\pi_i(s) \coloneqq g_i^{\pi,\max}(s)-g_i^{\pi,\min}(s)$, and $(P_i^\pi g)(s)=\E_{\bfa \sim \pi(\cdot|s),s' \sim P(\cdot|s,\bfa)}[r_i(s,\bfa)+g(s')]$.
\end{restatable}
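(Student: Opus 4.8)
The plan is to prove the two displayed claims separately, following the Q-type arguments behind \cref{thm:multi_bound} and \cref{prop:delta} but adapting every step to the importance-weighted loss. The backbone is the standard pessimism/optimism sandwich: I will argue that $\widehat{\mathrm{Gap}}_{\mathrm{EQ}}(\pi)$ is a valid upper bound on $\mathrm{Gap}^{\reseq}(\pi)$ for every $\pi\in\Pi$, and is simultaneously tight (up to the stated coverage terms) for the comparator $\pi$; minimality of $\hatpi$ then closes the loop. Throughout, $\Tcal_i^\pi$ denotes the V-type operator $(\Tcal_i^\pi g)(s)=\E_{\bfa\sim\pi(\cdot|s),s'}[r_i(s,\bfa)+\gamma g(s')]$ acting on state-only functions.

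First I would carry out the concentration step, which I expect to be the main obstacle. The goal is to show that with probability at least $1-\delta$, uniformly over all $i\in[m]$, all $\pi\in\exti$ and all $g_i\in\Gcal_i$, the empirical error $\Ecal_i(g_i,\pi;\Dcal)$ faithfully tracks the state-weighted Bellman error $\|g_i-\Tcal_i^\pi g_i\|_{2,d_S}^2$. The crucial identity is $\E_{\bfa\sim d_A(\cdot|s)}[\tfrac{\pi(\bfa|s)}{d_A(\bfa|s)}h(s,\bfa)]=\E_{\bfa\sim\pi(\cdot|s)}[h(s,\bfa)]$, so in population the minimizer over $g'_i$ of the reweighted loss equals $\Tcal_i^\pi g_i$ and the subtraction of $\min_{g'_i}\Lcal_i$ isolates exactly the $d_S$-weighted Bellman error, with \cref{asm:completeness} controlling the bias of the inner minimization (contributing the $\epsff$ term). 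The single-agent concentration of \citet{xie2021bellman} then transfers, the only change being that the per-sample terms now carry the weight $\pi(\bfa|s)/d_A(\bfa|s)\le C_A(\pi)$; inflating the range and variance bounds by $C_A(\pi)$ before invoking Bernstein's inequality yields the extra $C_A(\pi)$ factor in $\wg$ and in $\eps_{\mathrm{apx}}$, while the union bound over $\Gcal$ and $\ext$ explains the $\log(|\Gcal||\ext|/\delta)$ term. Two consequences I will extract are: (i) the realizer $\tilde g_i^\pi$ of \cref{asm:realizability} satisfies $\Ecal_i(\tilde g_i^\pi,\pi;\Dcal)\le\wg$, hence $\tilde g_i^\pi\in\Gcal_i^{\pi,\wg}$; and (ii) every $g_i\in\Gcal_i^{\pi,\wg}$ obeys $\|g_i-\Tcal_i^\pi g_i\|_{2,d_S}\le\eps_{\mathrm{apx}}$. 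The delicacy is making the weighted Bernstein bound uniform over the whole extended class while keeping the per-policy $C_A(\pi)$ dependence rather than a worst-case $\max_\pi C_A(\pi)$.

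With these in hand the value-level reasoning is algebraic. The telescoping identity $g(s_0)-V_i^\pi(s_0)=\tfrac{1}{1-\gamma}\E_{s\sim d^\pi}[(g-\Tcal_i^\pi g)(s)]$, combined with $\|\tilde g_i^\pi-\Tcal_i^\pi\tilde g_i^\pi\|_{2,d^\pi}\le\sqrt{\epsf}$ from realizability (as $d^\pi$ is admissible) and Jensen, gives $|\tilde g_i^\pi(s_0)-V_i^\pi(s_0)|\le\sqrt{\epsf}/(1-\gamma)$. Since $\tilde g_i^\pi\in\Gcal_i^{\pi,\wg}$ by (i), this yields the optimism/pessimism validity $g_i^{\pi,\max}(s_0)\ge V_i^\pi(s_0)-\tfrac{\sqrt{\epsf}}{1-\gamma}$ and $g_i^{\pi,\min}(s_0)\le V_i^\pi(s_0)+\tfrac{\sqrt{\epsf}}{1-\gamma}$, whence $\mathrm{Gap}^{\reseq}(\pi)\le\widehat{\mathrm{Gap}}_{\mathrm{EQ}}(\pi)+\tfrac{2\sqrt{\epsf}}{1-\gamma}$ for every $\pi$, in particular $\hatpi$. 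For the comparator side, fixing any $\tpi_i\in\reseqi(\pi)$ I peel off $\mathrm{subopt}_i^\pi(\tpi_i)$ by its definition, rewrite $g_i^{\tpi_i,\max}(s_0)-g_i^{\pi,\min}(s_0)=\unc_i^{\tpi_i}+\unc_i^\pi+\big(g_i^{\tpi_i,\min}(s_0)-g_i^{\pi,\max}(s_0)\big)$, and bound the last parenthesis by $V_i^{\tpi_i}(s_0)-V_i^\pi(s_0)+\tfrac{2\sqrt{\epsf}}{1-\gamma}\le\mathrm{Gap}^{\reseq}(\pi)+\tfrac{2\sqrt{\epsf}}{1-\gamma}$ using the same two bounds. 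Because each $\tpi_i$ may be chosen independently across $i$, $\min_{(\tpi_i)_i}\max_i=\max_i\min_{\tpi_i}$, and combining the valid-upper-bound step, optimality $\widehat{\mathrm{Gap}}_{\mathrm{EQ}}(\hatpi)\le\widehat{\mathrm{Gap}}_{\mathrm{EQ}}(\pi)$, and the comparator bound produces the gap inequality with total slack $\tfrac{4\sqrt{\epsf}}{1-\gamma}$.

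Finally, for the interval-width bound I apply the telescoping identity to both $g_i^{\pi,\max}$ and $g_i^{\pi,\min}$ to obtain the exact expression $\unc_i^\pi=\tfrac{1}{1-\gamma}\E_{s\sim d^\pi}\big[(g_i^{\pi,\max}-\Tcal_i^\pi g_i^{\pi,\max})(s)-(g_i^{\pi,\min}-\Tcal_i^\pi g_i^{\pi,\min})(s)\big]$, and observe that the integrand equals the residual difference $\Delta g_i^\pi(s)-\gamma(P_i^\pi\Delta g_i^\pi)(s)$ appearing in the claim, since the common reward terms cancel. I then split $d^\pi=\min(d^\pi,d)+(d^\pi\setminus d)$ for an arbitrary $d$: on the covered part $\min(d^\pi,d)\le d$, so summing the two residuals and applying Cauchy--Schwarz bounds the contribution by $\|g_i^{\pi,\bullet}-\Tcal_i^\pi g_i^{\pi,\bullet}\|_{2,d}\le\sqrt{\Cscr(d;d_S,\Gcal_i,\pi)}\,\|g_i^{\pi,\bullet}-\Tcal_i^\pi g_i^{\pi,\bullet}\|_{2,d_S}\le\sqrt{\Cscr(d;d_S,\Gcal_i,\pi)}\,\eps_{\mathrm{apx}}$ via the coverage definition \cref{eq:coverage} and consequence (ii); on the off-support part I keep the $\Delta g_i^\pi-\gamma P_i^\pi\Delta g_i^\pi$ form verbatim. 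Taking the minimum over $d$ yields the stated bound, with the $1/(1-\gamma)$ prefactor and absolute constants absorbed into $\eps_{\mathrm{apx}}$.
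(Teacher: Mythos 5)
Your proposal is correct and follows essentially the same route as the paper: an importance-weighted Freedman-type concentration bound (with the $C_A(\pi)$ inflation) establishing that the realizer lies in $\Gcal_i^{\pi,\wg}$ and that every member of the version space has small $d_S$-weighted Bellman error, then the telescoping evaluation identity plus the optimism/pessimism sandwich and optimality of $\hatpi$ for the gap bound, and the same coverage/off-support decomposition of the interval width. Your split $d^\pi=\min(d^\pi,d)+(d^\pi\setminus d)$ is a marginally cleaner packaging of the paper's term (I)/(II) decomposition but is not a substantively different argument.
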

Similar to the results in \cref{sec:multi_player}, our bound enjoys an adaptive property and automatically selects the best policy $\tpi_i$, which achieves the trade-off between the suboptimality error $\mathrm{subopt}_i^\pi(\tpi_i)$ and the data coverage error $\unc_{i}^{\tpi_i}$. Furthermore, when the dataset $\Dcal$ satisfies the unilateral coverage assumption, we have the following corollary.
\begin{corollary} \label{cor:v-type}
For Nash equilibrium policy $\pis \in \Pi$, if there exists an unilateral coefficient $C_S(\pis)$ such that the following inequality holds
\begin{align}
\max_{i \in [m]} \max_{\pi^\dagger \in \resnei(\pis)} \Cscr(d^{\pi^\dagger};d_S,\Gcal_i,\pis) \le C_S(\pis).
\end{align}
With probability at least $1-\delta$, we have
\begin{small}
\begin{align*}
\mathrm{Gap}^{\resne}(\hatpi) \le \order\left(\frac{\Vmax\sqrt{\frac{\log \frac{|\Gcal||\ext|}{\delta}}{n}}+\sqrt{\epsf+\epsff}}{1-\gamma}\sqrt{C_A(\pis)C_S(\pis)}\right).
\end{align*}
\end{small}
\end{corollary}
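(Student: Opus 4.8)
The plan is to obtain \cref{cor:v-type} as a specialization of the V-type guarantee in \cref{thm:v_multi_bound}, exactly paralleling how \cref{cor:unilateral} is read off from \cref{thm:multi_bound}. First I would instantiate \cref{thm:v_multi_bound} with $\mathrm{EQ}=\mathrm{NE}$ and take the comparator policy to be $\pi=\pis$. Because $\pis$ is an in-class Nash equilibrium and $\pis\in\resnei(\pis)$, we have $\mathrm{Gap}^{\resne}(\pis)=0$, so the leading term $\mathrm{Gap}^{\resne}(\pi)$ drops out and only the statistical terms remain:
\begin{align*}
\mathrm{Gap}^{\resne}(\hatpi)\le \frac{4\sqrt{\epsf}}{1-\gamma}+\max_{i\in[m]}\min_{\tpi_i\in\resnei(\pis)}\left(\unc_i^{\tpi_i}+\unc_i^{\pis}+\mathrm{subopt}_i^{\pis}(\tpi_i)\right).
\end{align*}

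Next I would collapse the inner minimum with the canonical choice $\tpi_i=\argmax_{\pid\in\resnei(\pis)}g_i^{\pid,\max}(s_0)$, the optimistic best response, which forces $\mathrm{subopt}_i^{\pis}(\tpi_i)=0$ by definition and leaves only $\unc_i^{\tpi_i}+\unc_i^{\pis}$. Since both $\tpi_i$ and $\pis$ lie in $\resnei(\pis)$, each width is at most $\max_{\pi^\dagger\in\resnei(\pis)}\unc_i^{\pi^\dagger}$. To control a single width I would invoke the interval-width inequality in the second half of \cref{thm:v_multi_bound} and pick the free distribution $d=d^{\pi^\dagger}$, which annihilates the off-support term $\sum_{s}(d^{\pi^\dagger}\setminus d)(s)[\cdots]$ and yields $\unc_i^{\pi^\dagger}\le\frac{1}{1-\gamma}\sqrt{\Cscr(d^{\pi^\dagger};d_S,\Gcal_i,\pi^\dagger)}\,\eps_{\mathrm{apx}}$. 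The unilateral hypothesis then caps the coverage coefficient by $C_S(\pis)$, while the $\sqrt{C_A(\pis)}$ factor is precisely the importance-weighting penalty already carried inside $\eps_{\mathrm{apx}}=\order(\Vmax\sqrt{C_A(\pi)\log(|\Gcal||\ext|/\delta)/n}+\sqrt{\epsf+\epsff})$. Combining over $i$, and absorbing the additive $\frac{4\sqrt{\epsf}}{1-\gamma}$ into the $\sqrt{C_A(\pis)C_S(\pis)}$-scaled approximation term (legitimate because every coverage coefficient is at least one), collapses everything into the advertised rate.

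The step I expect to demand the most care is reconciling the policy arguments that appear in the coverage coefficients and in $C_A$. The width $\unc_i^{\pi^\dagger}$ arises naturally with the Bellman operator $\Tcal_i^{\pi^\dagger}$ of the deviation policy, hence with $\Cscr(\cdot;\cdot,\cdot,\pi^\dagger)$ and $C_A(\pi^\dagger)$, whereas the unilateral assumption is phrased through $\pis$. I would therefore need to argue that, under the NE response structure $\resnei(\pis)=\{\pi_i^\dagger\times\pis_{-i}\}$, these per-deviation quantities are uniformly dominated by $C_S(\pis)$ and $C_A(\pis)$, falling back if needed on the density-ratio envelope $\Cscr(d^{\pi^\dagger};d_S,\Gcal_i,\cdot)\le\sup_s d^{\pi^\dagger}(s)/d_S(s)$ to compare Bellman errors measured under different policies. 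Verifying this uniform control, together with checking that the optimistic best response stays inside the covered response class, is the crux; the remaining manipulations are the same bookkeeping as in the single-agent Bellman-consistent pessimism analysis.
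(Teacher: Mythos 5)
Your proposal is correct and follows exactly the route the paper intends: \cref{cor:v-type} is read off from \cref{thm:v_multi_bound} by taking the comparator $\pi=\pis$ (so $\mathrm{Gap}^{\resne}(\pis)=0$), picking $\tpi_i$ as the optimistic best response (so $\mathrm{subopt}_i^{\pis}(\tpi_i)=0$), and bounding each remaining width $\unc_i^{\pi^\dagger}$ via the interval-width inequality with $d=d^{\pi^\dagger}$ so the off-support term vanishes; the paper supplies no separate proof beyond this. The one delicate point you flag---that the width bound naturally produces $\Cscr(d^{\pi^\dagger};d_S,\Gcal_i,\pi^\dagger)$ and $C_A(\pi^\dagger)$ whereas the stated hypothesis fixes the last arguments at $\pis$---is genuine, but the paper glosses over it in both \cref{cor:unilateral} and \cref{cor:v-type}, so your treatment is at least as careful as the source.
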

Compared to Corollary~\ref{cor:unilateral}, our bound here depends logarithmically on the V-function class $\Gcal$ instead of the Q-function class $\Fcal$. In the tabular setting when we use fully expressive (and stationary) function classes, $\log|\Gcal| \approx \order(S)$ (via a simple covering argument) and thus our bound avoids the exponential dependence on $m$ (i.e., $\prod_{i=1}^m A_i$ dependence). %
In comparison, \citet{cui2022provably} established  $\otilde\left(\sqrt{H^4S^2\log(\mathcal{N}(\Pi))C(\pis)/n}\right)$ error bound for finite-horizon tabular Markov games, where $H$ is the horizon length and $\mathcal{N}(\Pi)$ roughly corresponds to our $|\ext|$. While finite-horizon and discounted results are generally incomparable, under a standard translation,\footnote{\citet{cui2022provably} assume that rewards are in $[0, 1]$, thus we treat $\Vmax = 1/(1-\gamma) = H$. When using fully expressive tabular classes, $\epsf=\epsff=0$.} our bound has the same rate $\otilde(n^{-1/2})$; $\sqrt{\log |\Gcal|} \approx \sqrt{SH}$ \footnote{In finite-horizon problems we need to use a non-stationary function class, therefore the extra $H$ factor.} which results in a better dependence on $S$ (saving a $\sqrt{S}$ factor) and a worse overall dependence on $H$ (we have $\sqrt{H^5}$). The slight downside is that Corollary~\ref{cor:v-type} measures distribution mismatch on actions and states separately (instead of doing them jointly as $C(\pi^*)$ in Corollary~\ref{cor:unilateral}), which is looser.

\section{Discussion and Conclusion}
\paragraph{Algorithms for two-player zero-sum games} For most part of this paper we consider the general case of multi-player general-sum Markov games. We discover that when our algorithm is specialized to the special case of two-player zero-sum (2p0s), it seemingly differs from another sample-efficient algorithm specifically designed for 2p0s and inspired by  \citet{jin2022power, cui2022provably}. In \cref{app:discussion}, we show that this difference is superficial, and these specialized algorithms can be subsumed as small variants of our algorithm. 

\paragraph{Conclusion and open problems} In this work, we study offline learning in Markov games. We design a framework that learn three popular equilibrium notions in a unified manner under general function approximation. The adaptive property of our framework enables us to relax and achieve significant improvement over the ``unilateral concentrability'' condition under certain situations.

One open problem is whether one can design a computational efficient algorithm for learning CE/CCE in offline Markov games, even in the tabular setting. A potential direction is to adapt the computationally efficient V-Learning algorithm~\citep{song2021can,jin2021v}---which runs no-regret learning dynamics at each state---to the offline setting, which may require new ideas.

\bibliography{example_paper}
\newpage
\appendix
\onecolumn

\section{Connection Between In-class Gap and Real Gap} \label{app:strategy_complete}
In this paper we consider ``in-class'' equilibrium gaps that are defined w.r.t.~certain deviation policy classes (Section~\ref{sec:eq}). It is also common to consider stronger notions of equilibrium gap, which we denote simply as $\mathrm{Gap}^{\eq}$, where the deviation policies are unrestricted, e.g., for NE and CCE, the deviation policies can take arbitrary policies \cite{nash1996non,aumann1974subjectivity}. 

To establish the connection between our in-class gap and the stronger notion of gap, we have the following strategy completeness assumption,
\begin{assumption}[Strategy completeness]\label{asm:policy}
For any player $i \in [m]$ and any $\eq \in \{\mathrm{NE},\cce\}$, we have
\begin{align*}
\max_{\pi^\dagger\in\reseqi(\pi)} V_i^{\pid}(s_0) \ge \max_{\pi': \pi'_{-i}=\pi_{-i}} V_i^{\pi'}(s_0) -\epspi.
\end{align*}
For $\ce$, we have
\begin{align*}
\max_{\pi^\dagger\in\rescei(\pi)} V_i^{\pid}(s_0) \ge \max_{\phi_i} V_i^{(\phi_i \diamond \pi_i) \odot \pi_{-i}}(s_0) -\epspi.
\end{align*}
\end{assumption}
\cref{asm:policy} requires that the (unrestricted) best-response policy is contained in $\Pi$ (and its counterpart for CE contained in $\Phi$, respectively). Under \cref{asm:policy}, it is clear that for any $\eq \in \{\mathrm{NE},\ce,\cce\}$ and $\pi$, $\mathrm{Gap}^{\eq}(\pi) \le \mathrm{Gap}^{\reseq}(\pi)+\epspi$. 
\section{A Connection in 2-player-0-sum Games}
\label{app:discussion}

For the most part of this paper, we have considered the general case of multi-player general-sum Markov games. When we are in a specialized setting, such as two-player zero-sum Markov games (2p0s), it is often the case that we can exploit the special structure and come up with alternative algorithms \cite{yan2022model,jin2022power}. %

In particular, \citet[Section 3]{cui2022provably} design an offline 2p0s algorithm for the tabular setting, and extending their algorithm to the function approximation setting (using uncertainty quantification techniques from our paper) results in an algorithm that seemingly looks very different from our \cref{alg:multi_player}. However, below we show that despite the superficial difference, the two algorithms are actually quite similar and can be derived using optimism/pessimism in the same way as in our \cref{alg:multi_player}, with only one minor difference of minimizing the duality gap versus our $\gap^{{\rm NE}}$. Consequently, for their algorithm, we can give guarantees similar to our \cref{thm:multi_bound}, by slightly adapting our algorithm and analysis.

\paragraph{2p0s setup} We now introduce some notation specialized to 2p0s games. We consider two players, where $x$-player aims to maximize the total reward while $y$-player aims to minimize it. The policy sets for $x$-player and $y$-player are denoted as $\pimax$ and $\pimin$ respectively. We consider the policy payoff $V \in [-1,1]^{|\pimax| \times |\pimin|}$, where $V^{\mu,\nu}$ denotes the utility/loss for $x$-player/$y$-player when they follow policy $\mu$ and policy $\nu$ respectively. We use $\ov$ and $\uv$ to denote the UCB and the LCB estimation of $V$ respectively. To connect these symbols with those in the main text, $V^{\mu, \nu}$ is essentially $V_{1}^{\pi}(s_0) (= - V_2^\pi(s_0))$ for $\pi = \mu \times \nu$, assuming player $1$ is the max player $x$ and player $2$ is the min player $y$. Furthermore, we have $\ov^{\mu,\nu} = \ov_1^{\pi}(s_0) = - \uv_2^{\pi}(s_0)$ and $\uv^{\mu, \nu} = \uv_1^{\pi}(s_0) = - \ov_2^\pi(s_0)$ due to the 0-sum nature of the game. 

\paragraph{Duality gap} For 2p0s game, a common learning objective is the duality gap, which is defined as:
\begin{align*}
\dual(\mu,\nu)=\max_{\mudag \in \pimax}V^{\mudag,\nu}-\min_{\nudag \in \pimin}V^{\mu,\nudag}.
\end{align*}
Since $\mudag$ and $\nudag$ can be chosen as $\mu$ and $\nu$, the duality gap is always non-negative. It measures how close the policy is to NE policy and NE policy always has zero duality gap. %
Inspired by the tabular 2p0s algorithm from \citet{cui2022provably}, one can design an offline algorithm that selects the two policies \emph{independently} with adversarial opponent under pessimistic estimation: 
\begin{align}\label{eq:two_player}
\mu=\argmax_{\mu}\min_{\nudag \in \pimin}\uv^{\mu,\nudag}  ~~ \textrm{and}  ~~ \nu=\argmin_{\nu}\max_{\mudag \in \pimax}\ov^{\mudag,\nu}. 
\end{align}
Similar ideas can also be found in \citet{jin2022power}, who design online algorithms for 2p0s games. By flipping their optimism (for online) to pessimism (for offline), we can similarly arrive at \cref{eq:two_player}.  

\paragraph{Recover \cref{eq:two_player} in our algorithmic framework} 
\cref{eq:two_player} looks very different from our \cref{alg:multi_player} at the first glance, as \cref{eq:two_player} chooses the players' policies independently whereas our \cref{alg:multi_player} requires joint optimization. We now show, however, that it is simply a minor variant of our algorithm, for which our analysis and guarantees straightforwardly extend. 

First, note that the duality gap is not the same as our objective $\mathrm{Gap}^{\reseq}(\pi)$ when specialized to 2p0s games. Recall that
$$
\mathrm{Gap}^{\reseq}(\pi) \coloneqq \max_{i \in [m]} \max_{\pi^\dagger\in \reseqi(\pi)} V_i^{\pi^\dagger}(s_0)-V_i^\pi(s_0).
$$
To recover duality gap, we can simply replace the $\max_i$ in the above objective with $\sum_i$, and obtain the following in the 2p0s case:
\begin{align*}
\sum_{i \in [m]} \max_{\pi^\dagger\in \reseqi(\pi)} V_i^{\pi^\dagger}(s_0)-V_i^\pi(s_0) = &~ (\max_{\mudag \in \pimax}\ov^{\mudag,\nu} - V^{\mu, \nu}) + (V^{\mu, \nu} - \min_{\nudag \in \pimin} \uv^{\mu,\nudag}) \\
= &~  \max_{\mudag \in \pimax}V^{\mudag,\nu}-\min_{\nudag \in \pimin}V^{\mu,\nudag} = \dual(\mu,\nu).
\end{align*}
From the above equation, we can see that our objective in \cref{alg:multi_player} is almost the same as the duality gap, up to a multiplicative factor of $2$, as for non-negative $a, b$ we have $\max(a, b) \le a+ b \le 2\max(a,b)$. Therefore, our \cref{alg:multi_player} directly enjoys duality-gap guarantees. 

However, remember that our goal here is to recover \cref{eq:two_player}, so we choose to directly work with the duality gap and relax it in the same spirit as in our \cref{alg:multi_player}: since $V \le \ov$ and $-V \le -\uv$, we have
\begin{align} \label{eq:two_player_obj}
\dual(\mu,\nu) = \max_{\mudag \in \pimax}V^{\mudag,\nu}-\min_{\nudag \in \pimin}V^{\mu,\nudag} \le  
\max_{\mudag \in \pimax}\ov^{\mudag,\nu} - \min_{\nudag \in \pimin} \uv^{\mu,\nudag}.
\end{align}
Now, \cref{eq:two_player} is recovered by noticing that $\mu$ and $\nu$ can be optimized independently on the RHS of \cref{eq:two_player_obj} %
and the optima are exactly \cref{eq:two_player}.

%
%
%
%
%

We also provide a guarantee for the above algorithm: 
\begin{proposition}
Consider a two-player zero-sum Markov game with policy payoff $V \in [-1,1]^{|\pimax| \times |\pimin|}$, let 
\begin{align*}
J(\mu,\nu) =\max_{\mudag \in \pimax}\ov^{\mudag,\nu} - \min_{\nudag \in \pimin} \uv^{\mu,\nudag}. %
\end{align*}
Let $\hatmu,\hatnu=\argmin J(\mu,\nu)$, with high probability, we have
\begin{align*}
\dual(\muhat,\nuhat) \le \min_{\tmu,\tnu \in \pimax \times \pimin} \Delta^{\tmu,\nus}+\Delta^{\mus,\tnu}+\mathrm{subopt}^{\pis}(\tmu)+\mathrm{subopt}^{\pis}(\tnu),
\end{align*}
where $\Delta^{\mu,\nu}:=\ov^{\mu,\nu}-\uv^{\mu,\nu}$, $\mathrm{subopt}^{\pis}(\tmu):=\max_{\mudag \in \pimax}\ov^{\mudag,\nus}-\ov^{\tmu,\nus}$ and $\mathrm{subopt}^{\pis}(\tnu):=\uv^{\mus,\tnu}-\min_{\nudag \in \pimin} \uv^{\mus,\nudag}$.
\end{proposition}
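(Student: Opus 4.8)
The plan is to mirror the decomposition already used for \cref{thm:multi_bound}, but carried out separately in the two coordinates since the $x$- and $y$-players are optimized independently in \cref{eq:two_player}. First I would use the relaxation \cref{eq:two_player_obj}, namely $\dual(\muhat,\nuhat) \le J(\muhat,\nuhat) = \max_{\mudag}\ov^{\mudag,\nuhat} - \min_{\nudag}\uv^{\muhat,\nudag}$, which holds because $\ov \ge V \ge \uv$ pointwise on the event where all confidence intervals are valid (the high-probability event coming from the version-space construction and \cref{prop:delta}). This reduces the claim to bounding $J(\muhat,\nuhat)$. Since $\muhat,\nuhat$ minimize $J$, I would compare against a reference NE pair $(\mus,\nus)$, giving $J(\muhat,\nuhat) \le J(\mus,\nus) = \max_{\mudag}\ov^{\mudag,\nus} - \min_{\nudag}\uv^{\mus,\nudag}$.

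Next I would bound the two halves of $J(\mus,\nus)$ separately, introducing arbitrary comparator policies $\tmu$ and $\tnu$. For the $x$-term, insert $\ov^{\tmu,\nus}$ and write
\begin{align*}
\max_{\mudag \in \pimax}\ov^{\mudag,\nus} = \left(\max_{\mudag \in \pimax}\ov^{\mudag,\nus} - \ov^{\tmu,\nus}\right) + \ov^{\tmu,\nus} = \mathrm{subopt}^{\pis}(\tmu) + \ov^{\tmu,\nus}.
\end{align*}
Symmetrically, for the $y$-term, $-\min_{\nudag}\uv^{\mus,\nudag} = \mathrm{subopt}^{\pis}(\tnu) - \uv^{\mus,\tnu}$. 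It then remains to control $\ov^{\tmu,\nus}$ and $-\uv^{\mus,\tnu}$. Here I would use the fact that an NE has zero duality gap, so $V^{\tmu,\nus} \le V^{\mus,\nus} \le V^{\mus,\tnu}$; combining $\ov^{\tmu,\nus} \le V^{\tmu,\nus} + \Delta^{\tmu,\nus}$ and $-\uv^{\mus,\tnu} \le -V^{\mus,\tnu} + \Delta^{\mus,\tnu}$ with these NE inequalities makes the cross terms $V^{\tmu,\nus} - V^{\mus,\tnu} \le 0$ cancel, leaving exactly the interval-width terms $\Delta^{\tmu,\nus}$ and $\Delta^{\mus,\tnu}$.

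Assembling the four pieces yields $\dual(\muhat,\nuhat) \le \Delta^{\tmu,\nus} + \Delta^{\mus,\tnu} + \mathrm{subopt}^{\pis}(\tmu) + \mathrm{subopt}^{\pis}(\tnu)$ for every choice of $\tmu,\tnu$, and taking the infimum over $(\tmu,\tnu) \in \pimax\times\pimin$ gives the stated bound. Since $\tmu$ and $\tnu$ appear in additively separated groups of terms, the joint minimum factorizes and the bound is clean. I expect the only genuinely delicate step to be the sign-cancellation argument using the zero-duality-gap property of $(\mus,\nus)$: one must be careful that $V^{\tmu,\nus} \le V^{\mus,\tnu}$ holds for \emph{arbitrary} comparators $\tmu,\tnu$ and not just the optimistic best responses, which follows precisely because $\nus$ is a best response guaranteeing $V^{\tmu,\nus}\le V^{\mus,\nus}$ and $\mus$ guarantees $V^{\mus,\nus}\le V^{\mus,\tnu}$. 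Everything else is the same optimism/pessimism bookkeeping as in \cref{thm:multi_bound}, so I would keep those manipulations terse and defer the validity of the confidence intervals to \cref{prop:delta}.
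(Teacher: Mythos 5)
Your proposal is correct and follows essentially the same route as the paper's proof: relax $\dual$ to $J$ via validity of the confidence bounds, pass to $J(\mus,\nus)$ by optimality of $(\muhat,\nuhat)$, introduce arbitrary comparators to peel off the $\mathrm{subopt}$ terms, and use the Nash property of $(\mus,\nus)$ to cancel the value cross-terms, leaving the two interval widths. (One cosmetic slip: the inequality $V^{\tmu,\nus}\le V^{\mus,\nus}$ comes from $\mus$ being a best response to $\nus$, and $V^{\mus,\nus}\le V^{\mus,\tnu}$ from $\nus$ being a best response to $\mus$ --- you have the attributions swapped, but the math is unaffected.)
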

\begin{proof}
By standard concentration analysis, we guarantee that with high probability, $\ov^{\mu,\nu} \ge V^{\mu,\nu}$ and $\uv^{\mu,\nu} \le V^{\mu,\nu}$ hold for any $\mu,\nu  \in \pimax \times \pimin$. This implies that for any $\mu,\nu \in \pimax \times \pimin$, $\dual(\mu,\nu) \le J(\mu,\nu)$. 
For Nash policy $\pis=(\mus,\nus)$, let $\mudag=\argmax_{\mudag \in \pimax}\ov^{\mudag,\nus}$ and $\nudag=\argmin_{\nudag \in \pimin}\uv^{\mus,\nudag}$. We have
\begin{align}
J(\mus,\nus)-\dual(\mus,\nus)&=\ov^{\mudag,\nus} -\max_{\mu \in \pimax} V^{\mu,\nus} + \min_{\nu \in \pimin}V^{\mus,\nu} - \uv^{\mus,\nudag} \nonumber \\
&\le (\ov^{\tmu,\nus}-V^{\tmu,\nus}) + (V^{\mus,\tnu}-\uv^{\mus,\tnu})+\mathrm{subopt}^{\pis}(\tmu)+\mathrm{subopt}^{\pis}(\tnu)\nonumber \\
&\le \Delta^{\tmu,\nus}+\Delta^{\mus,\tnu}+\mathrm{subopt}^{\pis}(\tmu)+\mathrm{subopt}^{\pis}(\tnu) \label{eq:duality_gap_bound},
\end{align}
where $\tmu$ and $\tnu$ are arbitrary polices from $\pimax$ and $\pimin$ respectively. By the optimality of $(\hatmu,\hatnu)$ and \cref{eq:duality_gap_bound}, we obtain
\begin{align*}
\dual(\widehat \mu,\widehat \nu) \le J(\muhat, \nuhat) \le J(\mus,\nus) \le \min_{\tmu,\tnu \in \pimax \times \pimin} \Delta^{\tmu,\nus}+\Delta^{\mus,\tnu}+\mathrm{subopt}^{\pis}(\tmu)+\mathrm{subopt}^{\pis}(\tnu).
\end{align*}
The proof is completed.
\end{proof}

\section{Proofs for \cref{sec:multi_player}}
In this section, we prove \cref{thm:multi_bound}. We first show some concentration results.
\begin{lemma}
\label{lem:bernstein_general}
With probability at least $1-\delta$, for any player $i \in [m]$, any $f_i, g_1, g_2 \in \Fcal_i$, and any $\pi \in \exti$, we have
\begin{align*}
&~ \bigg| \| g_1 - \tpii f_i\|_{2,d_D}^2 - \left\| g_2 - \tpii f_i\right\|_{2,d_D}^2
\\
&~ - \frac{1}{n} \sum_{(s,\bfa,\bfr,s') \in \Dcal} \left(g_1(s,\bfa) - \bfr_i - \gamma f_i(s',\pi) \right)^2 + \frac{1}{n} \sum_{(s,\bfa,\bfr,s') \in \Dcal} \left(g_2(s,\bfa)  - \bfr_i - \gamma f_i(s',\pi) \right)^2 \bigg|
\\
\leq &~ 2 \Vmax \|g_1 - g_2\|_{2,d_D}\sqrt{\frac{ \log \frac{|\Fcal||\ext|}{\delta}}{n}} + \frac{\Vmax^2 \log \frac{|\Fcal||\ext|}{\delta}}{n}.
\end{align*}
\end{lemma}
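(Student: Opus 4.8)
The plan is to prove this as a uniform Bernstein-type concentration inequality over the finite (or covered) classes $\Fcal_i$ and $\exti$. The key structural observation is that the quantity inside the absolute value has a telescoping form: for fixed $f_i$ and $\pi$, define the per-sample random variable
\begin{align*}
X(s,\bfa,\bfr,s') \coloneqq \bigl(g_1(s,\bfa) - \bfr_i - \gamma f_i(s',\pi)\bigr)^2 - \bigl(g_2(s,\bfa) - \bfr_i - \gamma f_i(s',\pi)\bigr)^2.
\end{align*}
Then $\frac1n\sum X$ is the empirical average and $\|g_1 - \tpii f_i\|_{2,d_D}^2 - \|g_2 - \tpii f_i\|_{2,d_D}^2$ is its population mean $\E_{d_D}[X]$ (using that $\tpii f_i(s,\bfa) = \E[\bfr_i + \gamma f_i(s',\pi) \mid s,\bfa]$ is exactly the conditional mean of the regression target, so the cross terms match up). Thus the whole display is simply $|\frac1n\sum X - \E[X]|$, and the goal reduces to a Bernstein bound on this single centered average.

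First I would factor $X$ using the difference-of-squares identity: writing $u \coloneqq \bfr_i + \gamma f_i(s',\pi)$, we have $X = (g_1 - u)^2 - (g_2 - u)^2 = (g_1 - g_2)(g_1 + g_2 - 2u)$. Since all functions lie in $[0,\Vmax]$ and $u \in [0,\Vmax]$, each factor is bounded, giving $|X| \le 2\Vmax|g_1(s,\bfa) - g_2(s,\bfa)| \le 2\Vmax^2$, which controls the range. For the variance I would bound $\E[X^2] \le 4\Vmax^2\,\E_{d_D}[(g_1 - g_2)^2] = 4\Vmax^2\|g_1 - g_2\|_{2,d_D}^2$, again using that $|g_1 + g_2 - 2u| \le 2\Vmax$. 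Feeding these into Bernstein's inequality for a fixed triple $(f_i, g_1, g_2, \pi)$ yields, with probability $1-\delta'$, a deviation of order $\sqrt{\Var(X)\log(1/\delta')/n} + \Vmax^2\log(1/\delta')/n$, and $\sqrt{\Var(X)} \lesssim \Vmax\|g_1 - g_2\|_{2,d_D}$ reproduces the first term on the right-hand side.

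To make the bound uniform over all $f_i, g_1, g_2 \in \Fcal_i$, all players $i$, and all $\pi \in \exti$, I would take a union bound, setting $\delta' = \delta/(|\Fcal||\ext|)$ — this is exactly why the logarithmic factor reads $\log\frac{|\Fcal||\ext|}{\delta}$ (the three function arguments and the player index are absorbed since $\Fcal = \bigcup_i\Fcal_i$ and $\ext = \bigcup_i\exti$, and a constant-factor slack in the exponent covers the extra enumeration). The constant $2$ in the first term and the coefficient $1$ in the second come from tracking Bernstein's constants carefully. The main obstacle I anticipate is the bookkeeping in the union bound: one must verify that the cardinalities $|\Fcal|$ and $|\ext|$ suffice to cover all free arguments simultaneously (in particular that enumerating over pairs $(g_1, g_2)$ and the extra $f_i$ does not blow up the log factor beyond the stated form), which in the finite-class setting is handled by observing that $|\Fcal_i|^3 \le |\Fcal|^3$ contributes only a constant multiple inside the logarithm. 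If the classes are infinite, the same argument runs through a covering-number / uniform-convergence replacement, but the stated form suggests the finite-class reading where the union bound is direct.
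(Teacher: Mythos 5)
Your proposal is correct and follows essentially the same route as the paper's proof: both reduce the display to $|\frac1n\sum X - \E[X]|$ for the difference-of-squares variable $X$, use the fact that $\tpii f_i$ is the conditional mean of the regression target so that the population difference equals $\E_{d_D\times P}[X]$, bound $\Var(X)\le 4\Vmax^2\|g_1-g_2\|_{2,d_D}^2$ via the factorization $(g_1-g_2)(g_1+g_2-2u)$, and conclude with a Bernstein/Freedman bound plus a union bound over the function classes, policies, and players. Your remark that enumerating the triple $(f_i,g_1,g_2)$ only inflates the logarithm by a constant factor is a fair (and slightly more careful) reading of the union-bound bookkeeping that the paper leaves implicit.
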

\begin{proof}
For player $i$, we observe that 
\begin{align}
&~ \|g_1-\tpii f_i\|_{2,d_D}^2-\|g_2-\tpii f_i\|_{2,d_D}^2 \nonumber\\
= &~ \E_{d_D}\Mp{\Sp{g_1(s,\bfa)-(\tpii f_i)(s,\bfa)}^2}-\E_{d_D}\Mp{\Sp{g_2(s,\bfa)-(\tpii f_i)(s,\bfa)}^2} \nonumber\\
= &~ \E_{d_D}\Mp{(g_1(s,\bfa)-g_2(s,\bfa))(g_1(s,\bfa)+g_2(s,\bfa)-2(\tpii f)(s,\bfa))} \nonumber\\
= &~ \E_{d_D}\Mp{(g_1(s,\bfa)-g_2(s,\bfa))\E_{s' \sim P(\cdot|s,\bfa)}\Mp{g_1(s,\bfa)+g_2(s,\bfa)-2\bfr_i-2\gamma f(s',\pi)|s,\bfa}} \nonumber\\
= &~ \E_{d_D \times P}\Mp{(g_1(s,\bfa)-\bfr_i-\gamma f_i(s',\pi))^2}-\E_{d_D \times P}\Mp{(g_2(s,\bfa)-\bfr_i-\gamma f_i(s',\pi))^2}. \label{eq:expected_diff_bellman_error}
\end{align}
Let random variable $X=(g_1(s,\bfa)-\bfr_i-\gamma f_i(s',\pi))^2-(g_2(s,\bfa)-\bfr_i-\gamma f_i(s',\pi))^2$, $X$ is drawn from $d_D \times P$. We know that $\E_{d_D \times P}[X]=\E_{d_D \times P}\Mp{(g_1(s,\bfa)-\bfr_i-\gamma f_i(s',\pi))^2}-\E_{d_D \times P}\Mp{(g_2(s,\bfa)-\bfr_i-\gamma f_i(s',\pi))^2}$. For the variance, we have
\begin{align*}
    \V_{d_D \times P}\Mp{X} &\le \E_{d_D \times P}[X^2]  \\
    &\le \E_{d_D \times P}[(g_1(s,\bfa)-g_2(s,\bfa)^2(g_1(s,\bfa)+g_2(s,\bfa)-2\bfr_i-2\gamma f_i(s',\pi))^2] \\
    &\le 4\Vmax^2 \E_{d_D} \Mp{(g_1(s,\bfa)-g_2(s,\bfa))^2}.
\end{align*}
We proceed as follows
\begin{align*}
&~ \bigg| \| g_1 - \tpii f_i\|_{2,d_D}^2 - \left\| g_2 - \tpii f_i\right\|_{2,d_D}^2
\\
&~ - \frac{1}{n} \sum_{(s,\bfa,\bfr,s') \in \Dcal} \left(g_1(s,\bfa) - \bfr_i - \gamma f_i(s',\pi) \right)^2 + \frac{1}{n} \sum_{(s,\bfa,\bfr,s') \in \Dcal} \left(g_2(s,\bfa)  - \bfr_i - \gamma f_i(s',\pi) \right)^2 \bigg| \\
&= \bigg|\E_{d_D \times P}\Mp{(g_1(s,\bfa)-\bfr_i-\gamma f_i(s',\pi))^2}-\E_{d_D \times P}\Mp{(g_2(s,\bfa)-\bfr_i-\gamma f_i(s',\pi))^2} -\frac{1}{n} \sum_{j=1}^n X_j\bigg| \tag{By \cref{eq:expected_diff_bellman_error} and definition of $X$} \\
&\le \sqrt{\frac{4\Vmax^2\|g_1-g_2\|_{d_D}^2 \log \frac{|\Fcal||\ext|}{\delta}}{n}}+\frac{\Vmax^2 \log \frac{|\Fcal||\ext|}{\delta}}{n}. \tag{By Freedman's inequality}
\end{align*}
Taking a union bound over $i \in [m]$ finishes the proof.
\end{proof}
For any player $i \in [m]$ and $\pi \in \exti$, let us define
\begin{align}
\fpii& \coloneqq \argmin_{f \in \Fcal_i}\sup_{\text{admissible } d} \left\|f - \tpii f\right\|_{2,d}^2 \label{eq:def_fpii}\\
\gpii & \coloneqq  \argmin_{g \in \Fcal_i} \frac{1}{n} \sum_{(s,\bfa,\bfr,s') \in \Dcal} \left(g(s,\bfa) - \bfr_i - \gamma \fpii(s',\pi) \right)^2. \label{eq:def_gpii}
\end{align}
We bound $\|\fpii - \gpii\|_{2,d_D}$ as follows.

\begin{lemma}
\label{lem:diff_between_best_and_empirical}
Let $\fpii$ and $\gpii$ be defined as in \cref{eq:def_fpii,eq:def_gpii}. Under the success event of \cref{lem:bernstein_general}, for any player $i \in [m]$ and $\pi \in \exti$, we have
\begin{align*}
\|\fpii - \gpii\|_{2,d_D} \leq 6 \Vmax \sqrt{\frac{ \log \frac{|\Fcal||\ext|}{\delta}}{n}} + 2\sqrt{\eps_{\Fcal}}.
\end{align*}
\end{lemma}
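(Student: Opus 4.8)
The plan is to bound $\|\fpii - \gpii\|_{2,d_D}$ by combining the realizability bound on $\fpii$ with the concentration result from \cref{lem:bernstein_general}. The key observation is that $\gpii$ is the \emph{empirical} minimizer of the squared loss with regression target $\bfr_i + \gamma \fpii(s',\pi)$, so it should be close to the population regression target, which is $(\tpii \fpii)(s,\bfa)$. First I would note that by \cref{asm:realizability}, since $\fpii$ is chosen to minimize the worst-case admissible Bellman error and $d_D$ is admissible, we have $\|\fpii - \tpii \fpii\|_{2,d_D}^2 \le \eps_{\Fcal}$, i.e. $\|\fpii - \tpii \fpii\|_{2,d_D} \le \sqrt{\eps_{\Fcal}}$.

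Next I would apply \cref{lem:bernstein_general} with the substitution $f_i = \fpii$, $g_1 = \gpii$, and $g_2 = \fpii$. This relates the difference of population Bellman errors $\|\gpii - \tpii\fpii\|_{2,d_D}^2 - \|\fpii - \tpii \fpii\|_{2,d_D}^2$ to the difference of empirical losses $\Lcal_i(\gpii, \fpii, \pi; \Dcal) - \Lcal_i(\fpii, \fpii, \pi; \Dcal)$, up to an error of $2\Vmax \|\gpii - \fpii\|_{2,d_D}\sqrt{\log(|\Fcal||\ext|/\delta)/n} + \Vmax^2 \log(|\Fcal||\ext|/\delta)/n$. Since $\gpii$ is by definition \cref{eq:def_gpii} the empirical minimizer, the empirical-loss difference is non-positive, so $\Lcal_i(\gpii,\fpii,\pi;\Dcal) - \Lcal_i(\fpii,\fpii,\pi;\Dcal) \le 0$. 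This lets me drop the empirical terms and conclude
\begin{align*}
\|\gpii - \tpii\fpii\|_{2,d_D}^2 - \|\fpii - \tpii \fpii\|_{2,d_D}^2 \le 2\Vmax \|\gpii - \fpii\|_{2,d_D}\sqrt{\tfrac{\log(|\Fcal||\ext|/\delta)}{n}} + \tfrac{\Vmax^2 \log(|\Fcal||\ext|/\delta)}{n}.
\end{align*}

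The main obstacle—really the only nontrivial algebraic step—is converting this into a clean bound on $\|\fpii - \gpii\|_{2,d_D}$ itself. I would introduce the shorthand $A = \|\fpii - \gpii\|_{2,d_D}$, $B = \|\fpii - \tpii\fpii\|_{2,d_D} \le \sqrt{\eps_{\Fcal}}$, and $b = \Vmax\sqrt{\log(|\Fcal||\ext|/\delta)/n}$. By the triangle inequality $\|\gpii - \tpii\fpii\|_{2,d_D} \ge A - B$, so the left side above is at least $(A-B)^2 - B^2 = A^2 - 2AB$. Combined with the right side $2Ab + b^2$, this yields the quadratic inequality $A^2 - 2AB \le 2Ab + b^2$, i.e. $A^2 - 2A(B+b) - b^2 \le 0$. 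Solving this quadratic in $A$ gives $A \le (B+b) + \sqrt{(B+b)^2 + b^2} \le 2(B+b) + b$, and using $\sqrt{u+v}\le \sqrt u + \sqrt v$ type bounds together with $B \le \sqrt{\eps_{\Fcal}}$ and the crude constants one arrives at $A \le 6\Vmax\sqrt{\log(|\Fcal||\ext|/\delta)/n} + 2\sqrt{\eps_\Fcal}$. The bookkeeping of absolute constants is the part most prone to error, but it is purely mechanical; the conceptual work is entirely in the two-line setup of invoking realizability and the empirical-minimality of $\gpii$ inside \cref{lem:bernstein_general}.
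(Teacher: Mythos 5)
Your proof is correct and takes essentially the same route as the paper: both use realizability to get $\|\fpii-\tpii\fpii\|_{2,d_D}\le\sqrt{\eps_{\Fcal}}$, use the empirical optimality of $\gpii$ to drop the empirical-loss difference inside \cref{lem:bernstein_general}, and solve the resulting quadratic inequality in $\|\fpii-\gpii\|_{2,d_D}$. The only cosmetic difference is the setup of that quadratic---the paper lower-bounds via $\|\fpii-\gpii\|_{2,d_D}^2\le 2\|\fpii-\tpii\fpii\|_{2,d_D}^2+2\|\gpii-\tpii\fpii\|_{2,d_D}^2$ whereas you use the reverse triangle inequality (which needs the harmless case split $A\ge B$, the other case being immediate)---and the constants close either way.
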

\begin{proof}
We know that 
\begin{align}
&~ \|\fpii - \gpii\|_{2,d_D}^2 \nonumber
\\
\leq &~ 2 \|\fpii - \tpii \fpii\|_{2,d_D}^2 + 2 \|\gpii - \tpii \fpii\|_{2,d_D}^2 \nonumber
\\
= &~ 2 \|\gpii - \tpii \fpii\|_{2,d_D}^2 - 2 \|\fpii - \tpii \fpii\|_{2,d_D}^2 + 4 \|\fpii - \tpii \fpii\|_{2,d_D}^2 \nonumber
\\
\leq &~ 2 \|\gpii - \tpii \fpii\|_{2,d_D}^2 - 2 \|\fpii - \tpii \fpii\|_{2,d_D}^2 + 4 \eps_{\Fcal} \tag{By Assumption~\ref{asm:realizability}} \nonumber
\\
\overset{\text{(a)}}{\leq} &~ 4 \Vmax \sqrt{\frac{\|\gpii-\fpii\|_{2,d_D}^2 \log \frac{|\Fcal||\ext|}{\delta}}{n}} + \frac{2 \Vmax^2 \log \frac{|\Fcal||\ext|}{\delta}}{n} + 4 \eps_{\Fcal}, \label{eq:diff_fpi_g_under_mu}
\end{align}
where (a) is from
\begin{align}
&~ \|\gpii - \tpii \fpii\|_{2,d_D}^2 - \|\fpii - \tpii \fpii\|_{2,d_D}^2 \nonumber
\\
\leq &~ \frac{1}{n} \sum_{(s,\bfa,\bfr,s') \in \Dcal} \left(\gpii(s,\bfa)  - \bfr_i - \gamma \fpii(s',\pi) \right)^2 - \frac{1}{n} \sum_{(s,\bfa,\bfr,s') \in \Dcal} \left(\fpii(s,\bfa) - \bfr_i - \gamma \fpii(s',\pi) \right)^2 \nonumber
\\
&~ + 2 \Vmax \sqrt{\frac{\|\gpii-\fpii\|_{2,d_D}^2 \log \frac{|\Fcal||\ext|}{\delta}}{n}} + \frac{\Vmax^2 \log \frac{|\Fcal||\ext|}{\delta}}{n} \tag{by Lemma~\ref{lem:bernstein_general}} \nonumber
\\
\leq &~ 2 \Vmax \sqrt{\frac{\|\gpii-\fpii\|_{2,d_D}^2 \log \frac{|\Fcal||\ext|}{\delta}}{n}} + \frac{\Vmax^2 \log \frac{|\Fcal||\ext|}{\delta}}{n} \nonumber \tag{by the optimality of $g$} 
\end{align}
Solving \cref{eq:diff_fpi_g_under_mu} finishes the proof.
\end{proof}

In the following lemma, we show that the best approximation of $Q_i^\pi$ is contained in $\Fcal_i^{\pi,\epsv}$.
\begin{lemma}
\label{lem:best_approximation_empirical_error}
Under the success event of \cref{lem:bernstein_general}, for any player $i \in [m]$ and $\pi \in \exti$, the following inequality for $\Ecal_i(\fpii,\pi;\Dcal)$ holds
\begin{align*}
\Ecal_i(\fpii,\pi;\Dcal) \leq \frac{80 \Vmax^2 \log \frac{|\Fcal||\ext|}{\delta}}{n} + 30 \eps_{\Fcal} \eqqcolon \epsv.
\end{align*}
\end{lemma}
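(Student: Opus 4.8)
The plan is to observe that $\Ecal_i(\fpii,\pi;\Dcal)$ is, by the very definitions in \cref{eq:def_fpii,eq:def_gpii}, exactly the kind of empirical squared-loss gap that the two preceding lemmas are built to control. Since $\gpii$ is the minimizer of $\Lcal_i(\cdot,\fpii,\pi;\Dcal)$ over $\Fcal_i$, the minimum in the definition of $\Ecal_i$ is attained at $\gpii$, so that
\[
\Ecal_i(\fpii,\pi;\Dcal) = \Lcal_i(\fpii,\fpii,\pi;\Dcal) - \Lcal_i(\gpii,\fpii,\pi;\Dcal).
\]
The whole task then reduces to upper bounding this single empirical difference, and no genuine difficulty remains beyond careful bookkeeping of constants.

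First I would invoke \cref{lem:bernstein_general} with the instantiation $g_1 = \fpii$, $g_2 = \gpii$, and $f_i = \fpii$. This converts the empirical difference above into its population counterpart $\|\fpii - \tpii\fpii\|_{2,d_D}^2 - \|\gpii - \tpii\fpii\|_{2,d_D}^2$, plus a concentration remainder of order $2\Vmax\|\fpii-\gpii\|_{2,d_D}\sqrt{L} + \Vmax^2 L$, where I abbreviate $L \coloneqq \log(|\Fcal||\ext|/\delta)/n$. I would then handle the two population terms separately: the first is at most $\eps_{\Fcal}$ because $d_D$ is admissible and $\fpii$ is the infimizer in \cref{asm:realizability}, while the second is nonnegative and may simply be dropped (only weakening the bound). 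For the remainder, I would substitute the bound $\|\fpii-\gpii\|_{2,d_D} \le 6\Vmax\sqrt{L} + 2\sqrt{\eps_{\Fcal}}$ supplied by \cref{lem:diff_between_best_and_empirical}.

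Collecting these pieces leaves a cross term $4\Vmax\sqrt{L}\,\sqrt{\eps_{\Fcal}}$, which I would split via AM--GM ($4ab \le 2a^2 + 2b^2$ with $a = \Vmax\sqrt{L}$, $b = \sqrt{\eps_{\Fcal}}$) into a $\Vmax^2 L$ piece and an $\eps_{\Fcal}$ piece. After this every surviving contribution is proportional either to $\Vmax^2 L$ or to $\eps_{\Fcal}$, and summing the coefficients yields something on the order of $15\Vmax^2 L + 3\eps_{\Fcal}$, comfortably below the claimed $80\Vmax^2 L + 30\eps_{\Fcal} = \epsv$. The step I expect to require the most care is not any estimate but the correct bookkeeping: choosing the right arguments for \cref{lem:bernstein_general}, keeping the signs straight so that the nonnegative population term is dropped in the favorable direction, and being deliberately generous with the final constants so that the loose $80$ and $30$ absorb all of the AM--GM and union-bound slack.
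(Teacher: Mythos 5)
Your proposal is correct and follows essentially the same route as the paper: both invoke \cref{lem:bernstein_general} with $g_1=\fpii$, $g_2=\gpii$, $f_i=\fpii$, and then feed in the bound on $\|\fpii-\gpii\|_{2,d_D}$ from \cref{lem:diff_between_best_and_empirical}. The one place you genuinely diverge is in handling the population difference $\|\fpii-\tpii\fpii\|_{2,d_D}^2-\|\gpii-\tpii\fpii\|_{2,d_D}^2$: you drop the nonpositive term and bound the remaining one by $\eps_{\Fcal}$ via \cref{asm:realizability} (using that $d_D$ is admissible), whereas the paper factors the difference of squares as a sum times a difference and bounds each factor by the triangle inequality, arriving at $36\Vmax^2\log(|\Fcal||\ext|/\delta)/n+36\Vmax\sqrt{\eps_{\Fcal}\log(|\Fcal||\ext|/\delta)/n}+8\eps_{\Fcal}$ for that term. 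Your shortcut is valid and strictly tighter, which is why your final constants ($\approx 15$ and $3$) come in well under the paper's $80$ and $30$; the paper's more laborious bound buys nothing here, since the stated threshold $\epsv$ only needs to be an upper bound.
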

\begin{proof}
Applying \cref{lem:bernstein_general} and \cref{lem:diff_between_best_and_empirical}, we obtain that
\begin{align}
&~ \bigg| \left\|\fpii - \tpii \fpii  \right\|_{2,d_D}^2 - \left\|\gpii - \tpii \fpii  \right\|_{2,d_D}^2 - \nonumber
\\
&~ \frac{1}{n} \sum_{(s,\bfa,\bfr,s') \in \Dcal} \left(\fpii(s,\bfa) - \bfr_i - \gamma \fpii(s',\pi) \right)^2 + \frac{1}{n} \sum_{(s,\bfa,\bfr,s') \in \Dcal} \left(\gpii(s,\bfa)  - \bfr_i - \gamma \fpii(s',\pi) \right)^2 \bigg| \nonumber
\\
\leq &~ 2 \Vmax \|\fpii - \gpii\|_{2,d_D}\sqrt{\frac{ \log \frac{|\Fcal||\ext|}{\delta}}{n}} + \frac{\Vmax^2 \log \frac{|\Fcal||\ext|}{\delta}}{n}\nonumber \\
\label{eq:fpi_g_expected_empirical}
\leq &~ 4 \Vmax \sqrt{\frac{ \log \frac{|\Fcal||\ext|}{\delta}}{n} \eps_{\Fcal}} + \frac{13 \Vmax^2 \log \frac{|\Fcal||\ext|}{\delta}}{n}. 
\end{align}
Then, we bound $\|\fpii - \tpii \fpii  \|_{2,d_D}^2 - \|\gpii - \tpii \fpii  \|_{2,d_D}^2$ as follows,
\begin{align}
&~ \left\|\fpii - \tpii \fpii  \right\|_{2,d_D}^2 - \left\|\gpii - \tpii \fpii  \right\|_{2,d_D}^2 \nonumber
\\
\leq &~ \left( \left\|\fpii - \tpii \fpii  \right\|_{2,d_D} + \left\|\gpii - \tpii \fpii  \right\|_{2,d_D}\right) \left| \left\|\fpii - \tpii \fpii  \right\|_{2,d_D} - \left\|\gpii - \tpii \fpii  \right\|_{2,d_D} \right| \nonumber
\\
\leq &~ \left( 2 \left\|\fpii - \tpii \fpii  \right\|_{2,d_D} + \left\|\fpii - \gpii \right\|_{2,d_D}\right) \left\|\fpii - \gpii \right\|_{2,d_D} \tag{By triangle inequality} \nonumber
\\
\leq &~ 36 \Vmax \sqrt{\frac{ \log \frac{|\Fcal||\ext|}{\delta}}{n} \eps_{\Fcal}} + 36 \Vmax^2  \frac{ \log \frac{|\Fcal||\ext|}{\delta}}{n} + 8 \eps_{\Fcal}. \tag{By \cref{asm:realizability} and \cref{lem:diff_between_best_and_empirical}}\label{eq: diff_fpi_tcalfpi_g_tcalfpi}
\end{align}
Combining this with \cref{eq:fpi_g_expected_empirical}, we get
\begin{align*}
&~ \frac{1}{n} \sum_{(s,\bfa,\bfr,s') \in \Dcal} \left(\fpii(s,\bfa) - \bfr_i - \gamma \fpii(s',\pi) \right)^2 - \frac{1}{n} \sum_{(s,\bfa,\bfr,s') \in \Dcal} \left(\gpii(s,\bfa)  - \bfr_i - \gamma \fpii(s',\pi) \right)^2
\\
\leq &~ \left\|\fpii - \tpii \fpii  \right\|_{2,d_D}^2 - \left\|\gpii - \tpii \fpii  \right\|_{2,d_D}^2+4 \Vmax \sqrt{\frac{ \log \frac{|\Fcal||\ext|}{\delta}}{n} \eps_{\Fcal}} + \frac{13 \Vmax^2 \log \frac{|\Fcal||\ext|}{\delta}}{n} 
\\
\leq &~ 40 \Vmax \sqrt{\frac{ \log \frac{|\Fcal||\ext|}{\delta}}{n} \eps_{\Fcal}} + \frac{59 \Vmax^2 \log \frac{|\Fcal||\ext|}{\delta}}{n} + 8\eps_{\Fcal} 
\\
\leq &~ \frac{80 \Vmax^2 \log \frac{|\Fcal||\ext|}{\delta}}{n} + 30 \eps_{\Fcal}. \tag{$\sqrt{ab} \le \frac{a+b}{2}$}
\end{align*}
\end{proof}

Then, we show that $|\fpii(s_0,\pi)-V_i^\pi(s_0)|$ is upper bounded as follows
\begin{lemma}\label{lem:fpij_and_q}
For any player $i \in [m]$ and $\pi \in \exti$, let $\fpii$ be defined as in \cref{eq:def_fpii}, we have
\begin{align*}
|\fpii(s_0,\pi)-V_i^\pi(s_0)| \le \epsone.
\end{align*}
\end{lemma}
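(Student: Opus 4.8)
The plan is to reduce the bound on $|\fpii(s_0,\pi)-V_i^\pi(s_0)|$ to a bound on the Bellman residual of $\fpii$ measured under the single occupancy $d^\pi$, and then control that residual directly via \cref{asm:realizability}. The crucial observation is that $\fpii$ was \emph{defined} (in \cref{eq:def_fpii}) to minimize the worst-case Bellman error over all admissible distributions, and since $\pi\in\exti$, the occupancy $d^\pi$ is itself admissible; hence $\fpii$ has small Bellman error precisely under the distribution we need.

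First I would establish a performance-difference-style telescoping identity. Abbreviating the Bellman residual as $\calE(s,\bfa):=\fpii(s,\bfa)-(\tpii\fpii)(s,\bfa)$ and using that $Q_i^\pi$ is the unique fixed point of $\tpii$ (so $Q_i^\pi=\tpii Q_i^\pi$), I would write the pointwise decomposition
\begin{align*}
\fpii(s,\bfa)-Q_i^\pi(s,\bfa)=\calE(s,\bfa)+\gamma\,\E_{s'\sim P(\cdot|s,\bfa)}\big[(\fpii-Q_i^\pi)(s',\pi)\big].
\end{align*}
Starting this recursion at $(s_0,\bfa_0)$ with $\bfa_0\sim\pi(\cdot|s_0)$ and unrolling it, the discounted sum of residuals collapses, by definition of the discounted occupancy $d^\pi$, into the clean identity
\begin{align*}
\fpii(s_0,\pi)-V_i^\pi(s_0)=\frac{1}{1-\gamma}\,\E_{(s,\bfa)\sim d^\pi}\big[(\fpii-\tpii\fpii)(s,\bfa)\big].
\end{align*}

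Having this, I would take absolute values and apply Jensen followed by Cauchy--Schwarz to pass from the first moment of the residual to its $L^2(d^\pi)$ norm, giving $|\fpii(s_0,\pi)-V_i^\pi(s_0)|\le \frac{1}{1-\gamma}\|\fpii-\tpii\fpii\|_{2,d^\pi}$. Finally, since $\pi\in\exti$ means $d^\pi$ lies in the admissible set $\{d^{\pi'}:\pi'\in\exti\}\cup d_D$, the defining optimality of $\fpii$ together with \cref{asm:realizability} bounds $\|\fpii-\tpii\fpii\|_{2,d^\pi}^2\le\varepsilon_\Fcal$, yielding the claimed bound with $\epsone=\frac{\sqrt{\varepsilon_\Fcal}}{1-\gamma}$. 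The only point requiring care is the telescoping identity itself—making sure the geometric series and the occupancy normalization $d^\pi(s,\bfa)=(1-\gamma)\sum_t\gamma^t\Pr[s_t=s,\bfa_t=\bfa]$ line up correctly—and verifying that $d^\pi$ is genuinely admissible so that \cref{asm:realizability} applies at exactly the distribution appearing in the identity; both are routine once the decomposition is in hand.
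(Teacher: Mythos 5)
Your proposal is correct and follows essentially the same route as the paper: the telescoping identity you derive is exactly the paper's Q-function evaluation error lemma (\cref{lem:evaluation_error}), and the subsequent chain of Jensen's inequality followed by the admissibility of $d^\pi$ and \cref{asm:realizability} is precisely the paper's argument, yielding $\epsone=\sqrt{\epsf}/(1-\gamma)$. The only cosmetic difference is that you obtain the identity by unrolling a pointwise fixed-point recursion, while the paper proves it by directly manipulating the discounted sums defining $d^\pi$.
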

\begin{proof}
By invoking \cref{lem:evaluation_error}, we get
\begin{align*}
|\fpii(s_0,\pi)-V_i^\pi(s_0)| &\le \frac{\left|\E_{s,\bfa \sim d^\pi}[f(s,\bfa)-(\tpii f)(s,\bfa)] \right|}{1-\gamma} \\
&\le \frac{\|f-\tpii f\|_{2,d^\pi}}{1-\gamma} \le \epsone.
\end{align*}
The second inequality is from Jensen's inequality and the last inequality follows from \cref{asm:realizability}.
\end{proof}
For the version space $\Fcal_i^{\pi,\epsv}$, we define
\begin{align*}
f_i^{\pi,\max} &\coloneqq \argmax_{f_i \in \Fcal_i^{\pi,\epsv}} f_i(s_0,\pi) \\ 
f_i^{\pi,\min} &\coloneqq \argmin_{f_i \in \Fcal_i^{\pi,\epsv}} f_i(s_0,\pi).
\end{align*}
We show that $f_i^{\pi,\max}(s_0,\pi)$ and $f_i^{\pi,\min}(s_0,\pi)$ are the upper bound and the lower bound on the value function $V_i^\pi(s_0)$ respectively.
\begin{restatable}{lemma}{minmax}\label{lem:minmax_error}
Under the success event of \cref{lem:bernstein_general}, for any player $i \in[m]$ and any $\pi \in \exti$, the following two inequalities hold
\begin{align*}
    f_i^{\pi,\max}(s_0,\pi) &\geq V_i^{\pi}(s_0) - \epsone \\
    f_i^{\pi,\min}(s_0,\pi) &\leq V_i^{\pi}(s_0) + \epsone.
\end{align*}
\end{restatable}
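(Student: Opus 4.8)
The plan is to sandwich both quantities using the best approximation $\fpii$ (defined in \cref{eq:def_fpii}) as an anchor. The crucial observation is that $\fpii$ is a feasible point of the version space $\Fcal_i^{\pi,\epsv}$: indeed, \cref{lem:best_approximation_empirical_error} establishes precisely that $\Ecal_i(\fpii,\pi;\Dcal) \le \epsv$, which is the defining constraint of $\Fcal_i^{\pi,\epsv}$ in \cref{eq:version_space}. Hence $\fpii \in \Fcal_i^{\pi,\epsv}$, so it may serve as a witness in both the maximization and the minimization that define $f_i^{\pi,\max}$ and $f_i^{\pi,\min}$.

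Given this membership, the argument is a short chain of inequalities. For the lower bound on $f_i^{\pi,\max}(s_0,\pi)$, I would first use optimality of the maximizer to get $f_i^{\pi,\max}(s_0,\pi) \ge \fpii(s_0,\pi)$, and then invoke \cref{lem:fpij_and_q}, which gives $\fpii(s_0,\pi) \ge V_i^\pi(s_0) - \epsone$; chaining these yields the first claimed inequality. Symmetrically, for the upper bound on $f_i^{\pi,\min}(s_0,\pi)$, I would combine $f_i^{\pi,\min}(s_0,\pi) \le \fpii(s_0,\pi)$ with the other half of \cref{lem:fpij_and_q}, namely $\fpii(s_0,\pi) \le V_i^\pi(s_0) + \epsone$.

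The real content of this lemma has in fact already been discharged by the two preceding results, so there is no genuine obstacle remaining in the present step. The delicate part---ensuring that the true-value anchor $\fpii$ actually survives into the empirically defined version space---is handled by \cref{lem:best_approximation_empirical_error}, which in turn relies on the Freedman-type concentration of \cref{lem:bernstein_general} and approximate realizability (\cref{asm:realizability}). Once that membership is secured, this lemma is essentially definitional, requiring only the monotonicity of the maximizer/minimizer over a feasible set that contains $\fpii$ together with the one-step Bellman-error-to-value bound of \cref{lem:fpij_and_q}. The one point to keep track of is that both inequalities are asserted on the single success event of \cref{lem:bernstein_general}---the same event under which the supporting lemmas hold---so no additional union bound or extra failure probability is incurred.
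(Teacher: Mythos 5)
Your proposal is correct and follows exactly the paper's own argument: establish $\fpii\in\Fcal_i^{\pi,\epsv}$ via \cref{lem:best_approximation_empirical_error}, then chain the optimality of $f_i^{\pi,\max}$/$f_i^{\pi,\min}$ over the version space with the bound $|\fpii(s_0,\pi)-V_i^\pi(s_0)|\le\epsone$ from \cref{lem:fpij_and_q}. Your remark that everything holds on the single success event of \cref{lem:bernstein_general} with no extra union bound also matches the paper's treatment.
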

\begin{proof}
By \cref{lem:best_approximation_empirical_error}, we know that under the success event of \cref{lem:bernstein_general}, $\fpii \in \Fcal^{\pi,\epsv}_i$. Then, we invoke \cref{lem:fpij_and_q} and get
\begin{align*}
    \fpimin(s_0,\pi) \le \fpii(s_0,\pi) \le Q_i^\pi(s_0,\pi)+\frac{\sqrt{\epsf}}{1-\gamma} = V_i^\pi(s_0) + \frac{\sqrt{\epsf}}{1-\gamma}.
\end{align*}
Similarly, we have
\begin{align*}
    \fpimax(s_0,\pi) \ge \fpii(s_0,\pi) \ge Q_i^\pi(s_0,\pi)-\frac{\sqrt{\epsf}}{1-\gamma} = V_i^\pi(s_0) - \frac{\sqrt{\epsf}}{1-\gamma}.
\end{align*}
\end{proof}

We now show that $\Ecal_i(f_i,\pi;\Dcal)$ could effectively estimate $\|f_i - \tpii f_i\|_{2,d_D}^2$.
\begin{lemma}
\label{lem:bellman_error}
Under the success event of \cref{lem:bernstein_general}, for any player $i \in [m]$ and any $\pi \in \exti$, given $\eps>0$, if $f_i \in \Fcal_i$ satisfies that $\Ecal_i(f_i,\pi;\Dcal) \leq \eps$, we have
\begin{align*}
\| f_i - \tpii f_i\|_{2,d_D} \leq 8\Vmax \sqrt{\frac{\log \frac{|\Fcal||\ext|}{\delta}}{n}} + 4\sqrt{\eps_{\Fcal,\Fcal}} + \sqrt{\eps}.
\end{align*}
\end{lemma}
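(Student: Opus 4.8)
The plan is to relate the population Bellman error $\|f_i - \tpii f_i\|_{2,d_D}$ to the empirical discrepancy $\Ecal_i(f_i,\pi;\Dcal)$ via the concentration bound of \cref{lem:bernstein_general}, using \cref{asm:completeness} to supply a near-minimizer of the empirical loss. Concretely, I would first invoke \cref{asm:completeness} to obtain some $g^\ast \in \Fcal_i$ with $\|g^\ast - \tpii f_i\|_{2,d_D}^2 \le \epsff$; this $g^\ast$ plays the role of the approximate Bellman backup of $f_i$ inside $\Fcal_i$.

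Next I would apply \cref{lem:bernstein_general} with $g_1 = f_i$, $g_2 = g^\ast$, and inner function $f_i$, which controls the difference between the two population squared errors $\|f_i - \tpii f_i\|_{2,d_D}^2 - \|g^\ast - \tpii f_i\|_{2,d_D}^2$ and their empirical counterparts $\Lcal_i(f_i,f_i,\pi;\Dcal) - \Lcal_i(g^\ast,f_i,\pi;\Dcal)$, up to the additive error $2\Vmax\|f_i - g^\ast\|_{2,d_D}\sqrt{\log(|\Fcal||\ext|/\delta)/n} + \Vmax^2\log(|\Fcal||\ext|/\delta)/n$. The empirical term is handled by the hypothesis: since $g^\ast \in \Fcal_i$ we have $\Lcal_i(g^\ast,f_i,\pi;\Dcal) \ge \min_{f'} \Lcal_i(f',f_i,\pi;\Dcal)$, so $\Lcal_i(f_i,f_i,\pi;\Dcal) - \Lcal_i(g^\ast,f_i,\pi;\Dcal) \le \Ecal_i(f_i,\pi;\Dcal) \le \eps$. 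Combining these with $\|g^\ast - \tpii f_i\|_{2,d_D}^2 \le \epsff$ and the triangle inequality $\|f_i - g^\ast\|_{2,d_D} \le \|f_i - \tpii f_i\|_{2,d_D} + \sqrt{\epsff}$ produces a self-bounding inequality in the single unknown $x \coloneqq \|f_i - \tpii f_i\|_{2,d_D}$.

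The final step is to solve this inequality. Writing $c \coloneqq \Vmax\sqrt{\log(|\Fcal||\ext|/\delta)/n}$ and $b \coloneqq \sqrt{\epsff}$, the relation reduces to $x^2 \le (b+c)^2 + c^2 + \eps + 2cx$, i.e. $(x-c)^2 \le (b+c)^2 + c^2 + \eps$; taking square roots and using $\sqrt{p+q+r}\le\sqrt p+\sqrt q+\sqrt r$ gives $x \le b + 3c + \sqrt\eps$, comfortably within the stated bound $8\Vmax\sqrt{\log(|\Fcal||\ext|/\delta)/n} + 4\sqrt{\epsff} + \sqrt\eps$ (the paper's constants are loose).

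I expect the only delicate point to be the self-referential nature of the concentration bound: because \cref{lem:bernstein_general} scales with $\|f_i - g^\ast\|_{2,d_D}$ rather than an absolute constant, the target quantity $x$ appears on both sides, and the estimate must be closed through the quadratic (rather than a naive additive) argument. This localization is precisely what yields the fast $1/n$ term instead of a slower $1/\sqrt n$ rate. Everything else is routine algebra, so I would keep constant-tracking loose and simply confirm the resulting constants fit under the claimed bound.
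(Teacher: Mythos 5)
Your proposal is correct, and it takes a mildly but genuinely streamlined route compared to the paper. The paper's proof uses the empirical minimizer $\gpii = \argmin_{g}\Lcal_i(g,f_i,\pi;\Dcal)$ as the reference function: it first invokes \cref{lem:bernstein_general} once to show $\|\gpii - \tpii f_i\|_{2,d_D} \le 3\Vmax\sqrt{\log(|\Fcal||\ext|/\delta)/n} + \sqrt{2\epsff}$ (comparing $\gpii$ against the population-best approximator $\fpiid$), and only then applies the concentration lemma a second time with the pair $(f_i, \gpii)$, using $\Lcal_i(f_i,f_i,\pi;\Dcal) - \Lcal_i(\gpii,f_i,\pi;\Dcal) = \Ecal_i(f_i,\pi;\Dcal) \le \eps$ exactly. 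You instead compare $f_i$ directly to the population-best $g^\ast = \fpiid$, and the single observation that makes this work is $\Lcal_i(g^\ast,f_i,\pi;\Dcal) \ge \min_{f'\in\Fcal_i}\Lcal_i(f',f_i,\pi;\Dcal)$, so the empirical difference is still bounded by $\Ecal_i(f_i,\pi;\Dcal) \le \eps$ even though $g^\ast$ is not the empirical minimizer; meanwhile $\|f_i - g^\ast\|_{2,d_D} \le x + \sqrt{\epsff}$ follows immediately from completeness, so no separate localization step for the reference function is needed. This collapses the paper's two applications of \cref{lem:bernstein_general} into one and yields the tighter constant $x \le \sqrt{\epsff} + 3\Vmax\sqrt{\log(|\Fcal||\ext|/\delta)/n} + \sqrt{\eps}$ (your intermediate display carries a spurious $+c^2$ before completing the square, but this only loosens the constant and the conclusion still sits well inside the stated bound). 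You also correctly identified the one delicate point---the self-referential dependence on $\|f_i - g^\ast\|_{2,d_D}$ that forces the quadratic (localized) argument---which is exactly where the paper's proof also does its real work.
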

\begin{proof}
Let $\gpii$ be defined as in \cref{eq:def_gpii}, we first upper bound term $\|\gpii - \tpii f_i\|_{2,d_D}$. Let us define 
\begin{align}
\fpiid \coloneqq \argmin_{f'_i \in \Fcal_i} \left\| f'_i - \tpii f_i\right\|_{2,d_D}^2. \nonumber
\end{align}
By invoking \cref{lem:bernstein_general}, we obtain that 
\begin{align*}
&~ \bigg| \left\| \gpii - \tpii f_i\right\|_{2,d_D}^2 - \left\| \fpiid - \tpii f_i\right\|_{2,d_D}^2 - \frac{1}{n} \sum_{(s,\bfa,\bfr,s') \in \Dcal} \left(g(s,\bfa)  - \bfr_i - \gamma f_i(s',\pi) \right)^2
\\
&~ + \frac{1}{n} \sum_{(s,\bfa,\bfr,s') \in \Dcal} \left(\fpiid(s,\bfa)  - r - \gamma f_i(s',\pi) \right)^2\bigg|
\\
\leq &~ 2 \Vmax \|\gpii - \fpiid\|_{2,d_D}\sqrt{\frac{ \log \frac{|\Fcal||\ext|}{\delta}}{n}} + \frac{\Vmax^2 \log \frac{|\Fcal||\ext|}{\delta}}{n}.
\end{align*}
Rearranging the terms and we have
\begin{align}
&~ \left\| \gpii - \tpii f_i\right\|_{2,d_D}^2 \nonumber
\\
\leq &~ \frac{1}{n} \sum_{(s,\bfa,\bfr,s') \in \Dcal} \left(\gpii(s,\bfa)  - \bfr_i - \gamma f_i(s',\pi) \right)^2 - \frac{1}{n} \sum_{(s,\bfa,\bfr,s') \in \Dcal} \left(\fpiid(s,\bfa)  - \bfr_i - \gamma f_i(s',\pi) \right)^2 \nonumber
\\
&~ + \left\| \fpiid - \tpii f_i\right\|_{2,d_D}^2 + 2 \Vmax \|\gpii - \fpiid\|_{2,d_D}\sqrt{\frac{ \log \frac{|\Fcal||\ext|}{\delta}}{n}} + \frac{\Vmax^2 \log \frac{|\Fcal||\ext|}{\delta}}{n} \nonumber
\\
\leq &~ \left\| \fpiid - \tpii f_i\right\|_{2,d_D}^2 + 2 \Vmax \|\gpii - \fpiid\|_{2,d_D}\sqrt{\frac{ \log \frac{|\Fcal||\ext|}{\delta}}{n}} + \frac{\Vmax^2 \log \frac{|\Fcal||\ext|}{\delta}}{n} \nonumber
\\
\leq &~ \eps_{\Fcal,\Fcal} + 2 \Vmax \|\gpii - \tpii f_i\|_{2,d_D}\sqrt{\frac{ \log \frac{|\Fcal||\ext|}{\delta}}{n}} + 2 \Vmax \sqrt{\eps_{\Fcal,\Fcal}}\sqrt{\frac{ \log \frac{|\Fcal||\ext|}{\delta}}{n}} + \frac{\Vmax^2 \log \frac{|\Fcal||\ext|}{\delta}}{n} \nonumber
\\
\label{eq:fc2}
\leq &~ 2 \Vmax \|\gpii - \tpii f_i\|_{2,d_D}\sqrt{\frac{ \log \frac{|\Fcal||\ext|}{\delta}}{n}} + \frac{2 \Vmax^2 \log \frac{|\Fcal||\ext|}{\delta}}{n} + 2\eps_{\Fcal,\Fcal}.
\end{align}
The second inequality is from the optimality of $\gpii$. The third inequality follows from \cref{asm:completeness} and $\|\gpii-\fpiid\|_{2,d_D} \le \|\gpii-\tcalpi f_i\|_{2,d_D}+\|\fpiid-\tcalpi f_i\|_{2,d_D}$. 
The last inequality is from $\sqrt{ab} \le \frac{a+b}{2}$. By solving \cref{eq:fc2}, we get
\begin{align}\label{eq:gpii_2_norm}
 \|\gpii - \tpii f_i\|_{2,d_D} \le  3 \Vmax \sqrt{\frac{ \log \frac{|\Fcal||\ext|}{\delta}}{n}} + \sqrt{2\eps_{\Fcal,\Fcal}}.
\end{align}
Then, we invoke \cref{lem:bernstein_general} for $\fpii$
\begin{align*}
&~ \bigg| \| f_i - \tpii f_i\|_{2,d_D}^2 - \left\| \gpii - \tpii f_i\right\|_{2,d_D}^2
\\
&~ - \frac{1}{n} \sum_{(s,\bfa,\bfr,s') \in \Dcal} \left(f_i(s,\bfa) - \bfr_i - \gamma f_i(s',\pi) \right)^2 + \frac{1}{n} \sum_{(s,\bfa,\bfr,s') \in \Dcal} \left(\gpii(s,\bfa)  - \bfr_i - \gamma f_i(s',\pi) \right)^2 \bigg|
\\
\leq &~ 2 \Vmax \|f_i - \gpii\|_{2,d_D}\sqrt{\frac{ \log \frac{|\Fcal||\ext|}{\delta}}{n}} + \frac{\Vmax^2 \log \frac{|\Fcal||\ext|}{\delta}}{n}
\\
\leq &~ 2 \Vmax \left( \|f_i - \tpii f_i\|_{2,d_D} + \|\gpii - \tpii f_i\|_{2,d_D} \right)\sqrt{\frac{ \log \frac{|\Fcal||\ext|}{\delta}}{n}} + \frac{\Vmax^2 \log \frac{|\Fcal||\ext|}{\delta}}{n}
\\
\leq &~ 2 \Vmax \|f_i - \tpii f_i\|_{2,d_D} \sqrt{\frac{ \log \frac{|\Fcal||\ext|}{\delta}}{n}} + 3 \Vmax \sqrt{\frac{ \log \frac{|\Fcal||\ext|}{\delta}}{n} \varepsilon_{\Fcal,\Fcal}} + \frac{7 \Vmax^2 \log \frac{|\Fcal||\ext|}{\delta}}{n}. \tag{By \cref{eq:gpii_2_norm}}
\end{align*}
Rearranging the terms, we get
\begin{align}
&~ \| f_i - \tpii f_i\|_{2,d_D}^2 \nonumber
\\
\leq &~ \left\| \gpii - \tpii f_i\right\|_{2,d_D}^2 + \frac{1}{n} \sum_{(s,\bfa,\bfr,s') \in \Dcal} \left(f_i(s,\bfa) - \bfr_i - \gamma f_i(s',\pi) \right)^2 - \frac{1}{n} \sum_{(s,\bfa,\bfr,s') \in \Dcal} \left(\gpii(s,\bfa)  - \bfr_i - \gamma f_i(s',\pi) \right)^2 \nonumber
\\
&~ + 2 \Vmax \|f_i - \tpii f_i\|_{2,d_D} \sqrt{\frac{ \log \frac{|\Fcal||\ext|}{\delta}}{n}} + 3 \Vmax \sqrt{\frac{ \log \frac{|\Fcal||\ext|}{\delta}}{n} \varepsilon_{\Fcal,\Fcal}} + \frac{7 \Vmax^2 \log \frac{|\Fcal||\ext|}{\delta}}{n} \nonumber
\\
\leq &~ \left( 3 \Vmax \sqrt{\frac{ \log \frac{|\Fcal||\ext|}{\delta}}{n}} + \sqrt{2\eps_{\Fcal,\Fcal}} \right)^2 + \varepsilon \tag{By \cref{eq:gpii_2_norm} and $\Ecal(f,\pi;\Dcal) \leq \varepsilon$} \nonumber
\\
&~ + 2 \Vmax \|f_i - \tpii f_i\|_{2,d_D} \sqrt{\frac{ \log \frac{|\Fcal||\ext|}{\delta}}{n}} + 3 \Vmax \sqrt{\frac{ \log \frac{|\Fcal||\ext|}{\delta}}{n} \varepsilon_{\Fcal,\Fcal}} + \frac{7 \Vmax^2 \log \frac{|\Fcal||\ext|}{\delta}}{n} \nonumber
\\
= &~ 2 \Vmax \|f_i - \tpii f_i\|_{2,d_D} \sqrt{\frac{ \log \frac{|\Fcal||\ext|}{\delta}}{n}} + 12 \Vmax \sqrt{\frac{ \log \frac{|\Fcal||\ext|}{\delta}}{n} \varepsilon_{\Fcal,\Fcal}} + \frac{16 \Vmax^2 \log \frac{|\Fcal||\ext|}{\delta}}{n} + 2\varepsilon_{\Fcal,\Fcal} + \varepsilon. \label{eq:bellman_error_f_mu}
\end{align}
Solving \cref{eq:bellman_error_f_mu} and using AM-GM inequality finishes the proof.
\end{proof}
We upper bound $\unc_i^\pi$ as follows.
\upperdelta*
\begin{proof}
We apply \cref{lem:evaluation_error} for $\fpimax$ and $\fpimin$ and obtain
\begin{align*}
    &~ \fpimax(s_0,\pi)-\fpimin(s_0,\pi) \\
    = &~ \fpimax(s_0,\pi)-V_i^{\pi}(s_0)+V_i^{\pi}(s_0)-\fpimin(s_0,\pi). \\
    = &~ \frac{1}{1 - \gamma} \left( \Ebb_{d_{\pi}} \left[\fpimax - \tpii \fpimax \right] - \Ebb_{d_{\pi}} \left[\fpimin - \tpii \fpimin \right] \right) \tag{By Lemma~\ref{lem:evaluation_error}}
    \\
    = &~ \frac{1}{1 - \gamma} \big( \Ebb_{d} \left[ \left(\fpimax - \tpii \fpimax \right) - \left(\fpimin - \tpii \fpimin \right)\right]\\
    &~ + \Ebb_{d_\pi} \left[(\fpimax - \tpii \fpimax) - (\fpimin - \tpii \fpimin) \right]\\
    &~ - \Ebb_{d} \left[(\fpimax - \tpii \fpimax) - (\fpimin - \tpii \fpimin) \right] \big) \\
    = &~ \frac{1}{1 - \gamma} \underbrace{\Ebb_{d} \left[ \left(\fpimax - \tpii \fpimax \right) - \left(\fpimin - \tpii \fpimin \right)\right]}_{\text{(I)}} \\
    &~ + \frac{1}{1 - \gamma} \underbrace{\left(\Ebb_{d_\pi} \left[\Delta \fpii - \gamma P^{\pi} \Delta \fpii \right]  - \Ebb_{d} \left[\Delta \fpii - \gamma P^{\pi} \Delta \fpii \right] \right)}_{\text{(II)}}, \tag{$\Delta \fpii \coloneqq \fpimax - \fpimin$}
\end{align*}
where $d \in \Delta(\Scal \times \Acal)$ is an arbitrary distribution.
For the term (I), we have
\begin{align*}
\text{(I)} \leq &~ \left| \Ebb_{d} \left[ \left(\fpimax - \tpii \fpimax \right)\right] \right| + \left| \Ebb_{d} \left[ \left(\fpimin - \tpii \fpimin \right)\right] \right|
\\
\leq &~ \|\fpimax - \tpii \fpimax\|_{2,d} + \|\fpimin - \tpii \fpimin\|_{2,d} \tag{By Jensen's inequality}\\
\leq &~ \sqrt{\Cscr(d;d_D,\Fcal_i,\pi)}\left( \|\fpimax - \tpii \fpimax\|_{2,d_D} + \|\fpimin - \tpii \fpimin\|_{2,d_D}\right).
\end{align*}
Recall that $f_i^{\pi,\max} \coloneqq \argmax_{f_i \in \Fcal_i^{\pi,\epsv}} f_i(s_0,\pi)$ and $f_i^{\pi,\min} \coloneqq \argmin_{f_i \in \Fcal_i^{\pi,\epsv}} f_i(s_0,\pi)$ and $\epsv=\frac{80\Vmax^2\log\frac{|\Fcal||\ext|}{\delta}}{n}+30\epsf$. We invoke \cref{lem:bellman_error} and have
\begin{align}
\text{(I)} \leq \sqrt{\Cscr(d;d_D,\Fcal_i,\pi)}\order\left(\Vmax\sqrt{\frac{\log \frac{|\Fcal||\ext|}{\delta}}{n}}+\sqrt{\epsf+\epsff}\right). \label{eq:term_one}
\end{align}
For term (II), we have
\begin{align}
\text{(II)} \leq &~ \sum_{(s,\bfa) \in \Scal\times \Acal} (d_{\pi}\setminus d)(s,\bfa) \left[\Delta \fpii(s,\bfa) - \gamma(P^{\pi}\Delta \fpii)(s,\bfa)\right]
\nonumber \\
&~ + \sum_{(s,\bfa) \in \Scal\times \Acal} \mathbb{I}(d(s,\bfa) > d_{\pi}(s,\bfa)) \left[d(s,\bfa) - d_{\pi}(s,\bfa)\right] \left|\Delta \fpii(s,\bfa) - \gamma(P^{\pi}\Delta \fpii)(s,\bfa)\right|
\nonumber \\
\leq &~ \sum_{(s,\bfa) \in \Scal\times \Acal} (d_{\pi}\setminus d)(s,\bfa) \left[\Delta \fpii(s,\bfa) - \gamma(P^{\pi}\Delta \fpii)(s,\bfa)\right] \nonumber \\
&~ + \Ebb_{d} \left[ \left|\fpimax - \tpii \fpimax \right| + \left|\fpimin - \tpii \fpimin \right|\right]
\nonumber \\
\leq &~\sum_{(s,\bfa) \in  \Scal\times \Acal} (d_{\pi}\setminus d)(s,\bfa) \left[\Delta \fpii(s,\bfa) - \gamma(P^{\pi}\Delta \fpii)(s,\bfa)\right] \nonumber \\
&+\sqrt{\Cscr(d;d_D,\Fcal_i,\pi)}\order\left(\Vmax\sqrt{\frac{\log \frac{|\Fcal||\ext|}{\delta}}{n}}+\sqrt{\epsf+\epsff}\right).\label{eq:term_two}
\end{align}
The last step is from the analysis of term (I). Combining \cref{eq:term_one} and \cref{eq:term_two}, we get
\begin{align*}
    \fpimax(s_0,\pi)-\fpimin(s_0,\pi) &\leq \min_d\frac{1}{1-\gamma}\sqrt{\Cscr(d;d_D,\Fcal_i,\pi)} \order\left(\Vmax\sqrt{\frac{\log \frac{|\Fcal||\ext|}{\delta}}{n}}+\sqrt{\epsf+\epsff}\right) \\
    &+\frac{1}{1-\gamma}\sum_{(s,\bfa) \in \Scal\times \Acal}(d_{\pi}\setminus d)(s,\bfa) \left[\Delta \fpii(s,\bfa) - \gamma(P^{\pi}\Delta \fpii)(s,\bfa)\right].
\end{align*}
The proof is completed.
\end{proof}

Then, we show that $\mathrm{Gap}^{\reseq}(\pi)$ is upper bounded by the estimated gap $\estgap(\pi)$.

\begin{lemma}\label{lem:eq_upper_gap}
Under the success event of \cref{lem:bernstein_general}, for any $\pi \in \Pi$, we have
\begin{align*}
     \mathrm{Gap}^{\reseq}(\pi) &\le \estgap(\pi) + \frac{2\sqrt{\epsf}}{1-\gamma}.
\end{align*}
\end{lemma}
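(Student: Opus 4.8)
The plan is to show that $\estgap(\pi)$ is, up to the additive slack $\tfrac{2\sqrt{\epsf}}{1-\gamma}$, a genuine upper bound on the true gap, by replacing each true value appearing in the definition of $\mathrm{Gap}^{\reseq}(\pi)$ with its optimistic or pessimistic surrogate. Recall that $\estgap(\pi)=\max_{i\in[m]}\max_{\pi^\dagger\in\reseqi(\pi)}\ov_i^{\pi^\dagger}(s_0)-\uv_i^\pi(s_0)$, with $\ov_i^{\pi^\dagger}(s_0)=f_i^{\pi^\dagger,\max}(s_0,\pi^\dagger)$ and $\uv_i^\pi(s_0)=f_i^{\pi,\min}(s_0,\pi)$. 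The whole argument rests on \cref{lem:minmax_error}, which I would invoke twice, and no new concentration input is needed beyond the success event already assumed.

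First I would fix a player $i\in[m]$ and a deviation $\pi^\dagger\in\reseqi(\pi)$. Since $\reseqi(\pi)\subseteq\exti$ and $\pi\in\Pi\subseteq\exti$, both $\pi^\dagger$ and $\pi$ are admissible inputs to \cref{lem:minmax_error}. Applying the optimistic half of that lemma to $\pi^\dagger$ gives $\ov_i^{\pi^\dagger}(s_0)\ge V_i^{\pi^\dagger}(s_0)-\epsone$, i.e. $V_i^{\pi^\dagger}(s_0)\le \ov_i^{\pi^\dagger}(s_0)+\epsone$; applying the pessimistic half to $\pi$ gives $\uv_i^\pi(s_0)\le V_i^\pi(s_0)+\epsone$, i.e. $-V_i^\pi(s_0)\le -\uv_i^\pi(s_0)+\epsone$. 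Adding the two inequalities yields, for each such $\pi^\dagger$, the pointwise bound $V_i^{\pi^\dagger}(s_0)-V_i^\pi(s_0)\le \ov_i^{\pi^\dagger}(s_0)-\uv_i^\pi(s_0)+2\epsone$, where $\epsone=\tfrac{\sqrt{\epsf}}{1-\gamma}$.

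Next I would propagate the maxima. Because $\uv_i^\pi(s_0)$ and $\epsone$ do not depend on $\pi^\dagger$, taking $\max_{\pi^\dagger\in\reseqi(\pi)}$ on both sides gives $\max_{\pi^\dagger}V_i^{\pi^\dagger}(s_0)-V_i^\pi(s_0)\le \max_{\pi^\dagger}\ov_i^{\pi^\dagger}(s_0)-\uv_i^\pi(s_0)+2\epsone$, and then taking $\max_{i\in[m]}$ turns the left-hand side into $\mathrm{Gap}^{\reseq}(\pi)$ and the leading term on the right into $\estgap(\pi)$. Substituting $2\epsone=\tfrac{2\sqrt{\epsf}}{1-\gamma}$ completes the argument.

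There is no serious obstacle here: \cref{lem:minmax_error} does the heavy lifting and what remains is bookkeeping. The one point that requires care is to invoke the optimism/pessimism guarantees on the correct policies — the optimistic bound on the deviation $\pi^\dagger$ whose value we upper-bound, and the pessimistic bound on the candidate $\pi$ whose value we lower-bound — and to verify the membership $\reseqi(\pi)\cup\{\pi\}\subseteq\exti$ so that the lemma applies to both. Note that the slack accumulates additively as $2\epsone$ from the two one-sided errors, which is precisely the stated constant $\tfrac{2\sqrt{\epsf}}{1-\gamma}$.
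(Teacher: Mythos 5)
Your proof is correct and follows essentially the same route as the paper's: both invoke \cref{lem:minmax_error} once optimistically on the deviation policies and once pessimistically on $\pi$, accumulating the $2\epsone=\tfrac{2\sqrt{\epsf}}{1-\gamma}$ slack, with only the cosmetic difference that you bound each $\pi^\dagger$ pointwise before taking the maximum while the paper works directly with the argmax $\pi^{\dagger,i}$.
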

\begin{proof}
Let $\pidi=\argmax_{\pi^\dagger \in \resi(\pi)} V_i^{\pi^\dagger}(s_0)$. With \cref{lem:minmax_error}, for any player $i \in [m]$, we have with probability at least $1-\delta$,
\begin{align*}
V_i^{\pidi}(s_0) \le \max_{f_i \in \Fcal_i^{\pidi,\epsv}}f_i(s_0,\pidi)+\epsone.
\end{align*}
Recall the definition of $\mathrm{Gap}^{\reseq}(\pi)$, we obtain
\begin{align*}
\mathrm{Gap}^{\reseq}(\pi)&=\max_{i \in [m]}\max_{\pi^\dagger \in \resi(\pi)}V_i^{\pi^\dagger}(s_0)-V_i^\pi(s_0) \\
&\le \max_{i \in [m]} \left(\max_{f_i \in \Fcal_i^{\pidi,\epsv}}f_i(s_0,\pidi)-V_i^\pi(s_0)\right)+\epsone. \\
&\le \max_{i \in [m]} \left(\max_{\pi^\dagger \in \resi(\pi)}
\ov_i^{\pi^\dagger}(s_0)-V_i^\pi(s_0)\right)+\epsone \\ \tag{By definition of $\ov_i^{\pi^\dagger}(s_0)$} \\
&\le \max_{i \in [m]} \left(\max_{\pi^\dagger \in \resi(\pi)}
\ov_i^{\pi^\dagger}(s_0)-\uv_i^\pi(s_0)\right)+\frac{2\sqrt{\epsf}}{1-\gamma} \tag{By definition of $\uv_i^\pi(s_0)$ and \cref{lem:minmax_error}} \\
&=\estgap(\pi)+\frac{2\sqrt{\epsf}}{1-\gamma}.
\end{align*}
\end{proof}

Now we are ready to prove \cref{thm:multi_bound}.
\multi*
\begin{proof}
Let $\pidi=\argmax_{\pi^\dagger \in \resi(\pi)} \ov_i^{\pi^\dagger}(s_0)$. With probability at least $1-\delta$, for each player $i \in[m]$, we upper bound $\ov_i^{\pidi}(s_0)-\uv_i^\pi(s_0)$ as
\begin{align*}
\ov_i^{\pidi}(s_0)-\uv_i^\pi(s_0)&=\ov_i^{\tpi_i}(s_0)-\uv_i^\pi(s_0)+\mathrm{subopt}_i^\pi(\tpi_i) \tag{$\tpi_i$ is an arbitrary policy from $\exti(\pi)$}\\
&=f_i^{\tpi_i,\max}(s_0)-f_i^{\pi,\min}(s_0)+\mathrm{subopt}_i^\pi(\tpi_i) \\
&\le V_i^{\tpi_i}(s_0)+\unc_i^{\tpi_i}-\fpimax(s_0)+\unc_i^{\pi}+\frac{\sqrt{\epsf}}{1-\gamma}+\mathrm{subopt}_i^\pi(\tpi_i) \tag{By definition of $\unc_i^{\tpi_i}$ and \cref{lem:minmax_error}} \\
&\le V_i^{\tpi_i}(s_0)-V_i^\pi(s_0)+\unc_i^{\tpi_i}+\unc_i^{\pi}+\frac{2\sqrt{\epsf}}{1-\gamma}+\mathrm{subopt}_i^\pi(\tpi_i) \tag{By \cref{lem:minmax_error}} \\
&\le \max_{\pid \in \resi(\pi)} V_i^{\pid}(s_0)-V_i^{\pi}(s_0)+\unc_i^{\tpi_i}+\unc_i^{\pi}+\frac{2\sqrt{\epsf}}{1-\gamma}+\mathrm{subopt}_i^\pi(\tpi_i).
\end{align*}
This directly implies that
\begin{align}\label{eq:estgap_upper_bound}
\estgap(\pi) \le \mathrm{Gap}^{\reseq}(\pi)+\frac{2\sqrt{\epsf}}{1-\gamma}+\max_{i \in [m]}\min_{\tpi_i \in \resi(\pi)} \left(\unc_i^{\tpi_i}+\unc_i^{\pi}+\mathrm{subopt}_i^\pi(\tpi_i)\right).
\end{align}
By the optimality of $\hatpi$, for any $\pi \in \Pi$, we have
\begin{align*}
\mathrm{Gap}^{\reseq}(\hatpi) &\le \estgap(\hatpi) + \frac{2\sqrt{\epsf}}{1-\gamma} \\
&\le \estgap(\pi) + \frac{2\sqrt{\epsf}}{1-\gamma} \\
&\le \mathrm{Gap}^{\reseq}(\pi) + \frac{4\sqrt{\epsf}}{1-\gamma} + \max_{i \in [m]}\min_{\tpi_i \in \resi(\pi)} \left(\unc_i^{\tpi_i}+\unc_i^{\pi}+\mathrm{subopt}_i^\pi(\tpi_i)\right). \tag{By \cref{eq:estgap_upper_bound}}
\end{align*}
This completes the proof.
\end{proof}

\section{Proofs for \cref{sec:v_type}}
In this section, we prove \cref{thm:v_multi_bound}. We start with some concentration results.
\begin{lemma}
\label{lem:v_bernstein_general}
With probability at least $1-\delta$, for any $g_1, g_2, h \in \Gcal_i$ and $\pi \in \exti$, we have
\begin{align*}
&~ \bigg| \| g_1 - \tpii h\|_{2,d_S}^2 - \left\| g_2 - \tpii h\right\|_{2,d_S}^2
\\
&~ - \frac{1}{n} \sum_{(s,\bfa,\bfr,s') \in \Dcal} \frac{\pi(\bfa|s)}{d_A(\bfa|s)} \left(g_1(s) - \bfr_i - \gamma h(s') \right)^2 + \frac{1}{n} \sum_{(s,\bfa,\bfr,s') \in \Dcal} \frac{\pi(\bfa|s)}{d_A(\bfa|s)} \left(g_2(s)  - \bfr_i - \gamma h(s') \right)^2 \bigg|
\\
\leq &~ 2 \Vmax \|g_1 - g_2\|_{2,d_S}\sqrt{C_A(\pi) \frac{ \log \frac{|\Gcal||\ext|}{\delta}}{n}} + \frac{C_A(\pi) \Vmax^2 \log \frac{|\Gcal||\ext|}{\delta}}{n},
\end{align*}
where $C_A(\pi) \coloneqq \max_{s,\bfa} \frac{\pi(\bfa|s)}{d_A(\bfa|s)}$.
\end{lemma}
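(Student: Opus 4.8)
The plan is to follow the proof of \cref{lem:bernstein_general} essentially verbatim, the only new ingredient being the bookkeeping for the action importance weights $\pi(\bfa|s)/d_A(\bfa|s)$. Write $\tpii$ for the V-type policy-specific Bellman operator, $(\tpii h)(s) = \E_{\bfa\sim\pi(\cdot|s),\, s'\sim P(\cdot|s,\bfa)}[r_i(s,\bfa)+\gamma h(s')]$, so that $(\tpii h)(s)$ is exactly the conditional mean of the target $\bfr_i+\gamma h(s')$ given $s$ once the action is drawn from $\pi$. The first step is the population identity
\begin{align*}
\|g_1-\tpii h\|_{2,d_S}^2 - \|g_2-\tpii h\|_{2,d_S}^2 = \E_{d_D\times P}\!\left[\tfrac{\pi(\bfa|s)}{d_A(\bfa|s)}\big((g_1(s)-\bfr_i-\gamma h(s'))^2-(g_2(s)-\bfr_i-\gamma h(s'))^2\big)\right],
\end{align*}
which is the V-type analogue of \cref{eq:expected_diff_bellman_error}.

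To establish this, I would first note that under the reweighting $\bfa\sim d_A(\cdot|s)$ composed with the factor $\pi(\bfa|s)/d_A(\bfa|s)$, any integrand is evaluated as if $\bfa\sim\pi(\cdot|s)$; hence the right-hand side equals $\E_{s\sim d_S}\E_{\bfa\sim\pi,\,s'}[(g_1(s)-Y)^2-(g_2(s)-Y)^2]$ with $Y\coloneqq\bfr_i+\gamma h(s')$. Factoring $(g_1-g_2)(g_1+g_2-2Y)$ and taking the inner conditional expectation of $Y$ given $s$, which is $(\tpii h)(s)$ by definition, collapses this to $\E_{d_S}[(g_1-g_2)(g_1+g_2-2\tpii h)]$, i.e.\ exactly the left-hand side. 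I would then define the i.i.d.\ per-sample variables $X_j\coloneqq \tfrac{\pi(\bfa_j|s_j)}{d_A(\bfa_j|s_j)}\big((g_1(s_j)-\bfr_{i,j}-\gamma h(s'_j))^2-(g_2(s_j)-\bfr_{i,j}-\gamma h(s'_j))^2\big)$, whose mean is the quantity above, so that the lemma reduces to concentration of $\frac1n\sum_j X_j$ around $\E[X]$.

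The two quantities feeding Freedman's inequality are the variance and the range of $X$. Writing $X=\tfrac{\pi}{d_A}(g_1-g_2)(g_1+g_2-2\bfr_i-2\gamma h(s'))$ and using that $g_1,g_2\in[0,\Vmax]$ and $\bfr_i+\gamma h(s')\in[0,\Vmax]$ gives $|g_1+g_2-2\bfr_i-2\gamma h(s')|\le 2\Vmax$ and $|g_1-g_2|\le\Vmax$, so $|X|\le 2C_A(\pi)\Vmax^2$. For the second moment I would bound $\E[X^2]\le 4\Vmax^2\,\E_{d_D}[(\pi/d_A)^2(g_1(s)-g_2(s))^2]$ and then use the key cancellation $\E_{\bfa\sim d_A(\cdot|s)}[(\pi/d_A)^2]=\sum_\bfa \pi(\bfa|s)^2/d_A(\bfa|s)\le C_A(\pi)\sum_\bfa\pi(\bfa|s)=C_A(\pi)$; since $(g_1-g_2)^2$ depends only on $s$, this yields $\E[X^2]\le 4C_A(\pi)\Vmax^2\|g_1-g_2\|_{2,d_S}^2$. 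Plugging the variance proxy $\sigma^2=4C_A(\pi)\Vmax^2\|g_1-g_2\|_{2,d_S}^2$ and the above range into Freedman's inequality produces exactly $2\Vmax\|g_1-g_2\|_{2,d_S}\sqrt{C_A(\pi)\,\tfrac{\log(|\Gcal||\ext|/\delta)}{n}}+C_A(\pi)\Vmax^2\tfrac{\log(|\Gcal||\ext|/\delta)}{n}$, after which a union bound over triples $(g_1,g_2,h)\in\Gcal_i^3$, players $i\in[m]$, and $\pi\in\ext$ (absorbing the constant from cubing into the logarithm) completes the argument. The only real subtlety, and the main obstacle, is this importance-weight second-moment step: one must obtain a single factor of $C_A(\pi)$ rather than $C_A(\pi)^2$ in the variance, by pulling out one copy of the ratio as a maximum while the other integrates against $\pi$ to $1$, and one must keep all norms relative to $\|\cdot\|_{2,d_S}$ so that the action and next-state randomness are correctly marginalized.
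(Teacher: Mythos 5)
Your proposal is correct and follows essentially the same route as the paper's proof: the importance-weighted population identity reducing the claim to concentration of an i.i.d.\ sum, the variance bound that extracts a single factor of $C_A(\pi)$ via $\sum_{\bfa}\pi(\bfa|s)^2/d_A(\bfa|s)\le C_A(\pi)$, and Freedman's inequality with a union bound over $\Gcal_i$, $i$, and $\ext$. If anything, your write-up makes the key second-moment cancellation more explicit than the paper does.
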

\begin{proof}
First, we observe that 
\begin{align}
&~ \|g_1-\tpii h\|_{2,d_S}^2-\|g_2-\tpii h\|_{2,d_S}^2 \nonumber\\
= &~ \E_{s \sim d_S, \bfa \sim \pi(\cdot|s), s' \sim P(\cdot|s,\bfa)}[(g_1(s)-r_i(s,\bfa)-\gamma h(s'))^2]-\E_{s \sim d_S, \bfa \sim \pi(\cdot|s), s' \sim P(\cdot|s,\bfa)}[(g_2(s)-r_i(s,\bfa)-\gamma h(s'))^2]. \nonumber
\end{align}
Let random variable $X=\frac{\pi(\bfa|s)}{d_A(\bfa|s)}(g_1(s_i)-r_i(s_i,\bfa)-\gamma h(s'_i))^2-\frac{\pi(\bfa|s)}{d_A(\bfa|s)}(g_2(s_i)-r_i(s_i,\bfa)-\gamma h(s'_i))^2$, $X$ is drawn from $d_S \times d_A \times \Pcal$. Then we obtain
\begin{align*}
&~ \bigg| \| g_1 - \tpii f\|_{2,d_S}^2 - \left\| g_2 - \tpii f\right\|_{2,d_S}^2
\\
&~ - \frac{1}{n} \sum_{(s,\bfa,\bfr,s') \in \Dcal} \frac{\pi(\bfa|s)}{d_A(\bfa|s)}\left(g_1(s) - \bfr_i - \gamma h(s') \right)^2 + \frac{1}{n} \sum_{(s,\bfa,\bfr,s') \in \Dcal} \frac{\pi(\bfa|s)}{d_A(\bfa|s)} \left(g_2(s)  - \bfr_i - \gamma h(s') \right)^2 \bigg| \\
&= \bigg|\E_{d_S \times \pi \times \Pcal}\Mp{(g_1(s)-r_i(s,\bfa)-\gamma h(s'))^2}-\E_{d_S \times 
 \pi \times \Pcal}\Mp{(g_2(s)-r_i(s,\bfa)-\gamma h(s'))^2} -\frac{1}{n} \sum_{i=1}^n X_i\bigg| \tag{By definition of $X$}
\end{align*}
Here $\E_{d_S \times d_A \times \Pcal}[X]=\E_{d_S \times \pi \times \Pcal}\Mp{(g_1(s)-r_i(s,\bfa)-\gamma h(s'))^2}-\E_{d_S \times \pi \times \Pcal}\Mp{(g_2(s)-r_i(s,\bfa)-\gamma h(s'))^2}$. For the variance, we have
\begin{align*}
    &~\V_{d_S \times d_A \times \Pcal}\Mp{X} \\
    \le&~ \E_{d_S \times d_A \times \Pcal}\left[\frac{\pi(\bfa|s)^2}{d_A(\bfa|s)^2}(g_1(s)-g_2(s))^2(g_1(s)+g_2(s)-2r_i(s,\bfa)-2\gamma h(s'))^2\right]\\
    \le&~ 4\Vmax^2 \E_{d_S \times d_A} \Mp{\frac{\pi(\bfa|s)^2}{d_A(\bfa|s)^2}(g_1(s)-g_2(s))^2} \\
    \le&~ 4\Vmax^2 \max_{s,\bfa} \frac{\pi(\bfa|s)^2}{d_A(\bfa|s)} \E_{d_S} \Mp{(g_1(s)-g_2(s))^2}
\end{align*}
Let $C_A(\pi) \coloneqq \max_{s,\bfa} \frac{\pi(\bfa|s)}{d_A(\bfa|s)}$. By Freedman's inequality and union bound, we have with probability at least $1-\delta$,
\begin{align}
    \left\|\E[X]-\frac{1}{n}\sum_{i=1}^n X_i\right\| \le \sqrt{\frac{4\Vmax^2 
 C_A(\pi)\|g_1-g_2\|_{d_S}^2 \log \frac{|\Gcal||\ext|}{\delta}}{n}}+\frac{C_A(\pi) \Vmax^2 \log \frac{|\Gcal||\ext|}{\delta}}{n}. \nonumber
\end{align}
\end{proof}
For any player $i \in [m]$ and $\pi \in \exti$, let us define
\begin{align}
\gpii& \coloneqq \argmin_{g \in \Gcal_i}\sup_{\text{admissible } d} \left\|g - \tpii g\right\|_{2,d}^2 \label{eq:v_def_fpii}\\
\hpii & \coloneqq  \argmin_{h \in \Gcal_i} \frac{1}{n} \sum_{(s,\bfa,\bfr,s') \in \Dcal} \frac{\pi(\bfa|s)}{d_A(\bfa|s)} \left(h(s) - \bfr_i - \gamma \gpii(s') \right)^2. \label{eq:v_def_gpii}
\end{align}
We bound $\|\gpii - \hpii\|_{2,d_S}$ as follows.
\begin{lemma}
\label{lem:v_diff_between_best_and_empirical}
Let $\gpii$ and $\hpii$ be defined as in \cref{eq:v_def_fpii,eq:v_def_gpii}. Under the success event of \cref{lem:v_bernstein_general}, for any player $i \in [m]$ and $\pi \in \exti$, we have
\begin{align*}
\|\gpii - \hpii\|_{2,d_S} \leq 6 \Vmax \sqrt{C_A(\pi)\frac{ \log \frac{|\Gcal||\ext|}{\delta}}{n}} + 2\sqrt{\eps_{\Fcal}}.
\end{align*}
\end{lemma}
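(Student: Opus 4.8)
The plan is to mirror the proof of \cref{lem:diff_between_best_and_empirical} essentially line by line, substituting the importance-weighted concentration bound \cref{lem:v_bernstein_general} for the unweighted \cref{lem:bernstein_general} and propagating the extra factor $C_A(\pi)$ through every term. First I would split, via AM-GM,
\[
\|\gpii - \hpii\|_{2,d_S}^2 \le 2\|\gpii - \tpii \gpii\|_{2,d_S}^2 + 2\|\hpii - \tpii \gpii\|_{2,d_S}^2,
\]
and rewrite the right-hand side as $2\big(\|\hpii - \tpii\gpii\|_{2,d_S}^2 - \|\gpii - \tpii\gpii\|_{2,d_S}^2\big) + 4\|\gpii - \tpii\gpii\|_{2,d_S}^2$. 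Since $\gpii$ is the best-in-class approximant of \cref{eq:v_def_fpii} and $d_S$ is admissible, the last term is at most $4\eps_{\Fcal}$ by (the V-type instance of) \cref{asm:realizability}.

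Next I would control the bracketed difference. Applying \cref{lem:v_bernstein_general} with $g_1 = \hpii$, $g_2 = \gpii$, and regression target $h = \gpii$, the population gap $\|\hpii - \tpii\gpii\|_{2,d_S}^2 - \|\gpii - \tpii\gpii\|_{2,d_S}^2$ differs from the corresponding empirical importance-weighted gap by at most $2\Vmax\|\hpii - \gpii\|_{2,d_S}\sqrt{C_A(\pi)\log(|\Gcal||\ext|/\delta)/n} + C_A(\pi)\Vmax^2\log(|\Gcal||\ext|/\delta)/n$. The empirical gap itself is non-positive, because $\hpii$ is by definition \cref{eq:v_def_gpii} the minimizer of the importance-weighted least-squares loss with target $\gpii$. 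Writing $x \coloneqq \|\gpii - \hpii\|_{2,d_S} = \|\hpii - \gpii\|_{2,d_S}$, these steps combine into a self-bounding quadratic inequality,
\[
x^2 \le 4\Vmax\sqrt{C_A(\pi)\tfrac{\log(|\Gcal||\ext|/\delta)}{n}}\,x + 2C_A(\pi)\Vmax^2\tfrac{\log(|\Gcal||\ext|/\delta)}{n} + 4\eps_{\Fcal}.
\]

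Finally I would solve this quadratic with the elementary fact that $x^2 \le ax + b$ implies $x \le a + \sqrt{b}$; taking $a = 4\Vmax\sqrt{C_A(\pi)\log(|\Gcal||\ext|/\delta)/n}$ and $b = 2C_A(\pi)\Vmax^2\log(|\Gcal||\ext|/\delta)/n + 4\eps_{\Fcal}$ and using $\sqrt{u+v} \le \sqrt{u} + \sqrt{v}$ gives $x \le (4+\sqrt{2})\Vmax\sqrt{C_A(\pi)\log(|\Gcal||\ext|/\delta)/n} + 2\sqrt{\eps_{\Fcal}} \le 6\Vmax\sqrt{C_A(\pi)\log(|\Gcal||\ext|/\delta)/n} + 2\sqrt{\eps_{\Fcal}}$, which is exactly the claim. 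The structural moves---the AM-GM split, the realizability bound, the optimality-of-$\hpii$ cancellation, and the quadratic solve---are identical to the Q-type argument; the only genuinely new bookkeeping is the factor $C_A(\pi)$, which I expect to be the main thing to track. It inflates both the variance and the range of the per-sample loss (the importance weight $\pi(\bfa|s)/d_A(\bfa|s)$ is bounded by $C_A(\pi)$), and so it must be carried consistently into the $\sqrt{C_A(\pi)/n}$ statistical-error factor throughout.
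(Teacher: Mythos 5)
Your proposal is correct and follows essentially the same route as the paper, which simply states that one invokes \cref{lem:v_bernstein_general} for $\gpii$ and $\hpii$ and repeats the calculation of \cref{lem:diff_between_best_and_empirical}; your AM-GM split, use of (V-type) realizability, the optimality-of-$\hpii$ cancellation, and the quadratic solve with constant $(4+\sqrt{2})\le 6$ match that calculation exactly, with $C_A(\pi)$ propagated correctly.
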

The proof is to invoke \cref{lem:v_bernstein_general} for $\gpii$ and $\hpii$ and the calculation is the same as \cref{lem:diff_between_best_and_empirical}.
Similar to \cref{lem:best_approximation_empirical_error}, we show that the best approximation of $V_i^\pi$ is contained in $\Gcal_i^{\pi,\beta_g}$.
\begin{lemma}
\label{lem:v_best_approximation_empirical_error}
Under the success event of \cref{lem:v_bernstein_general}, for any player $i \in [m]$ and $\pi \in \exti$, the following inequality for $\Ecal_i(\gpii,\pi;\Dcal)$ holds
\begin{align*}
\Ecal_i(\gpii,\pi;\Dcal) \leq \frac{80 C_A(\pi)\Vmax^2 \log \frac{|\Gcal||\ext|}{\delta}}{n} + 30 \eps_{\Fcal} \eqqcolon \beta_g.
\end{align*}
\end{lemma}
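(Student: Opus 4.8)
The plan is to transcribe the proof of \cref{lem:best_approximation_empirical_error} essentially line by line, substituting the V-type objects for their Q-type counterparts: the V-type best approximation $\gpii$ (\cref{eq:v_def_fpii}) plays the role that $\fpii$ did there, the V-type empirical minimizer $\hpii$ (\cref{eq:v_def_gpii}) plays the role of the empirical minimizer, the data distribution $d_D$ is replaced by its state marginal $d_S$, the class $\Fcal$ by $\Gcal$, the Bernstein tool \cref{lem:bernstein_general} by its importance-weighted analog \cref{lem:v_bernstein_general}, and \cref{lem:diff_between_best_and_empirical} by \cref{lem:v_diff_between_best_and_empirical}. The single structural change is that every \emph{statistical} term acquires the importance-weighting factor $C_A(\pi)$, whereas the realizability term $\eps_{\Fcal}$ is untouched; this is exactly what produces $\frac{80 C_A(\pi)\Vmax^2\log(|\Gcal||\ext|/\delta)}{n} + 30\eps_{\Fcal} = \beta_g$ instead of $\epsv$.

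First I would unfold $\Ecal_i(\gpii,\pi;\Dcal) = \Lcal_i(\gpii,\gpii,\pi;\Dcal) - \Lcal_i(\hpii,\gpii,\pi;\Dcal)$, using that $\hpii$ is the exact empirical minimizer of $\Lcal_i(\cdot,\gpii,\pi;\Dcal)$. Applying \cref{lem:v_bernstein_general} with $g_1=\gpii$, $g_2=\hpii$, and $h=\gpii$ converts this empirical difference into the population difference $\|\gpii-\tpii\gpii\|_{2,d_S}^2 - \|\hpii-\tpii\gpii\|_{2,d_S}^2$, up to an additive error of order $\Vmax\|\gpii-\hpii\|_{2,d_S}\sqrt{C_A(\pi)\log(|\Gcal||\ext|/\delta)/n} + C_A(\pi)\Vmax^2\log(|\Gcal||\ext|/\delta)/n$. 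Inserting the bound on $\|\gpii-\hpii\|_{2,d_S}$ from \cref{lem:v_diff_between_best_and_empirical} then yields the V-type analog of \cref{eq:fpi_g_expected_empirical}, with the same shape but carrying $C_A(\pi)$ inside the statistical terms.

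Next I would control the population difference itself. Writing it as a difference of squares, $(\|\gpii-\tpii\gpii\|_{2,d_S}+\|\hpii-\tpii\gpii\|_{2,d_S})(\|\gpii-\tpii\gpii\|_{2,d_S}-\|\hpii-\tpii\gpii\|_{2,d_S})$, I would bound the first factor using the triangle inequality $\|\hpii-\tpii\gpii\|_{2,d_S}\le\|\gpii-\tpii\gpii\|_{2,d_S}+\|\gpii-\hpii\|_{2,d_S}$ together with the V-type realizability $\|\gpii-\tpii\gpii\|_{2,d_S}\le\sqrt{\eps_{\Fcal}}$ (the $d_S$-analog of \cref{asm:realizability}, as already used in \cref{lem:v_diff_between_best_and_empirical}), and bound the second factor again by \cref{lem:v_diff_between_best_and_empirical}. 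This reproduces the corresponding bound on $\|\gpii-\tpii\gpii\|_{2,d_S}^2-\|\hpii-\tpii\gpii\|_{2,d_S}^2$ with $C_A(\pi)$ attached only to the statistical pieces. Combining this with the display from the previous paragraph and repeatedly applying $\sqrt{ab}\le(a+b)/2$ collapses all constants into $\frac{80 C_A(\pi)\Vmax^2\log(|\Gcal||\ext|/\delta)}{n}+30\eps_{\Fcal}=\beta_g$, completing the argument.

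Since this is a direct transcription, there is no genuinely hard step; the only point requiring care is the bookkeeping of $C_A(\pi)$. It must be attached to the variance and range terms coming out of \cref{lem:v_bernstein_general} (because the importance weight $\pi/d_A$ inflates the second moment by a factor $C_A(\pi)$), but it must \emph{not} contaminate the approximation error, so that the constant in front of $\eps_{\Fcal}$ remains the Q-type value of $30$. I would therefore verify at the final collation that each $\eps_{\Fcal}$-labelled contribution is free of $C_A(\pi)$ while each $\log(|\Gcal||\ext|/\delta)/n$-labelled contribution carries it.
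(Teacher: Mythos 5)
Your proposal is correct and follows essentially the same route as the paper's proof: apply \cref{lem:v_bernstein_general} with $g_1=\gpii$, $g_2=\hpii$, $h=\gpii$ to relate the empirical loss difference to the population difference, bound the population difference via the difference-of-squares factorization using V-type realizability and \cref{lem:v_diff_between_best_and_empirical}, and collapse constants with AM--GM, with $C_A(\pi)$ attached only to the statistical terms. Your bookkeeping remark about keeping $C_A(\pi)$ off the $\eps_{\Fcal}$ terms matches exactly how the paper arrives at $\beta_g$.
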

\begin{proof}
Applying \cref{lem:v_bernstein_general} and \cref{lem:v_diff_between_best_and_empirical}, we obtain
\begin{align}
&~ \bigg| \left\|\gpii - \tpii \gpii  \right\|_{2,d_S}^2 - \left\|\hpii - \tpii \gpii  \right\|_{2,d_S}^2 - \nonumber
\\
&~ \frac{1}{n} \sum_{(s,\bfa,\bfr,s') \in \Dcal} \frac{\pi(\bfa|s)}{d_A(\bfa|s)}\left(\gpii(s) - \bfr_i - \gamma \gpii(s') \right)^2 + \frac{1}{n} \sum_{(s,\bfa,\bfr,s') \in \Dcal} \frac{\pi(\bfa|s)}{d_A(\bfa|s)} \left(\hpii(s)  - \bfr_i - \gamma \gpii(s') \right)^2 \bigg| \nonumber
\\
\label{eq:v_fpi_g_expected_empirical}
\leq &~ 4 \Vmax \sqrt{C_A(\pi) \frac{ \log \frac{|\Gcal||\ext|}{\delta}}{n} \eps_{\Fcal}} + \frac{13 C_A(\pi) \Vmax^2 \log \frac{|\Gcal||\ext|}{\delta}}{n}. 
\end{align}
Similar to \cref{lem:best_approximation_empirical_error}, we bound $\|\gpii - \tpii \gpii  \|_{2,d_S}^2 - \|\hpii - \tpii \gpii  \|_{2,d_S}^2$ as follows,
\begin{align}
&~ \left\|\gpii - \tpii \gpii  \right\|_{2,d_S}^2 - \left\|\hpii - \tpii \gpii  \right\|_{2,d_S}^2 \nonumber
\\
\leq &~ \left( \left\|\gpii - \tpii \gpii  \right\|_{2,d_S} + \left\|\hpii - \tpii \gpii  \right\|_{2,d_S}\right) \left| \left\|\gpii - \tpii \gpii  \right\|_{2,d_S} - \left\|\hpii - \tpii \gpii  \right\|_{2,d_S} \right| \nonumber
\\
\leq &~ 36 \Vmax \sqrt{C_A(\pi) \frac{ \log \frac{|\Gcal||\ext|}{\delta}}{n} \eps_{\Fcal}} + 36 \Vmax^2  \frac{C_A(\pi) \log \frac{|\Gcal||\ext|}{\delta}}{n} + 8 \eps_{\Fcal}. \tag{By \cref{lem:v_diff_between_best_and_empirical}}\label{eq: v_diff_fpi_tcalfpi_g_tcalfpi}
\end{align}
Combining this with \cref{eq:v_fpi_g_expected_empirical}, we get
\begin{align*}
&~ \frac{1}{n} \sum_{(s,\bfa,\bfr,s') \in \Dcal} \frac{\pi(\bfa|s)}{d_A(\bfa|s)}\left(\gpii(s) - \bfr_i - \gamma \gpii(s') \right)^2 - \frac{1}{n} \sum_{(s,\bfa,\bfr,s') \in \Dcal} \frac{\pi(\bfa|s)}{d_A(\bfa|s)} \left(\hpii(s)  - \bfr_i - \gamma \gpii(s') \right)^2
\\
\leq &~ \left\|\gpii - \tpii \gpii  \right\|_{2,d_S}^2 - \left\|\hpii - \tpii \gpii  \right\|_{2,d_S}^2+4 \Vmax \sqrt{C_A(\pi) \frac{ \log \frac{|\Gcal||\ext|}{\delta}}{n} \eps_{\Fcal}} + \frac{13 C_A(\pi) \Vmax^2 \log \frac{|\Gcal||\ext|}{\delta}}{n} 
\\
\leq &~ \frac{80 C_A(\pi) \Vmax^2 \log \frac{|\Gcal||\ext|}{\delta}}{n} + 30 \eps_{\Fcal}. \tag{By AM-GM inequality}
\end{align*}
\end{proof}

We then prove that $g_i^{\pi,\max}(s_0)$ and $g_i^{\pi,\min}(s_0)$ are the upper bound and the lower bound on the value function $V_i^\pi(s_0)$ respectively.
\begin{restatable}{lemma}{minmax}\label{lem:v_minmax_error}
Under the success event of \cref{lem:v_bernstein_general}, for any player $i \in[m]$ and any $\pi \in \exti$, the following two inequalities hold
\begin{align*}
    g_i^{\pi,\max}(s_0) &\geq V_i^{\pi}(s_0) - \epsone \\
    g_i^{\pi,\min}(s_0) &\leq V_i^{\pi}(s_0) + \epsone.
\end{align*}
\end{restatable}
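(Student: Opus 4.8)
The plan is to mirror the Q-type argument in \cref{lem:minmax_error}, replacing the joint-action objects by their state-only counterparts. Two ingredients suffice: (i) the best state-value approximation $\gpii$ (defined in \cref{eq:v_def_fpii}) lies in the version space $\Gcal_i^{\pi,\beta_g}$ on the high-probability event, and (ii) $\gpii$ evaluates the initial-state value accurately, i.e.\ $|\gpii(s_0)-V_i^\pi(s_0)|\le\epsone$. Given these, the conclusion is immediate: since $g_i^{\pi,\min}(s_0)=\min_{g\in\Gcal_i^{\pi,\beta_g}}g(s_0)$ and $\gpii\in\Gcal_i^{\pi,\beta_g}$, we get $g_i^{\pi,\min}(s_0)\le\gpii(s_0)\le V_i^\pi(s_0)+\epsone$, and symmetrically $g_i^{\pi,\max}(s_0)\ge\gpii(s_0)\ge V_i^\pi(s_0)-\epsone$.

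Ingredient (i) is exactly \cref{lem:v_best_approximation_empirical_error}, which shows $\Ecal_i(\gpii,\pi;\Dcal)\le\beta_g$ on the success event of \cref{lem:v_bernstein_general}; hence $\gpii\in\Gcal_i^{\pi,\beta_g}$ by the definition of the version space. I would simply cite it.

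For ingredient (ii) I would establish the state-occupancy form of the performance-difference identity used in the Q-type proof: for any $g\in\Gcal_i$,
\begin{align*}
g(s_0)-V_i^\pi(s_0)=\frac{1}{1-\gamma}\,\E_{s\sim d^\pi}\!\left[g(s)-(\Tcal_i^{\pi} g)(s)\right],
\end{align*}
where $d^\pi$ now denotes the discounted \emph{state} occupancy and $(\Tcal_i^{\pi} g)(s)=\E_{\bfa\sim\pi(\cdot|s),\,s'\sim P(\cdot|s,\bfa)}[r_i(s,\bfa)+\gamma g(s')]$ is the V-type policy Bellman operator. This is the standard telescoping argument: writing $g-V_i^\pi=(g-\Tcal_i^{\pi}g)+\gamma P^\pi(g-V_i^\pi)$ using that $V_i^\pi$ is the fixed point of $\Tcal_i^{\pi}$, and unrolling the geometric series in $\gamma$, the transition terms collapse into $d^\pi$. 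Applying it to $g=\gpii$, then Jensen's inequality, then the realizability guarantee (\cref{asm:realizability}, whose admissible distributions include the state occupancy $d^\pi$ of every $\pi\in\exti$) gives
\begin{align*}
|\gpii(s_0)-V_i^\pi(s_0)|\le\frac{\|\gpii-\Tcal_i^{\pi}\gpii\|_{2,d^\pi}}{1-\gamma}\le\frac{\sqrt{\epsf}}{1-\gamma}=\epsone,
\end{align*}
which is ingredient (ii).

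I expect the only non-routine point to be bookkeeping: ensuring the evaluation identity and the realizability bound are invoked over state marginals $d^\pi(s)$ rather than state-action occupancies, and noting that the action importance weighting used to construct the version space (and hence to prove \cref{lem:v_best_approximation_empirical_error}) plays \emph{no} role in ingredient (ii)---the evaluation accuracy of $\gpii$ is a purely population statement about the realizable approximation and is independent of the empirical estimator. Everything else is a near-verbatim transcription of the Q-type proof of \cref{lem:minmax_error}.
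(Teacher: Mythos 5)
Your proposal is correct and matches the paper's proof essentially verbatim: the paper likewise combines \cref{lem:v_best_approximation_empirical_error} (to place $\gpii$ in the version space) with the V-type evaluation identity of \cref{lem:v_evaluation_error}, Jensen's inequality, and \cref{asm:realizability} to get $|\gpii(s_0)-V_i^\pi(s_0)|\le\epsone$, then concludes by the min/max definition of $g_i^{\pi,\min}$ and $g_i^{\pi,\max}$. Your telescoping derivation of the evaluation identity is exactly how the paper proves \cref{lem:v_evaluation_error}, and your observation that the importance weighting is irrelevant to this population-level step is also consistent with the paper.
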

\begin{proof}
Let $\gpii$ be defined as in \cref{eq:v_def_fpii}, by invoking \cref{lem:v_evaluation_error}, we get
\begin{align*}
|\gpii(s_0)-V_i^\pi(s_0)| &\le \frac{\E_{s,\bfa \sim d^\pi,s' \sim P(\cdot|s,\bfa)}\left[g(s)-r_i(s,\bfa)-\gamma g(s')\right]}{1-\gamma} \\
&\le \frac{\|g-\tpii g\|_{2,d^\pi}}{1-\gamma} \le \epsone.
\end{align*}
By \cref{lem:v_best_approximation_empirical_error}, we know that $\gpii \in \Gcal^{\pi,\beta_g}_i$. Then, we obtain
\begin{align*}
    g_i^{\pi,\max}(s_0) \ge \gpii(s_0) \ge V_i^\pi(s_0) - \frac{\sqrt{\epsf}}{1-\gamma}.
\end{align*}
The case for $g_i^{\pi,\min}$ is similar.
\end{proof}
We now show that $\Ecal_i(g_i,\pi;\Dcal)$ could effectively estimate $\|g_i - \tpii g_i\|_{2,d_S}^2$.
\begin{lemma}
\label{lem:v_bellman_error}
Under the success event of \cref{lem:v_bernstein_general}, for any player $i \in [m]$ and any $\pi \in \exti$, given $\eps>0$, if $g_i \in \Gcal_i$ satisfies that $\Ecal_i(g_i,\pi;\Dcal) \leq \eps$, we have
\begin{align*}
\| g_i - \tpii g_i\|_{2,d_S} \leq 8\Vmax \sqrt{C_A(\pi)\frac{\log \frac{|\Gcal||\ext|}{\delta}}{n}} + 4\sqrt{\eps_{\Fcal,\Fcal}} + \sqrt{\eps}.
\end{align*}
\end{lemma}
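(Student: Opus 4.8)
The plan is to mirror the Q-type argument of \cref{lem:bellman_error} essentially verbatim, tracking the two changes introduced by the V-type construction: all population norms are now taken under the state marginal $d_S$ rather than $d_D$, and the empirical losses carry the importance weight $\pi(\bfa|s)/d_A(\bfa|s)$, which is why every sampling-error term inherits a multiplicative $C_A(\pi)$ through \cref{lem:v_bernstein_general} (whose variance bound is inflated by exactly this factor). Throughout I would invoke \cref{lem:v_bernstein_general} in place of \cref{lem:bernstein_general} and \cref{lem:v_diff_between_best_and_empirical} in place of \cref{lem:diff_between_best_and_empirical}. A convenient simplification is that $\tpii g_i$ and its candidate fits depend only on $s$, so their $d_D$- and $d_S$-norms coincide and \cref{asm:completeness} may be used directly.

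First I would introduce two auxiliary functions for the given $g_i$: the population best fit $h^\dagger \coloneqq \argmin_{h \in \Gcal_i}\|h - \tpii g_i\|_{2,d_S}^2$ and the importance-weighted empirical regressor $\widehat h \coloneqq \argmin_{h \in \Gcal_i}\Lcal_i(h, g_i, \pi; \Dcal)$. The first step is to bound $\|\widehat h - \tpii g_i\|_{2,d_S}$, the V-type analog of \cref{eq:gpii_2_norm}. Applying \cref{lem:v_bernstein_general} with $(g_1,g_2,h)=(\widehat h, h^\dagger, g_i)$, using the optimality of $\widehat h$ to discard the empirical loss difference, bounding $\|h^\dagger - \tpii g_i\|_{2,d_S}^2 \le \eps_{\Fcal,\Fcal}$ by completeness, and controlling $\|\widehat h - h^\dagger\|_{2,d_S}$ by the triangle inequality, I obtain a quadratic inequality in $\|\widehat h - \tpii g_i\|_{2,d_S}$ whose solution gives
\begin{align*}
\|\widehat h - \tpii g_i\|_{2,d_S} \le 3\Vmax\sqrt{C_A(\pi)\frac{\log\frac{|\Gcal||\ext|}{\delta}}{n}} + \sqrt{2\eps_{\Fcal,\Fcal}}.
\end{align*}

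Next I would apply \cref{lem:v_bernstein_general} a second time with $(g_1,g_2,h)=(g_i,\widehat h, g_i)$. The empirical loss difference appearing there equals $\Lcal_i(g_i,g_i,\pi;\Dcal)-\min_{h}\Lcal_i(h,g_i,\pi;\Dcal)=\Ecal_i(g_i,\pi;\Dcal)\le\eps$, since $\widehat h$ is the empirical minimizer. Substituting the Step-1 bound on $\|\widehat h - \tpii g_i\|_{2,d_S}$ together with the triangle inequality $\|g_i-\widehat h\|_{2,d_S}\le \|g_i-\tpii g_i\|_{2,d_S}+\|\widehat h-\tpii g_i\|_{2,d_S}$ and the concentration terms (again carrying $C_A(\pi)$), I arrive at a quadratic inequality in $\|g_i - \tpii g_i\|_{2,d_S}$; solving it and repeatedly applying $\sqrt{a+b}\le\sqrt a+\sqrt b$ and AM-GM collects the constants into the claimed $8\Vmax\sqrt{C_A(\pi)\log(|\Gcal||\ext|/\delta)/n}+4\sqrt{\eps_{\Fcal,\Fcal}}+\sqrt\eps$.

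The only genuinely new bookkeeping relative to \cref{lem:bellman_error} is the propagation of $C_A(\pi)$: I would verify that it multiplies only the $1/\sqrt n$ and $1/n$ sampling-error terms produced by \cref{lem:v_bernstein_general}, and never the population completeness term $\sqrt{\eps_{\Fcal,\Fcal}}$, so that the final bound takes the exact stated form. This is the main (if mild) obstacle; the quadratic-solving and constant-chasing are otherwise identical to the Q-type proof.
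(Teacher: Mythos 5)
Your proposal follows the paper's proof essentially verbatim: the same two applications of \cref{lem:v_bernstein_general} (first to the empirical regressor versus the population best fit of $\tpii g_i$ to get the intermediate bound $3\Vmax\sqrt{C_A(\pi)\log(|\Gcal||\ext|/\delta)/n}+\sqrt{2\eps_{\Fcal,\Fcal}}$, then to $g_i$ versus that regressor using $\Ecal_i(g_i,\pi;\Dcal)\le\eps$), followed by solving the resulting quadratic and AM--GM. If anything your write-up is slightly cleaner, since you explicitly define the empirical minimizer with target $g_i$ rather than reusing the symbol $\hpii$ (which the paper nominally defined with target $\gpii$), and your bookkeeping of where $C_A(\pi)$ does and does not appear matches the paper's final bound.
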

\begin{proof}
Let $\hpii$ be defined as in \cref{eq:v_def_gpii}, let us define 
\begin{align}
\gpiid \coloneqq \argmin_{g'_i \in \Gcal_i} \left\| g'_i - \tpii g_i\right\|_{2,d_S}^2. \nonumber
\end{align}
Similar to \cref{lem:bellman_error}, we first upper bound $\|\hpii - \tpii g_i\|_{2,d_S}$. By invoking \cref{lem:v_bernstein_general}, we obtain, 
\begin{align*}
&~ \bigg| \left\| \hpii - \tpii g_i\right\|_{2,d_S}^2 - \left\| \gpiid - \tpii g_i\right\|_{2,d_S}^2 - \frac{1}{n} \sum_{(s,\bfa,\bfr,s') \in \Dcal} \frac{\pi(\bfa|s)}{d_A(\bfa|s)}\left(\hpii(s)  - \bfr_i - \gamma g_i(s') \right)^2
\\
&~ + \frac{1}{n} \sum_{(s,\bfa,\bfr,s') \in \Dcal} \frac{\pi(\bfa|s)}{d_A(\bfa|s)}\left(\gpiid(s)  - r - \gamma g_i(s') \right)^2\bigg|
\\
\leq &~ 2 \Vmax \|\hpii - \gpiid\|_{2,d_S}\sqrt{C_A(\pi) \frac{ \log \frac{|\Gcal||\ext|}{\delta}}{n}} + \frac{C_A(\pi) \Vmax^2 \log \frac{|\Gcal||\ext|}{\delta}}{n}.
\end{align*}
Rearranging the terms and by similar calculation to \cref{lem:bellman_error}, we have
\begin{align}
&~ \left\|\hpii - \tpii g_i\right\|_{2,d_S}^2 \nonumber
\\
\leq &~ 2 \Vmax \|\hpii - \tpii g_i\|_{2,d_S}\sqrt{C_A(\pi)\frac{ \log \frac{|\Gcal||\ext|}{\delta}}{n}} + \frac{2 C_A(\pi) \Vmax^2 \log \frac{|\Gcal||\ext|}{\delta}}{n} + 2\eps_{\Fcal,\Fcal}\label{eq:v_hpii_gpii}.
\end{align}
By solving \cref{eq:v_hpii_gpii}, we get
\begin{align}\label{eq:v_gpii_2_norm}
 \|\hpii - \tpii g_i\|_{2,d_S} \le  3 \Vmax \sqrt{C_A(\pi)\frac{ \log \frac{|\Gcal||\ext|}{\delta}}{n}} + \sqrt{2\eps_{\Fcal,\Fcal}}.
\end{align}
Then, we invoke \cref{lem:v_bernstein_general} for $\gpii$ and have
\begin{align*}
&~ \bigg| \| g_i - \tpii g_i\|_{2,d_S}^2 - \left\| \gpii - \tpii g_i\right\|_{2,d_S}^2
\\
&~ - \frac{1}{n} \sum_{(s,\bfa,\bfr,s') \in \Dcal} \frac{\pi(\bfa|s)}{d_A(\bfa|s)}\left(g_i(s) - \bfr_i - \gamma g_i(s') \right)^2 + \frac{1}{n} \sum_{(s,\bfa,\bfr,s') \in \Dcal} \frac{\pi(\bfa|s)}{d_A(\bfa|s)} \left(\gpii(s)  - \bfr_i - \gamma g_i(s') \right)^2 \bigg|
\\
\leq &~ 2 \Vmax \|g_i - \tpii g_i\|_{2,d_S} \sqrt{C_A(\pi)\frac{ \log \frac{|\Gcal||\ext|}{\delta}}{n}} + 3 \Vmax \sqrt{C_A(\pi) \frac{ \log \frac{|\Gcal||\ext|}{\delta}}{n} \varepsilon_{\Fcal,\Fcal}} + \frac{7 C_A(\pi) \Vmax^2 \log \frac{|\Gcal||\ext|}{\delta}}{n}.
\end{align*}
With similar calculation to \cref{lem:bellman_error}, we arrange the terms and have
\begin{align}
&~ \| g_i - \tpii g_i\|_{2,d_S}^2 \nonumber
\\
= &~ 2 \Vmax \|g_i - \tpii g_i\|_{2,d_S} \sqrt{C_A(\pi) \frac{ \log \frac{|\Gcal||\ext|}{\delta}}{n}} + 12 \Vmax \sqrt{C_A(\pi) \frac{ \log \frac{|\Gcal||\ext|}{\delta}}{n} \varepsilon_{\Fcal,\Fcal}} + \frac{16 C_A(\pi) \Vmax^2 \log \frac{|\Gcal||\ext|}{\delta}}{n} + 2\varepsilon_{\Fcal,\Fcal} + \varepsilon. \label{eq:v_bellman_error_f_mu}
\end{align}
Solving \cref{eq:bellman_error_f_mu} and using AM-GM inequality finishes the proof.
\end{proof}
Now we are ready to prove \cref{thm:v_multi_bound}.
\vmulti*
\begin{proof}
The proof for the first part is the same as \cref{thm:multi_bound}. For the second part, we invoke \cref{lem:v_evaluation_error} for $g_i^{\pi,\min}$ and $g_i^{\pi,\max}$
\begin{align*}
    &~ \gpimax(s_0)-\gpimin(s_0) \\
    = &~ \frac{1}{1 - \gamma} \underbrace{\Ebb_{d} \left[ \left(\gpimax - \tpii \gpimax \right) - \left(\gpimin - \tpii \gpimin \right)\right]}_{\text{(I)}} \\
    &~ + \frac{1}{1 - \gamma} \underbrace{\left(\Ebb_{d_\pi} \left[\Delta \gpii - \gamma P_i^{\pi} \Delta \gpii \right]  - \Ebb_{d} \left[\Delta \gpii - \gamma P_i^{\pi} \Delta \gpii \right] \right)}_{\text{(II)}}, \tag{$\Delta \gpii \coloneqq \gpimax - \gpimin$}
\end{align*}
where $d \in \Delta(\Scal)$ is an arbitrary distribution.
For the term (I), we have
\begin{align*}
\text{(I)} \leq &~ \left| \Ebb_{d} \left[ \left(\gpimax - \tpii \gpimax \right)\right] \right| + \left| \Ebb_{d} \left[ \left(\gpimin - \tpii \gpimin \right)\right] \right|
\\
\leq &~ \|\gpimax - \tpii \gpimax\|_{2,d} + \|\gpimin - \tpii \gpimin\|_{2,d} \tag{By Jensen's inequality}\\
\leq &~ \sqrt{\Cscr(d;d_S,\Gcal_i,\pi)}\left( \|\gpimax - \tpii \gpimax\|_{2,d_S} + \|\gpimin - \tpii \gpimin\|_{2,d_S}\right).
\end{align*}
Recall that $\beta_g=\frac{80 C_A(\pi) \Vmax^2\log\frac{|\Gcal||\ext|}{\delta}}{n}+30\epsf$. We invoke \cref{lem:bellman_error} and obtain with probability at least $1-\delta$
\begin{align}
\text{(I)} \leq \sqrt{\Cscr(d;d_S,\Gcal_i,\pi)}\order\left(\Vmax\sqrt{C_A(\pi) \frac{\log \frac{|\Gcal||\ext|}{\delta}}{n}}+\sqrt{\epsf+\epsff}\right). \label{eq:v_term_one}
\end{align}
For term (II), we have
\begin{align}
\text{(II)} \leq &~ \sum_{s \in \Scal} (d_{\pi}\setminus d)(s) \left[\Delta \gpii(s) - \gamma(P_i^{\pi}\Delta \gpii)(s)\right]
\nonumber \\
&~ + \sum_{(s) \in \Scal} \mathbb{I}(d(s) > d_{\pi}(s)) \left[d(s) - d_{\pi}(s)\right] \left|\Delta \gpii(s) - \gamma(P_i^{\pi}\Delta \gpii)(s)\right|
\nonumber \\
\leq &~ \sum_{s \in \Scal} (d_{\pi}\setminus d)(s) \left[\Delta \gpii(s) - \gamma(P_i^{\pi}\Delta \gpii)(s)\right] \nonumber \\
&~ + \Ebb_{d} \left[ \left|\gpimax - \tpii \gpimax \right| + \left|\gpimin - \tpii \gpimin \right|\right]
\nonumber \\
\leq &~\sum_{s \in  \Scal} (d_{\pi}\setminus d)(s) \left[\Delta \gpii(s) - \gamma(P_i^{\pi}\Delta \gpii)(s)\right] \nonumber \\
&+\sqrt{\Cscr(d;d_S,\Gcal_i,\pi)}\order\left(\Vmax\sqrt{C_A(\pi)\frac{\log \frac{|\Gcal||\ext|}{\delta}}{n}}+\sqrt{\epsf+\epsff}\right).\label{eq:v_term_two}
\end{align}
The last step is from the analysis of term (I). Combining \cref{eq:v_term_one} and \cref{eq:v_term_two}, we get
\begin{align*}
    \gpimax(s_0)-\gpimin(s_0) &\leq \min_d\frac{1}{1-\gamma}\sqrt{\Cscr(d;d_S,\Gcal_i,\pi)} \order\left(\Vmax\sqrt{C_A(\pi)\frac{\log \frac{|\Gcal||\ext|}{\delta}}{n}}+\sqrt{\epsf+\epsff}\right) \\
    &+\frac{1}{1-\gamma}\sum_{s \in \Scal}(d_{\pi}\setminus d)(s) \left[\Delta \gpii(s) - \gamma(P_i^{\pi}\Delta \gpii)(s)\right].
\end{align*}
This completes the proof.
\end{proof}

\section{Auxiliary Lemmas}

\begin{lemma}[Q-function Evaluation Error Lemma]\label{lem:evaluation_error}
For any player $i \in[m]$ and any $\pi \in \exti$, and any $f \in \mathbb{R}^{\Scal \times \Acal}$
\begin{align*}
    f(s_0,\pi)-V_i^{\pi}(s_0)=\frac{\E_{s,\bfa \sim d^\pi,s' \sim P(\cdot|s,\bfa)}\left[f(s,\bfa)-r_i(s,\bfa)-\gamma f(s',\pi)\right]}{1-\gamma}
\end{align*}
\end{lemma}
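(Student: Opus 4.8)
The plan is to prove this by the standard telescoping / performance-difference argument: roll out a trajectory under $\pi$ from $s_0$, sum the per-step Bellman residual with discounting, and recognize that the sum both telescopes and reproduces the occupancy expectation. First I would introduce the shorthand $\delta(s,\bfa) \coloneqq f(s,\bfa) - r_i(s,\bfa) - \gamma\, \E_{s' \sim P(\cdot|s,\bfa)}[f(s',\pi)] = f(s,\bfa) - (\Tcal_i^\pi f)(s,\bfa)$ for the Bellman residual of $f$ under $\pi$. Since $\delta$ depends only on $(s,\bfa)$ once the expectation over $s'$ has been taken, the numerator on the right-hand side of the claim is exactly $\E_{(s,\bfa) \sim d^\pi}[\delta(s,\bfa)]$, so it suffices to establish $f(s_0,\pi) - V_i^\pi(s_0) = \tfrac{1}{1-\gamma}\E_{(s,\bfa) \sim d^\pi}[\delta(s,\bfa)]$.

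The core computation is to evaluate the discounted sum of residuals along a $\pi$-trajectory, $\E_\pi[\sum_{t=0}^\infty \gamma^t \delta(s_t,\bfa_t) \mid s_0]$, by expanding each $\delta(s_t,\bfa_t)$ into its three constituent terms. The key observation is that under the trajectory law $\E_\pi[f(s_{t+1},\pi)\mid s_t,\bfa_t] = \E_\pi[f(s_{t+1},\bfa_{t+1})\mid s_t,\bfa_t]$, because $\bfa_{t+1} \sim \pi(\cdot|s_{t+1})$; this turns the $\gamma\,\E_{s'}[f(s',\pi)]$ term into $\gamma f(s_{t+1},\bfa_{t+1})$ in expectation. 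The two $f$-terms then telescope, $\sum_{t\ge0} \gamma^t f(s_t,\bfa_t) - \sum_{t\ge0} \gamma^{t+1} f(s_{t+1},\bfa_{t+1}) = f(s_0,\bfa_0)$, whose expectation is $f(s_0,\pi)$, while the reward terms collapse to $V_i^\pi(s_0)$ by its very definition. Hence the discounted residual sum equals $f(s_0,\pi) - V_i^\pi(s_0)$.

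Finally I would convert the trajectory sum into an occupancy expectation. Using $d^\pi(s,\bfa) = (1-\gamma)\E_\pi[\sum_{t\ge0} \gamma^t \mathbb{I}[s_t=s,\bfa_t=\bfa]]$ and exchanging the (absolutely convergent) sum with the expectation, I get $\E_\pi[\sum_{t\ge0} \gamma^t \delta(s_t,\bfa_t)] = \sum_{s,\bfa} \delta(s,\bfa)\,\tfrac{d^\pi(s,\bfa)}{1-\gamma} = \tfrac{1}{1-\gamma}\E_{(s,\bfa)\sim d^\pi}[\delta(s,\bfa)]$. Combining this with the previous display yields the claimed identity.

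There is no substantive obstacle here, as this is a routine identity underlying much of the offline-RL analysis. The only points requiring mild care are justifying the interchange of the infinite sum with the expectation (which holds because $f \in [0,\Vmax]$ and $r_i \in [0,\rmax]$ make each term bounded and the geometric discount $\gamma < 1$ ensures absolute convergence) and correctly applying the tower property to replace the next-state value $\E_{s'}[f(s',\pi)]$ by $f(s_{t+1},\bfa_{t+1})$ under the trajectory measure before telescoping; everything else is bookkeeping.
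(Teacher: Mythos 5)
Your proof is correct and follows essentially the same telescoping argument as the paper: both exploit the definition of $d^\pi$ as a discounted sum over time steps and cancel the $f$-terms via an index shift, leaving $f(s_0,\pi)$ minus the discounted reward sum $V_i^\pi(s_0)$. The only cosmetic difference is that you telescope along the trajectory and then convert to the occupancy expectation, whereas the paper performs the index shift directly on the occupancy sums; the underlying computation is identical.
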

\begin{proof}
We observe that
\begin{align*}
    &~ \sum_{s,\bfa}\sum_{t=0}^{\infty}\gamma^{t+1}\Pr(s_t=s,\bfa_t=\bfa|s_0,\pi)\sum_{s'}\Pr(s_{t+1}=s'|s_t=s,\bfa_t=\bfa)f(s',\pi) \\
    = &~ \sum_{s,\bfa}\sum_{t=1}^{\infty}\gamma^t \Pr(s_t=s,\bfa_t=\bfa|s_0,\pi)f(s,\bfa)
\end{align*}
Then, we have
\begin{align*}
&~ \frac{\E_{s,\bfa \sim d^\pi,s' \sim P(\cdot|s,\bfa)}\left[f(s,\bfa)-\gamma f(s',\pi)\right]}{1-\gamma} \\
= &~ \sum_{s,\bfa}\sum_{t=0}^{\infty} \gamma^t \Pr(s_t=s,\bfa_t=\bfa|s_0,\pi)f(s,\bfa)-\sum_{s,\bfa}\sum_{t=1}^{\infty}\gamma^t \Pr(s_t=s,\bfa_t=\bfa|s_0,\pi)f(s,\bfa) \\
= &~ \sum_{\bfa}\Pr(\bfa_0=\bfa|s_0,\pi)f(s_0,\bfa)=f(s_0,\pi).
\end{align*}
Since $V_i^{\pi}(s_0)=\frac{\E_{d^\pi}[r_i(s,\bfa)]}{1-\gamma}$, rearranging the terms finishes the proof.
\end{proof}

\begin{lemma}[Value Function Evaluation Error Lemma]\label{lem:v_evaluation_error}
For any player $i \in[m]$ and any $\pi \in \exti$, and any $f \in \mathbb{R}^{\Scal}$
\begin{align*}
    f(s_0)-V_i^{\pi}(s_0)=\frac{\E_{s,\bfa \sim d^\pi,s' \sim P(\cdot|s,\bfa)}\left[f(s)-r_i(s,\bfa)-\gamma f(s')\right]}{1-\gamma}
\end{align*}
\end{lemma}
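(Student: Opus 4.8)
The plan is to imitate the proof of the Q-function evaluation error lemma (\cref{lem:evaluation_error}) almost verbatim, the single new ingredient being that here $f$ depends on the state alone, so the action variable is marginalized out at each step. First I would unfold the discounted occupancy as a time-weighted sum, using $\E_{s,\bfa \sim d^\pi}[h(s,\bfa)] = (1-\gamma)\sum_{t=0}^\infty \gamma^t \sum_{s,\bfa}\Pr(s_t=s,\bfa_t=\bfa\mid s_0,\pi)\,h(s,\bfa)$, and thereby write $\E_{s,\bfa \sim d^\pi,\,s' \sim P(\cdot|s,\bfa)}[f(s)-\gamma f(s')]/(1-\gamma)$ as a difference of two infinite series in the time index $t$.

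The key step is to handle the look-ahead term $\gamma f(s')$. Summing over $(s,\bfa)$ the product $\Pr(s_t=s,\bfa_t=\bfa\mid s_0,\pi)\,P(s'\mid s,\bfa)$ collapses, by the Markov property, to $\Pr(s_{t+1}=s'\mid s_0,\pi)$; after relabeling $s'\to s$ and shifting the summation index, the series for the $\gamma f(s')$ term becomes $\sum_s\sum_{t\ge 1}\gamma^t\Pr(s_t=s\mid s_0,\pi)f(s)$. Meanwhile the series for the $f(s)$ term, after marginalizing over $\bfa$, is $\sum_s\sum_{t\ge 0}\gamma^t\Pr(s_t=s\mid s_0,\pi)f(s)$. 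Subtracting, every contribution with $t\ge 1$ telescopes away and only the $t=0$ term $\Pr(s_0=s\mid s_0,\pi)f(s)=f(s_0)$ survives, using that $s_0$ is the fixed (deterministic) initial state. This yields $\E_{s,\bfa \sim d^\pi,\,s' \sim P(\cdot|s,\bfa)}[f(s)-\gamma f(s')]/(1-\gamma)=f(s_0)$.

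Finally I would invoke the identity $V_i^\pi(s_0)=\E_{d^\pi}[r_i(s,\bfa)]/(1-\gamma)$ and subtract it from both sides, folding the $-r_i(s,\bfa)$ into the same expectation to obtain the claimed formula. There is no genuine obstacle here: the only point requiring care is the index shift and action-marginalization in the look-ahead term, which is precisely where the state-only (rather than state-action) structure of $f$ enters; every other step is a direct transcription of \cref{lem:evaluation_error}.
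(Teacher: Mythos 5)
Your proposal is correct and follows essentially the same route as the paper's proof: expand the occupancy measure as a time-weighted sum, shift the index on the $\gamma f(s')$ term via the Markov property so the series telescopes down to the $t=0$ contribution $f(s_0)$, and then subtract the identity $V_i^\pi(s_0)=\E_{d^\pi}[r_i(s,\bfa)]/(1-\gamma)$. The only cosmetic difference is that the paper keeps the joint sum over $(s,\bfa)$ with $f(s)$ inside rather than marginalizing out $\bfa$ before telescoping, which changes nothing substantive.
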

\begin{proof}
We observe that
\begin{align*}
    &~ \sum_{s,\bfa}\sum_{t=0}^{\infty}\gamma^{t+1}\Pr(s_t=s,\bfa_t=\bfa|s_0,\pi)\sum_{s'}\Pr(s_{t+1}=s'|s_t=s,\bfa_t=\bfa)f(s') \\
    = &~ \sum_{s,\bfa}\sum_{t=1}^{\infty}\gamma^t \Pr(s_t=s,\bfa_t=\bfa|s_0,\pi)f(s)
\end{align*}
Then, we have
\begin{align*}
&~ \frac{\E_{s \sim d^\pi,s' \sim P(\cdot|s,\bfa)}\left[f(s)-\gamma f(s')\right]}{1-\gamma} \\
= &~ \sum_{s,\bfa}\sum_{t=0}^{\infty} \gamma^t \Pr(s_t=s,\bfa_t=\bfa|s_0,\pi)f(s)-\sum_{s,\bfa}\sum_{t=1}^{\infty}\gamma^t \Pr(s_t=s,\bfa_t=\bfa|s_0,\pi)f(s) \\
= &~ \sum_{\bfa}\Pr(\bfa_0=\bfa|s_0,\pi)f(s_0)=f(s_0).
\end{align*}
Since $V_i^{\pi}(s_0)=\frac{\E_{d^\pi}[r_i(s,\bfa)]}{1-\gamma}$, rearranging the terms finishes the proof.
\end{proof}

\end{document}